\newtheorem*{definition*}{Definition}
\DeclareMathOperator*{\argmax}{arg\,max}
\DeclareMathOperator*{\argmin}{arg\,min}
\definecolor{expert}{HTML}{008000}
\definecolor{error}{HTML}{f96565}
\newcommand{\tikzAngleOfLine}{\tikz@AngleOfLine}
\def\tikz@AngleOfLine(#1)(#2)#3{%
\pgfmathanglebetweenpoints{%
\pgfpointanchor{#1}{center}}{%
\pgfpointanchor{#2}{center}}
\pgfmathsetmacro{#3}{\pgfmathresult}%
}
\declaretheoremstyle[
    headfont=\normalfont\bfseries, 
    bodyfont = \normalfont\itshape]{mystyle}
\newcommand*{\BraceAmplitude}{0.4em}%
\newcommand*{\VerticalOffset}{0.5ex}%
\newcommand*{\HorizontalOffset}{0.0em}%
\newcommand*{\blocktextwid}{3.0cm}%
\NewDocumentCommand{\InsertLeftBrace}{%
	O{} 
	O{\HorizontalOffset,\VerticalOffset} 
	O{\blocktextwid} 
	m   
	m   
	m   
}{%
	\begin{tikzpicture}[overlay,remember picture]
	\coordinate (Brace Top)    at ($(#4.north) + (#2)$);
	\coordinate (Brace Bottom) at ($(#5.south) + (#2)$);
	\draw [decoration={brace, amplitude=\BraceAmplitude}, decorate, thick, draw=black, #1]
	(Brace Bottom) -- (Brace Top) 
	node [pos=0.5, anchor=east, align=left, text width=#3, color=black, xshift=\BraceAmplitude] {#6};
	\end{tikzpicture}%
}%
\NewDocumentCommand{\InsertRightBrace}{%
	O{} 
	O{\HorizontalOffset,\VerticalOffset} 
	O{\blocktextwid} 
	m   
	m   
	m   
}{%
	\begin{tikzpicture}[overlay,remember picture]
	\coordinate (Brace Top)    at ($(#4.north) + (#2)$);
	\coordinate (Brace Bottom) at ($(#5.south) + (#2)$);
	\draw [decoration={brace, amplitude=\BraceAmplitude}, decorate, thick, draw=black, #1]
	(Brace Top) -- (Brace Bottom) 
	node [pos=0.5, anchor=west, align=left, text width=#3, color=black, xshift=\BraceAmplitude] {#6};
	\end{tikzpicture}%
}%
\NewDocumentCommand{\InsertTopBrace}{%
	O{} 
	O{\HorizontalOffset,\VerticalOffset} 
	O{\blocktextwid} 
	m   
	m   
	m   
}{%
	\begin{tikzpicture}[overlay,remember picture]
	\coordinate (Brace Top)    at ($(#4.west) + (#2)$);
	\coordinate (Brace Bottom) at ($(#5.east) + (#2)$);
	\draw [decoration={brace, amplitude=\BraceAmplitude}, decorate, thick, draw=black, #1]
	(Brace Top) -- (Brace Bottom) 
	node [pos=0.5, anchor=south, align=left, text width=#3, color=black, xshift=\BraceAmplitude] {#6};
	\end{tikzpicture}%
}%
\definecolor{cof}{RGB}{219,144,71}
\definecolor{pur}{RGB}{186,146,162}
\definecolor{greeo}{RGB}{91,173,69}
\definecolor{greet}{RGB}{52,111,72}
\theoremstyle{plain}
\newtheorem{theorem}{Theorem}
\newtheorem{lemma}{Lemma}
\newtheorem{remark}{Remark}
\newtheorem{corollary}{Corollary}
\def\1{\mathbbm{1}}
\newenvironment{keywords}
{\bgroup\leftskip 20pt\rightskip 20pt \small\noindent{\bfseries
Keywords:} \ignorespaces}%
{\par\egroup\vskip 0.25ex}
\newlength\aftertitskip     \newlength\beforetitskip
\newlength\interauthorskip  \newlength\aftermaketitskip
\newcommand{\stepa}[1]{\overset{\rm (a)}{#1}}
\newcommand{\stepb}[1]{\overset{\rm (b)}{#1}}
\newcommand{\stepc}[1]{\overset{\rm (c)}{#1}}
\newcommand{\stepd}[1]{\overset{\rm (d)}{#1}}
\newcommand{\ceil}[1]{{\left\lceil {#1} \right \rceil}}
\definecolor{myblue}{rgb}{.8, .8, 1}
\definecolor{mathblue}{rgb}{0.2472, 0.24, 0.6} 
\definecolor{mathred}{rgb}{0.6, 0.24, 0.442893}
\definecolor{mathyellow}{rgb}{0.6, 0.547014, 0.24}
\crefname{lemma}{Lemma}{Lemmas}
\Crefname{lemma}{Lemma}{Lemmas}
\crefname{thm}{Theorem}{Theorems}
\Crefname{thm}{Theorem}{Theorems}
\Crefname{assumption}{Assumption}{Assumptions}
\Crefname{inftheorem}{Informal Theorem}{Informal Theorems}
\newcommand{\Acal}{\mathcal{A}}
\newcommand {\Fcal} {{\mathcal{F}}}
\newcommand{\indic}{\mathbbm{1}}
\newcommand{\br}{\bar{r}}
\newcommand {\E} {{\mathbb{E}}}
\newcommand {\R} {{\mathbb{R}}}
\newcommand{\htheta}{{\widehat{\theta}}}
\newcommand{\reg}{\mathsf{R}}
\newcommand{\Nout}{N_{\mathrm{out}}}
\newcommand{\Aact}{\mathcal{A}_{\mathrm{act}}}
\newcommand{\Acon}{\mathcal{A}_{\mathrm{con}}}
\newcommand\numberthis{\addtocounter{equation}
{1}\tag{\theequation}}
\DeclarePairedDelimiter{\abs}{\lvert}{\rvert}
\DeclarePairedDelimiter{\parr}{(}{)}
\DeclarePairedDelimiter{\parq}{[}{]}
\DeclarePairedDelimiter{\bra}{\lbrace}{\rbrace}
\crefname{lemma}{Lemma}{Lemmas}
\Crefname{lemma}{Lemma}{Lemmas}
\crefname{thm}{Theorem}{Theorems}
\Crefname{thm}{Theorem}{Theorems}
\crefname{remark}{Remark}{Remarks}
\Crefname{remark}{Remark}{Remarks}
\Crefname{algorithm}{Algorithm}{Algorithms}
\crefname{algorithm}{Algorithm}{Algorithms}
\Crefname{section}{Section}{Sections}
\crefname{section}{Section}{Sections}
\Crefname{assumption}{Assumption}{Assumptions}
\Crefname{inftheorem}{Informal Theorem}{Informal Theorems}
\numberwithin{theorem}{section}
\begin{document}

\title{Optimal Arm Elimination Algorithms for Combinatorial Bandits}

\author{Yuxiao Wen, Yanjun Han, Zhengyuan Zhou\thanks{Yuxiao Wen is with the Courant Institute of Mathematical Sciences, New York University, email: \url{yuxiaowen@nyu.edu}. Yanjun Han is with the Courant Institute of Mathematical Sciences and the Center for Data Science, New York University, email: \url{yanjunhan@nyu.edu}. Zhengyuan Zhou is with the Stern School of Business, New York University, and Arena Technologies, email: \url{zz26@stern.nyu.edu}.}}

\maketitle

\begin{abstract}%
Combinatorial bandits extend the classical bandit framework to settings where the learner selects multiple arms in each round, motivated by applications such as online recommendation and assortment optimization. While extensions of upper confidence bound (UCB) algorithms arise naturally in this context, adapting arm elimination methods has proved more challenging. We introduce a novel elimination scheme that partitions arms into three categories (confirmed, active, and eliminated), and incorporates explicit exploration to update these sets. We demonstrate the efficacy of our algorithm in two settings: the combinatorial multi-armed bandit with general graph feedback, and the combinatorial linear contextual bandit. In both cases, our approach achieves near-optimal regret, whereas UCB-based methods can provably fail due to insufficient explicit exploration. Matching lower bounds are also provided.
\end{abstract}

\begin{keywords}%
Combinatorial bandits; arm elimination; regret minimization; minimax rate. %
\end{keywords}

\section{Introduction}\label{sec:intro}
Combinatorial bandits, where the learner picks a subset of actions rather than a single action, have a wide range of applications spanning online recommendations \citep{wang2017efficient,qin2014contextual}, assortment optimization \citep{han2021adversarial}, crowd-sourcing \citep{lin2014combinatorial}, webpage optimization \citep{liu2021map}, online routing \citep{gyorgy2007line,audibert2014regret}, etc. 
A common feedback model in this setting is the \emph{semi-bandit feedback}, in which the learner observes the rewards of the chosen actions only. 
Achieving optimal performance under this limited feedback requires carefully balancing exploration and exploitation: the learner must search for promising new actions while simultaneously committing to actions already known to perform well.

In stochastic bandits, where rewards follow fixed but unknown distributions, two prominent algorithmic frameworks achieve the optimal exploration-exploitation tradeoff: the \emph{upper confidence bound} (UCB) algorithm \citep{auer2002finite} and the \emph{arm elimination} algorithm \citep{even2006action}. Both approaches maintain confidence intervals for the rewards of each action, but proceed differently: UCB selects the action with the largest upper bound, whereas arm elimination retains only the ``active'' actions whose confidence intervals overlap with that of the action with the highest lower bound. Consequently, UCB relies on \emph{implicit} exploration, while arm elimination can perform more \emph{explicit} exploration over the active set. For both multi-armed and linear bandits, these algorithms are known to achieve near-optimal regret \citep{auer2010ucb,abbasi2011improved}. 

For more complex bandit problems, the lack of explicit exploration can make UCB less effective than arm elimination methods. One example is the multi-armed bandit with graph feedback, where exploration must be carefully guided by the graph structure. Another is the linear contextual bandit with a finite set of time-varying contexts, where dependencies across rewards undermine the validity of confidence bounds. In both cases, arm elimination provides a remedy. For bandits with graph feedback, arm elimination explicitly leverages the graph structure to explore the active set of actions \citep{han2024optimal}. For linear contextual bandits, a master algorithm built on top of an arm elimination subroutine can effectively handle the dependence and achieve the optimal regret \citep{chu2011contextual}. 

On the other hand, while the UCB algorithm admits a relatively direct extension to combinatorial bandits \citep{kveton2015tight,combes2015combinatorial}, arm elimination methods in this setting remain largely underexplored. The main difficulty lies in balancing exploration of new arms with exploitation of known good arms within the same round. In this paper, we study two instances of combinatorial bandits:
\begin{enumerate}
\item Combinatorial multi-armed bandits with general graph feedback, where chosen actions can reveal the rewards of other arms according to a given graph structure.
\item Combinatorial linear contextual bandits, where the learner selects a subset of contexts under a linear reward model.
\end{enumerate}
For both problems, existing UCB and arm elimination methods fail to achieve optimal regret, either due to the lack of explicit exploration (UCB) or the challenges posed by the combinatorial structure (arm elimination). To address this, we propose a general arm elimination framework for combinatorial bandits and show that it achieves optimal regret in both settings.

\subsection{Our Contributions}
The main contributions of this paper are as follows:
\begin{enumerate}
    \item We propose a general arm elimination framework for combinatorial bandits that partitions actions into three categories (confirmed, active, and eliminated). At each round, the combinatorial budget is allocated between confirmed and active actions, with explicit exploration directed toward the active set. 
    \item For combinatorial bandits with general graph feedback, we design an arm elimination algorithm based on this framework that achieves near-optimal regret. Specifically, with time horizon $T$ and combinatorial budget $S$, and for feedback graphs with independence number $\alpha$, the algorithm simultaneously attains the optimal worst-case regret $\widetilde{\Theta}(\sqrt{\alpha ST}+S\sqrt{T})$ and the optimal gap-dependent regret $\widetilde{\Theta}(\frac{\alpha+S}{\Delta_*}\log(T))$. 
    \item For combinatorial linear contextual bandits with dimension $d$ and a finite number of contexts, we show that combining our arm elimination method with the master algorithm of \cite{auer2002using} yields the optimal regret $\widetilde{\Theta}(\sqrt{dST})$.
\end{enumerate}

\subsection{Related Work}

\paragraph{Combinatorial Bandits.}
The minimax regret for combinatorial bandits under graph feedback is recently shown to be $\widetilde{\Theta}(S\sqrt{T}+\sqrt{\alpha ST})$ even in the adversarial setting \citep{wen2025adversarial}. In the stochastic regime, \cite{kveton2015tight} provides an instance-dependent regret bound $O\parr*{\frac{KS\log(T)}{\Delta_*}}$ for a UCB-type algorithm and shows that this rate is tight when the feasible decision set is some constrained subset of the set $\binom{[K]}{S}$. Under the unconstrained decision set $\binom{[K]}{S}$, the bound is later improved to $O\parr*{\frac{K\sqrt{S}\log(T)}{\Delta_*}+\frac{KS^3}{\Delta_*^2}}$ by \cite{combes2015combinatorial}. Their result shaves a factor $\sqrt{S}$ but has an extra term that dominates for small $\Delta_*$. Both of their algorithms are described by the UCB in \cref{alg:ucb} with slightly different input parameter $L$. Later, \cite{wang2018thompson} closes the gap in the logarithmic term. They show that a variant of Thompson Sampling achieves regret $O\parr*{\frac{K\log(T)}{\Delta_*} + \frac{S}{\Delta_*^4}}$. However, for small $\Delta_*$ this extra term again dominates and leads to a loose upper bound. Under the full-information feedback when the rewards of all actions are revealed, no instance-dependent bound is shown to the best of our knowledge. 

\paragraph{Combinatorial Linear Contextual Bandits.}
Another line of research concerns the setting where contextual information is available to the learner to aid decision-making. A widely adopted (stochastic) reward model assumes that the expected reward is linear in the observed context, whereas no assumption is imposed on how the context is generated. This formulation finds many industrial applications, such as recommender systems \citep{qin2014contextual} and assortment management \citep{han2021adversarial}. In terms of regret, \cite{qin2014contextual} presents a variant of \texttt{LinUCB} in \cite{li2010contextual} that achieves $\widetilde{O}(d\sqrt{ST})$ regret. In comparison, in the classical bandit setting with $S=1$, the near-optimal regret is known to be $\widetilde{O}(\sqrt{dT})$ and is achieved by an elimination-based algorithm \citep{chu2011contextual} paired with a master algorithm to handle dependence. The best known result for UCB-type algorithms is $\widetilde{O}(d\sqrt{T})$ by \cite{abbasi2011improved}.

\paragraph{Elimination-based Bandit Algorithms.}
The idea of arm elimination in bandit algorithms originates from \cite{even2006action}. Compared to the arguably more natural UCB-type algorithms, elimination has a demonstrated value in a range of bandit problems, including MAB with graph feedback and contextual MAB. In the former, elimination allows the learner to force exploration as the algorithm runs to achieve the optimal trade-off under general graphs \citep{han2024optimal}. In the linear contextual case, \cite{auer2002using} develops a hierarchical elimination scheme to address a dependence issue in the reward observations; this scheme has been widely adopted in various settings with contextual information to achieve tight regret \citep{chu2011contextual,han2024optimal,wen2025adversarial}. Nonetheless, despite its power in the bandit problems, it remains unclear how to perform elimination, or whether it is possible at all, under the combinatorial setting where the learner selects and compares to $S>1$ optimal arms. 

\paragraph{Top-$S$ Arm Identification.}
Another relevant line of research is top-$S$ arm identification. In this problem, the learner pulls one arm at a time, and the objective is to identify the $S$ arms with maximal expected rewards, either under a fixed budget or up to a fixed confidence. For best-arm identification where $S=1$, there is rich literature studying both the complexity under a fixed confidence \citep{audibert2010best,garivier2016optimal} and the probability of error under a fixed budget \citep{carpentier2016tight,kato2022best,komiyama2022minimax}. For general $S\ge 1$, \cite{chen2014combinatorial} tackles this problem using UCB under a constrained subset, i.e. the identified $S$ arms must belong to a certain subset. \cite{chen2017nearly,zhou2022approximate} address the unconstrained problem via elimination-based algorithms. 
Nonetheless, this arm identification problem does not face the core exploration-exploitation trade-off \emph{within the same round} in combinatorial bandits. 

\subsection{Notations}
For $n\in\mathbb{N}$, let $[n] = \bra{1,2,\dots,n}$. For a directed graph $G=(V,E)$, let $\Nout(a) = \bra{b\in V: (a,b)\in E}$ denote the out-neighbors of node $a\in V$ and $\Nout(U) = \cup_{a\in U}\Nout(a)$ denote the out-neighbors of a subset $U$. For a vector $x\in\R^d$ a PSD matrix $A\in\R^{d\times d}$, the matrix norm is defined by $\|x\|_A = \sqrt{x^\top Ax}$. We use $\widetilde{O}$ to denote the usual asymptotic meaning of $O$ but suppress less important poly-logarithmic factors.

\section{Combinatorial Bandits with Graph Feedback}\label{sec:graph_feedback}

\subsection{Problem Formulation}\label{sec:graph_feedback_intro}

This section introduces the problem of combinatorial bandits with general graph feedback. At each time $t$ over a horizon of length $T$, the learner selects a \textit{decision} that is a subset of arms $V_t\subseteq [K]$ 
such that $|V_t| = S$ for a fixed $S\ge 1$. There is a known directed feedback graph $G=([K],E)$ over the arms. The learner observes the individual rewards $\bra{r_{t,a}: a\in\Nout(V_t)}$ and receives the total reward $r_{t,V_t} = \sum_{a\in V_t}r_{t,a}$. For the scope of this work, we assume $G$ contains all self-loops, i.e. $a\in\Nout(a)$. We assume $r_{t,a}\in[0,1]$ and for each arm $a$, the rewards $\bra{r_{t,a}}_{t\in[T]}$ are i.i.d. with a time-invariant mean $\mu_a$. 

Without loss of generality (WLOG), we assume the means satisfy $\mu_1 \ge \mu_2 \ge \cdots \ge \mu_K$. For any policy $\pi$, the regret measures the expected loss compared to the hindsight optimal decision $[S]$:
\[
\reg(\pi) = \sum_{t=1}^T\bigg(\sum_{i=1}^S\mu_i - \sum_{a\in V_t}\mu_a\bigg). 
\]

\subsection{An Elimination-based Algorithm}\label{sec:elimination_alg_explained}

We start by giving a high-level intuition of the arm elimination algorithm under the classical setting $S=1$ \citep{even2006action}. In this case, the algorithm maintains an \textit{active set} of ``probably good'' arms $\Aact\subseteq [K]$ and a minimum count $N=\min_{a\in\Aact}n_{t,a}$ for the active arms, where $n_{t,a}$ denotes the number of observations for arm $a$ up to time $t$. It then uniformly explores every arm in $\Aact$ with a small $n_{t,a}$ and update $N$ accordingly. By standard concentration results (see \cref{lem:sto_reward_concentration} below), the algorithm recognizes a uniform confidence width for each $\mu_a$ and eliminates any arm $a$ from the set $\Aact$ that is provably suboptimal based on the confidence widths.

\begin{lemma}\label{lem:sto_reward_concentration}
Fix any $\delta\in(0,1)$. With probability at least $1-\delta$, we have for every arm $a$ at every time $t$,
\[
\abs*{\br_{t,a} - \mu_a} \le  \sqrt{\log\parr*{2KT/\delta}/n_{t,a}} \eqqcolon w(n_{t,a})
\]
where $\br_{t,a}$ is the empirical mean and $n_{t,a}$ the number of observations at time $t$.
\end{lemma}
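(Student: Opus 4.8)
The plan is a textbook concentration-plus-union-bound argument, with the one wrinkle that the count $n_{t,a}$ is itself random and adapted to the history, so Hoeffding's inequality cannot be applied to it directly. First I would fix an arm $a$ and a \emph{deterministic} count $n\in\{1,\dots,T\}$, and let $\br^{(n)}_a$ denote the empirical mean of the first $n$ revealed observations of arm $a$ (in the order in which they are observed). Because the per-round rewards $\{r_{t,a}\}_{t}$ are i.i.d.\ in $[0,1]$ and whether arm $a$ is observed at round $t$ (i.e.\ whether $a\in\Nout(V_t)$) is determined by the history strictly before $r_{t,a}$ is drawn, the summands entering $\br^{(n)}_a$ form a bounded difference sequence with conditional mean $\mu_a$ (equivalently, the revealed values are distributionally a prefix of the i.i.d.\ reward sequence). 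Either way, Hoeffding's / Azuma--Hoeffding's inequality gives, for every $w>0$,
\[
\prob{\abs*{\br^{(n)}_a - \mu_a} > w} \le 2\exp\!\left(-2nw^2\right).
\]

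Next I would substitute $w = w(n) = \sqrt{\log(2KT/\delta)/n}$, so that the right-hand side becomes $2\exp(-2\log(2KT/\delta)) = 2(2KT/\delta)^{-2}$, which is at most $\delta/(KT)$ since $\delta<1\le 2KT$. Then I would take a union bound of this bad event over all $K$ arms and all $T$ admissible values of the count, for a total failure probability at most $KT\cdot \delta/(KT)=\delta$. To transfer the fixed-count statement to the statement of the lemma, observe that at any time $t$ the realized count satisfies $n_{t,a}\in\{1,\dots,T\}$ (the case $n_{t,a}=0$ being vacuous, as $w(0)=+\infty$) and $\br_{t,a} = \br^{(n_{t,a})}_a$; hence on the complement of the union of bad events, $\abs*{\br_{t,a}-\mu_a}\le w(n_{t,a})$ holds simultaneously for every arm $a$ and every time $t$, which is exactly the claim.

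I expect the only genuinely non-routine point to be justifying the Hoeffding-type tail bound for $\br^{(n)}_a$ despite the adaptive, graph-structured observation pattern --- that is, verifying the martingale-difference structure (or, equivalently, the distributional-prefix property) carefully, so that boundedness and the correct conditional mean are in force. Everything else --- the choice of $w(n)$, the resulting per-event probability, and the union bound over $K$ arms and $T$ counts --- is a direct calculation.
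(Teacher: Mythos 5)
Your proof is correct and is exactly the standard Hoeffding-plus-union-bound argument the paper implicitly invokes (the lemma is stated without proof as a standard concentration result). You rightly flag and resolve the only delicate point --- the adaptivity of $n_{t,a}$ --- via the union bound over deterministic counts $n\in\{1,\dots,T\}$ together with the distributional-prefix (or martingale) justification, and the arithmetic with $w(n)=\sqrt{\log(2KT/\delta)/n}$ checks out.
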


However, it is challenging to extend this elimination scheme to $S\ge 2$, as it is unclear how to set the elimination benchmark. When $S=1$, we only need to decide if an arm $a$ is possibly optimal or not, so the algorithm simply eliminates based on the empirically best arm. When $S\ge 2$, an arm $a$ can be worse than $\mu_1$ but still possible to be the $S$-th optimal, rendering this choice impractical. Even if we manage to eliminate arms that are provably worse than the $S$-th optimal arm for any decision $V_t\subseteq \Aact$, there is no guarantee that $\sum_{i=1}^S\mu_i - \sum_{a\in V_t}\mu_a$ will be small, because $\mu_a$ may be close to $\mu_S$ but much worse than $\mu_{S-1}$ or $\mu_1$.

To address the aforementioned challenges, we introduce the \emph{confirmed set} $\Acon$ in our \cref{alg:comb_arm_elim_full}. Suppose the confidence width $w_t$ for every active arm $a\in\Aact$ is uniformly positive at time $t$. The idea of confirmation is to use this width $w_t$ to identify a subset $\Acon\subseteq [S]$. Each arm $i\in\Acon$ has a mean $\mu_i\gg \mu_S+w_t$ that is much larger than the mean of the $S$-th optimal arm. Therefore, the learner must have $\Acon\subseteq V_t$ to avoid incurring an instantaneous regret unbounded by $w_t$. In addition, for every unconfirmed optimal arm $i'\in[S]\backslash\Acon$, it holds that $\mu_{i'}\lesssim \mu_S + w_t$, allowing the learner to include any active arm $a\in\Aact$ in the decision $V_t$ and suffer a bounded regret, even if the learner only manages to bound $\mu_S - \mu_a$.

Note that by partitioning the uneliminated arms into $\Acon$ and $\Aact\backslash\Acon$, we are effectively identifying an \textit{exploration-exploitation trade-off} based on the confidence width $w_t$ at every time $t$. The confirmed arms $\Acon$ are ``too good to leave out'' and lead to an exploitation of size $|\Acon|$. Meanwhile, we use the remaining $S-|\Acon|$ budget in our decision $V_t$ to explore the remaining active arms and further eliminate the suboptimal ones. For the graph feedback, we adopt the exploration strategy in \cite{han2024optimal} and successively pull the arm with the largest out-degree (Line 6 of \cref{alg:comb_arm_elim_full}). This exploration budget $S-|\Acon|$, notably, changes over time. 

The overall algorithm is given in \cref{alg:comb_arm_elim_full}. Recall that $w(n)\coloneqq \sqrt{\log(2KT/\delta)/n}$ is given in \cref{lem:sto_reward_concentration}.

\begin{algorithm}[!ht]\caption{Combinatorial Arm Elimination}
\label{alg:comb_arm_elim_full}
\textbf{Input:} failure probability $\delta\in(0,1)$.

\textbf{Initialize:} Confirmed set $\Acon\gets \varnothing$, active set $\Aact\gets [K]$, and minimum count $N\gets 0$.

\For{$t=1$ \KwTo $T$}{
Let $\Acal_0\gets \bra*{a\in \Aact: n_{t-1,a} = N}$.

\For{$j=1$ \KwTo $S-\abs{\Acon}$}{
Let $a_{t,j}\in\Acal_0$ be any arm with the largest out-degree in $G|_{\Acal_0}$.

Update $\Acal_0\gets \Acal_0\backslash \Nout(a_{t,j})$.
}

Assemble $V_t\gets \Acon\cup\bra{a_{t,j}}_{j=1}^{S-\abs{\Acon}}$.

Observe feedback $\bra*{r_{t,a}: a\in\Nout(V_t)}$.

Update $(\br_{t,a}, n_{t,a})$ as the average reward and observation count of arm $a$ by the end of time $t$.

\If{$\min_{a\in\Aact}n_{t,a} > N$}{
Update count $N\gets \min_{a\in\Aact}n_{t,a}$.

Let $\br_{t,(S)}$ be the $S$-th empirically best reward in the union set $\Acon\cup\Aact$.

Update the confirmed set $\Acon$ to be:
\begin{equation*}\label{eq:elim_step_confirmed}
\Acon \cup \bra*{a\in\Aact : \br_{t,a} > \br_{t,(S)} + 4w(N)}.
\end{equation*}
Then the active set $\Aact$ to be
\begin{equation*}\label{eq:elim_step_active}
\bra*{a\in\Aact\backslash\Acon : \br_{t,a} \geq \br_{t,(S)} - 2w(N)}.
\end{equation*}
}
}
\end{algorithm}

\subsection{Algorithmic Properties}\label{sec:graph_alg_properties}

We now present several key properties of \cref{alg:comb_arm_elim_full}. For the sake of clarity, we use $N^t$, $\Aact^t$, and $\Acon^t$ to denote the minimum count of each active arm's observations $N$, the active set $\Aact$, and the confirmed set $\Acon$ in \cref{alg:comb_arm_elim_full} by the end of time $t$. Let $\br_{t,(j)}$ denote the $j$-th empirically best reward in $\Acon^t\cup\Aact^t$, and $a_{t,(j)}$ denote the corresponding arm.\footnote{For notational simplicity, here we use $(j)$ to denote the $j$-th largest as opposed to the $j$-th smallest in the conventional notations for order statistics.} For $i\in[S]$ and $a\in[K]$, let $\Delta_{a,i} = \mu_i - \mu_a$ denote the reward gap between arm $a$ and the $i$-th optimal arm. WLOG, suppose $\Delta_*\triangleq \Delta_{S+1,S} > 0$, which serves as a margin and characterizes the difficulty of distinguishing suboptimal and optimal arms.\footnote{If $\mu_S=\mu_{S+1}=\cdots=\mu_{S+k}$, it is straightforward to extend our analysis to the definition $\Delta_*=\Delta_{S,S+k+1}$.} 


Conditioned on the validity of the confidence width in \cref{lem:sto_reward_concentration}, the following lemma lists three important properties for \cref{alg:comb_arm_elim_full}.

\begin{lemma}\label{lem:alg_properties_graph}
Suppose the event in \cref{lem:sto_reward_concentration} holds. Then for each time $t\in[T]$, the following events hold by the end of $t$:
\begin{enumerate}[(A)]
    \item The optimal arms remain uneliminated, i.e. $[S]\subseteq \Aact^t\cup\Acon^t$.

    \item $\Acon^t\subseteq [S-1]$.

    \item Let $i_{*,t}=\min\Aact^t$. For every $a\in\Aact^t$, it holds that $\Delta_{a,i_{*,t}} \le 8w(N^t)$.
\end{enumerate}
\end{lemma}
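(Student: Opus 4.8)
The plan is to prove (A), (B), (C) jointly by induction on $t$, working on the event of \cref{lem:sto_reward_concentration} throughout. At rounds where the update block does not fire, $\Acon$, $\Aact$, $N$ are unchanged, so the statement is inherited verbatim from $t-1$; before the first update the configuration is $\Acon=\varnothing\subseteq[S-1]$, $\Aact=[K]\supseteq[S]$, $N=0$, for which (A) and (B) are immediate and (C) is vacuous since $w(0)=+\infty$. So fix a round $t$ at which an update occurs, assume (A)--(C) hold by the end of $t-1$, and abbreviate $w=w(N^t)$, $\mathcal U=\Acon^{t-1}\cup\Aact^{t-1}$, and $r_{(S)}=\br_{t,(S)}$.

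The key preliminary is a uniform concentration bound over $\mathcal U$: I claim every $a\in\mathcal U$ satisfies $n_{t,a}\ge N^t$, so that (since $w(\cdot)$ is decreasing) $\abs{\br_{t,a}-\mu_a}\le w$. For $a\in\Aact^{t-1}$ this is just the definition $N^t=\min_{a\in\Aact^{t-1}}n_{t,a}$. For $a\in\Acon^{t-1}$ it follows because once an arm is added to $\Acon$ it belongs to $V_\tau$ --- hence is observed --- at every later round, so its count grows by one per round, while the minimum active count (which is what $N$ records) grows by at most one per round and therefore cannot overtake it. Combining with the inductive (A), $[S]\subseteq\mathcal U$, I then pin down $r_{(S)}$: among the means of $\mathcal U$ at least $S$ of them ($\mu_1,\dots,\mu_S$) are $\ge\mu_S$ while at most $S-1$ of them (only indices in $[S-1]$) exceed $\mu_S$, so the $S$-th largest mean over $\mathcal U$ equals $\mu_S$; since a $k$-th order statistic is $1$-Lipschitz in $\ell_\infty$, this gives $\mu_S-w\le r_{(S)}\le\mu_S+w$.

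With these in hand the three claims are short. For (A): given $i\in[S]$, inductively $i\in\mathcal U$; if $i\in\Acon^{t-1}$ it stays confirmed, and otherwise $\br_{t,i}\ge\mu_i-w\ge\mu_S-w\ge r_{(S)}-2w$, so $i$ clears the retention threshold and ends up in $\Acon^t\cup\Aact^t$. For (B): inductively $\Acon^{t-1}\subseteq[S-1]$; a newly confirmed $a$ has $\br_{t,a}>r_{(S)}+4w$, so $\mu_a\ge\br_{t,a}-w>r_{(S)}+3w\ge\mu_S+2w>\mu_S$, and since the means are nonincreasing in the index this forces $a\in[S-1]$. For (C): by (A) and (B) just proved, $[S]\setminus\Acon^t$ is nonempty and contained in $\Aact^t$, so $i_{*,t}=\min\Aact^t$ exists; every $a\in\Aact^t$ survived, hence $\mu_a\ge\br_{t,a}-w\ge r_{(S)}-3w$, while $i_{*,t}\in\Aact^t$ was not confirmed, hence $\mu_{i_{*,t}}\le\br_{t,i_{*,t}}+w\le r_{(S)}+5w$; subtracting yields $\Delta_{a,i_{*,t}}=\mu_{i_{*,t}}-\mu_a\le 8w=8w(N^t)$.

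I expect the only real friction to be the concentration preliminary --- specifically the assertion that a confirmed arm's observation count never drops below the current value of $N$, i.e.\ that the minimum active count rises by at most one per round. This uses that each arm is observed at most once per round and that $\Aact$ is non-increasing (so arms previously realizing the minimum are still present), together with the fact that inductive (B) leaves at least one exploration slot, $S-\abs{\Acon^{t-1}}\ge 1$, so that $N$ genuinely keeps advancing and the set $\Acal_0$ explored in Lines 3--6 is exactly the set of current minimum-count arms. Everything else is arithmetic with the thresholds $+4w$ and $-2w$ and the $\pm w$ concentration slack; note that keeping the bounds in (C) relative to $r_{(S)}$ (rather than substituting $r_{(S)}\approx\mu_S$) is what makes the final constant $8$ rather than $10$.
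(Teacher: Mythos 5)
Your proof is correct and follows the same overall route as the paper's: induction on $t$, with (A) feeding into (B) and (C), and the same $\pm w$, $+4w$, $-2w$ threshold arithmetic producing the constant $8$. The one genuine difference is the intermediate step. The paper never pins down $\br_{t,(S)}$ directly; it instead maintains an auxiliary invariant $\br_{t,i}\ge \br_{t,(i)}-2w(N^t)$ for all $i\in[S]$ (proved by a pigeonhole argument: among the empirically top $i$ arms some index is at least $i$), and uses a second pigeonhole (some $j\in[S]$ has $\br_{t,j}\le\br_{t,(S)}$) in the confirmation step. You replace both with the single observation that the $S$-th largest mean over $\Acon^{t-1}\cup\Aact^{t-1}$ equals $\mu_S$ together with the $1$-Lipschitz property of order statistics in $\ell_\infty$, yielding the two-sided sandwich $\abs{\br_{t,(S)}-\mu_S}\le w(N^t)$. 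This is a clean equivalent that buys nothing quantitatively but streamlines all three claims and is easier to audit.

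The step you flag as ``the only real friction'' --- that confirmed arms keep $n_{t,a}\ge N^t$, so the uniform width $w(N^t)$ applies to them --- is indeed the soft spot, and it is a shared one: the paper asserts it in a single sentence with no proof. Your justification (``the minimum active count rises by at most one per round'') leans on the parenthetical that arms previously realizing the minimum are still present in $\Aact$, and that can fail: at an update, the arms achieving the minimum count may all be confirmed or eliminated, after which $\Acal_0$ is empty at the next round and $N$ can then jump by more than one, because the surviving active arms may have accumulated extra observations through graph feedback and need not sit near the old minimum. A just-confirmed arm with a small count can then fall strictly below the new $N$, and the uniform concentration at width $w(N^t)$ over all of $\Acon^{t-1}\cup\Aact^{t-1}$ --- which your order-statistic step and the paper's pigeonhole step both require --- is no longer justified. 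This is not a gap you introduced, and it looks repairable (e.g., take the minimum count over $\Aact\cup\Acon$ when updating $N$, or carry per-arm widths $w(n_{t,a})$ through the argument), but as written neither your proof nor the paper's closes it.
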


The first property states that the optimal arms $[S]$ are never eliminated as the algorithm runs. This serves as the basis of the elimination scheme, since otherwise the algorithm would suffer a linear regret.

Second, any confirmed arm in $\Acon^t$ always belongs to the top $S-1$ optimal arms. Recall that $\Acon^{t-1}\subseteq V_t$ for all $t\in[T]$. This optimality ensures that pulling the confirmed arms incurs no instantaneous regret, since now $\Acon^{t-1}\subseteq [S]\cap V_t$ for every time $t$. Therefore,
\begin{align*}
\sum_{i=1}^S\mu_i - \sum_{a\in V_t}\mu_a &= \sum_{i\in[S]\backslash\Acon^{t-1}}\mu_i - \sum_{a\in V_t\backslash\Acon^{t-1}}\mu_a
\le \sum_{a\in V_t\backslash\Acon^{t-1}}\parr*{\mu_{i_{*,t}} - \mu_a}\numberthis\label{eq:inst_reg_cancellation}
\end{align*}
which follows from that $i_{*,t}=\min \Aact^t$ and so $\mu_i \le \mu_{i_{*,t}}$ for every $i\in[S]\backslash\Aact^t$. The inequality \cref{eq:inst_reg_cancellation} serves as the basis to derive regret bounds. 

The last claim concerns the optimal active arm $i_{*,t}$ at each time $t$ in hindsight. Note that by the second claim, $\Acon^t\subseteq [S-1]$ and so $i_{*,t}\in[S]$. The intuition is as follows. In the elimination step (last line) in \cref{alg:comb_arm_elim_full}, the benchmark is the $S$-th empirically best reward $\br_{t,(S)}$. As a result, this choice guarantees that $\Delta_{a,S}=O(w(N^t))$ for any active arm $a\in\Aact^t$. Additionally, the update criterion of the confirmed set implies that $\mu_{i_{*,t}}-\mu_S = O(w(N^t))$ because $i_{*,t}\notin\Acon^t$. Together they give the claimed bound on $\Delta_{a,i_{*,t}}$ and therefore an upper bound of \eqref{eq:inst_reg_cancellation}.

Intuitively, at time $t$, the learning process only distinguishes the eliminated and the active arms up to the confidence width $w(N^t)$, where $N^t$ lower bounds the number of observations for all active arms. The instantaneous regret at $t$ is inevitably $O(w(N^t))$. Therefore, at the current time, we may treat any arm $i\in[S-1]$ and $S$ indistinguishably if $\mu_i-\mu_S=O(w(N^t))$. For any arm $i\in[S-1]$ beyond this width, it becomes necessary to identify and include $i$ in the final decision $V_t$, which motivates the choice of the confirmed set $\Acon^t$.

\subsection{Regret Bounds}
In this section, we present the regret guarantees for \cref{alg:comb_arm_elim_full} and show their tightness. Specifically, our algorithm simultaneously achieves a logarithmic gap-dependent bound $O(\log(KT)(\alpha\log^2 K+S)/\Delta_*)$ and a worst-case regret bound $\widetilde{O}(\sqrt{\alpha ST} + S\sqrt{T})$.

\begin{theorem}[Instance-dependent regret]\label{thm:gap_dependent_reg_upper_bound}
Fix any $\delta\in(0,1)$. With probability at least $1-\delta$, 
\[
\reg(\mathrm{Alg}~\ref{alg:comb_arm_elim_full}) = O\parr*{\log(2KT/\delta)\frac{\alpha \log^2 K + S}{\Delta_*}}.
\]
\end{theorem}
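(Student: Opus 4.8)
The plan is to condition on the good event of \cref{lem:sto_reward_concentration} (which costs probability $\delta$) and bound the total regret by summing the instantaneous regret $\sum_{i=1}^S \mu_i - \sum_{a\in V_t}\mu_a$ over $t\in[T]$, using the cancellation identity \cref{eq:inst_reg_cancellation} together with property (C) of \cref{lem:alg_properties_graph}. Combining these, the per-round regret is at most $\sum_{a\in V_t\setminus\Acon^{t-1}}\Delta_{a,i_{*,t}} \le |V_t\setminus\Acon^{t-1}|\cdot 8w(N^{t-1})$; more usefully, we keep it in the form $\sum_{a\in V_t\setminus\Acon^{t-1}}\Delta_{a,S}$ plus $(S-|\Acon^{t-1}|)\cdot O(w(N^{t-1}))$ for the ``$\mu_{i_{*,t}}-\mu_S$'' part, since the gap-dependent accounting needs to charge individual arms by their own gaps $\Delta_{a,S}$.

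First I would show that the algorithm terminates its elimination in the sense that once $N$ is large enough that $8w(N) < \Delta_*$, property (C) forces $\Aact^t\subseteq[S]$ (any $a$ with $\Delta_{a,i_{*,t}}\le 8w(N)<\Delta_*$ cannot be a suboptimal arm since $\Delta_{S+1,i_{*,t}}\ge \Delta_{S+1,S}=\Delta_*$), hence $\Aact^t\cup\Acon^t=[S]$ and the instantaneous regret is zero from then on. This bounds the relevant observation count: every active arm gets pulled until $N$ reaches $N_{\max} := \Theta(\log(KT/\delta)/\Delta_*^2)$, i.e. $w(N)\gtrsim \Delta_*$ throughout the ``active phase.'' Next I would split the active-phase regret into two pieces matching the two terms in the bound. (i) The ``confirmed-budget'' rounds: in each such round we pay $(S-|\Acon^{t-1}|)\cdot O(w(N^{t-1}))\le S\cdot O(w(N^{t-1}))$ for the $\mu_{i_{*,t}}-\mu_S$ slack and for the non-confirmed optimal arms; but $N$ increases by at least $1$ only every so often, and a standard telescoping/pigeonhole over $N=1,\dots,N_{\max}$ gives $\sum_t O(w(N^{t-1}))\cdot(\text{rounds at that }N)$. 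The subtlety is that $N$ may stay fixed for many rounds; however, $N$ increases whenever every active arm has been observed, and because $G$ has all self-loops each arm in $V_t$ is observed, so after at most $\lceil |\Aact|/(S-|\Acon|)\rceil$ rounds $N$ must increase — combined with the graph-exploration argument of \cite{han2024optimal} the number of rounds spent at a fixed value of $N$ is $O(\alpha\log^2 K/(S-|\Acon|) + 1)$, which is exactly where the $\alpha\log^2 K$ factor enters.

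The main technical step, and the one I expect to be the obstacle, is the graph-exploration counting: showing that the greedy max-out-degree rule (Lines 5–6) guarantees that within $O(\alpha \log^2 K)$ total pulls \emph{spread over the active budget}, every active arm's count increases by one, so that $N$ advances. This is the combinatorial analogue of the single-arm result in \cite{han2024optimal}: one shows that repeatedly removing the out-neighborhood of the largest-out-degree vertex from $\Acal_0$ exhausts a set of independence number $\alpha$ in $O(\alpha\log|\Acal_0|)=O(\alpha\log K)$ ``sweeps,'' and each sweep uses one unit of the $S-|\Acon^{t-1}|$ exploration budget; iterating across rounds until $N$ increments gives the $O(\alpha\log^2 K)$ bound with one extra $\log$ from the doubling of $N$. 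I would then assemble the two pieces: the graph-exploration part contributes $\sum_{N=1}^{N_{\max}} O(w(N))\cdot O(\alpha\log^2 K / (\text{budget})) $, but since in the worst case the budget can be as small as $1$ (when $|\Acon^{t-1}|=S-1$), and $\sum_{N=1}^{N_{\max}} w(N) = O(\sqrt{N_{\max}\log(KT/\delta)}) = O(\log(KT/\delta)/\Delta_*)$, this yields $O\bigl(\log(KT/\delta)\,\alpha\log^2 K/\Delta_*\bigr)$; the ``plain'' per-round $S\cdot w(N)$ slack, summed the same way but noting $N$ advances every $O(1)$ rounds when the budget is $\Theta(S)$, contributes $O\bigl(\log(KT/\delta)\, S/\Delta_*\bigr)$. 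Adding the two gives the claimed $O\bigl(\log(2KT/\delta)(\alpha\log^2 K + S)/\Delta_*\bigr)$, and finally I would recall the $1-\delta$ conditioning to conclude.
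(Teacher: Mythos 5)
Your proposal follows essentially the same route as the paper's proof: condition on \cref{lem:sto_reward_concentration}, bound the per-round regret by $(S-|\Acon^{t-1}|)\cdot O(w(N^{t-1}))$ via \eqref{eq:inst_reg_cancellation} and property (C) of \cref{lem:alg_properties_graph}, cap the number of suboptimal active pulls per value of $N$ by $O(\alpha\log^2 K+S)$ using the greedy dominating-set argument, cap the largest relevant $N$ by $O(\log(KT/\delta)/\Delta_*^2)$, and sum $w(n)$ over layers. One small inaccuracy in an otherwise correct sketch: the second $\log K$ factor does not come from any ``doubling of $N$'' but from combining the greedy set-cover approximation ratio $(1+\log K)$ with the bound $\delta(G|_A)\le 50\alpha\log K$ (\cref{lem:greedy_dominating_subset} and \cref{lem:alon15}); this does not change the final count or the conclusion.
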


Specializing to the semi-bandit feedback ($\alpha=K$) and full-information feedback ($\alpha=1$), we obtain improved or new regret bounds for combinatorial bandits. Note that we do not have remainder terms (such as $O(\Delta_*^{-4})$ in \cite{wang2018thompson}) in \Cref{thm:gap_dependent_reg_upper_bound}.  


\begin{corollary}
Under the semi-bandit feedback, we have 
\[
\reg(\mathrm{Alg}~\ref{alg:comb_arm_elim_full}) = O\parr*{\log(2KT/\delta)\frac{K\log^2 K}{\Delta_*}}.
\]
Under the full-information feedback, we have
\[
\reg(\mathrm{Alg}~\ref{alg:comb_arm_elim_full}) = O\parr*{\log(2KT/\delta)\frac{\log^2 K+S}{\Delta_*}}.
\]
\end{corollary}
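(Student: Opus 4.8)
The plan is to derive both bounds as immediate specializations of Theorem~\ref{thm:gap_dependent_reg_upper_bound}, by computing the independence number $\alpha$ of the feedback graph in each of the two regimes and substituting it into the general bound $O\parr*{\log(2KT/\delta)(\alpha\log^2 K + S)/\Delta_*}$.

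First, under semi-bandit feedback the learner observes exactly the rewards of the arms it plays, which is the graph-feedback model with $G$ on $[K]$ containing only the self-loops mandated by the model and no edges between distinct arms. Such a graph has independence number $\alpha = K$, since the whole vertex set $[K]$ is an independent set. Plugging $\alpha = K$ into Theorem~\ref{thm:gap_dependent_reg_upper_bound} gives $O\parr*{\log(2KT/\delta)(K\log^2 K + S)/\Delta_*}$; since every decision has size $S \le K$ and $\log^2 K$ is bounded below by a positive absolute constant for $K \ge 2$ (the problem being trivial when $K = 1$), we have $S = O(K\log^2 K)$, so the additive $S$ is absorbed and the bound collapses to $O\parr*{\log(2KT/\delta)\, K\log^2 K/\Delta_*}$, as claimed.

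Second, under full-information feedback every round reveals the rewards of all $K$ arms regardless of the chosen decision, which is the graph-feedback model with $G$ the complete directed graph on $[K]$ (so that $\Nout(V_t) = [K]$ for every nonempty $V_t$); its independence number is $\alpha = 1$. Substituting $\alpha = 1$ into Theorem~\ref{thm:gap_dependent_reg_upper_bound} yields $O\parr*{\log(2KT/\delta)(\log^2 K + S)/\Delta_*}$ directly, with no further simplification required.

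I do not expect any genuine obstacle here: the corollary is a one-line consequence of Theorem~\ref{thm:gap_dependent_reg_upper_bound}, and the only things that need checking are the elementary facts that the ``self-loops only'' and the ``complete'' feedback graphs have independence numbers $K$ and $1$ respectively, together with the trivial observation $S\le K$ used to tidy up the semi-bandit bound. If anything deserves a sentence of justification it is simply the identification of these two canonical feedback structures with the corresponding values of $\alpha$ in the notation of \cref{sec:graph_feedback_intro}.
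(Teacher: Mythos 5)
Your proposal is correct and matches the paper's own (implicit) argument exactly: the corollary is obtained by substituting $\alpha=K$ (self-loops only) and $\alpha=1$ (complete graph) into Theorem~\ref{thm:gap_dependent_reg_upper_bound}, with the additive $S$ absorbed into $K\log^2 K$ via $S\le K$ in the semi-bandit case. Nothing further is needed.
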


We also prove matching lower bounds to show that the regret in \Cref{thm:gap_dependent_reg_upper_bound} is near-optimal for all algorithms. Note that in \Cref{thm:gap_dependent_reg_lower_bound}, the missing $\log T$ factor for small $\alpha$ is not an artifact of our analysis; in fact, under a full-information feedback ($\alpha=1$), one can attain a \textit{constant} regret using a simple greedy algorithm \citep{degenne2016anytime}. 

\begin{theorem}\label{thm:gap_dependent_reg_lower_bound}
Let $\reg_\nu(\pi)$ denote the regret of policy $\pi$ under bandit environment $\nu$. Fix any policy $\pi$:
\begin{itemize}
    \item[(L1)] Suppose the policy satisfies $\max_\nu\reg_\nu(\pi) \le CT^p$, $\Delta_* \in (T^{-(1-p)},\frac{1}{4}]$, and $\alpha\ge 2S$, and for some constants $C>0$ and $p\in[0,1)$. 
    
    Then $\max_\nu\reg_\nu(\pi) = \Omega\parr*{\log(T\Delta_*)\min\bra*{\frac{\alpha}{\Delta_*},\frac{1}{\Delta_*^2}}}$.

    \item[(L2)] Suppose $K\ge 2S$ and $\Delta_*\le \frac{1}{2}$.
    
    Then $\max_\nu\reg_\nu(\pi) = \Omega\parr*{\min\bra*{\frac{S}{\Delta_*},\Delta_* ST}}$.
\end{itemize}
\end{theorem}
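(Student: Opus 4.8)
The plan is to prove (L1) and (L2) by two separate change-of-measure arguments, each exhibiting an explicit family of environments and lower bounding $\max_\nu \reg_\nu(\pi)$ by its worst member. Both rely on the divergence decomposition for bandit-type feedback: if environments $\nu,\nu'$ differ only in the reward law of one arm $a$, then $\mathrm{KL}(\Prob_\nu\|\Prob_{\nu'}) = \Expect_\nu[N_a(T)]\cdot \mathrm{KL}(\Bern(\mu_a)\|\Bern(\mu'_a))$, where $N_a(T)$ is the number of rounds in which $a$ is \emph{observed}; in the hard instances the relevant arms sit in an independent set of $G$ with only self-loops, so observations of $a$ coincide with pulls of $a$. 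Combined with Bretagnolle--Huber (or Pinsker) this turns a small $\mathrm{KL}$ into a statement that the policy cannot tell the two environments apart, and for Bernoulli means with gap $O(\Delta_*)$ one has $\mathrm{KL}(\Bern(\cdot)\|\Bern(\cdot)) \le c\Delta_*^2$.

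For (L1): fix a base mean $q=1/4$ and let $m := \lceil \min\{\alpha/3,\ 1/(8\Delta_*)\}\rceil$. In the base environment $\nu_0$, arms $1,\dots,S-1$ have mean $1/2$ (hence are always in the top-$S$ and always played), arm $S$ has mean $q+\Delta_*$, arms $S+1,\dots,S+m$ have mean $q$ (so $\mu_S-\mu_{S+1}=\Delta_*$), and all other arms are clearly suboptimal; the feedback graph is a clique on $\{1,\dots,S-1\}$, an independent set with self-loops on $\{S,\dots,S+m\}$, and enough additional vertices to make the independence number exactly $\alpha$ — this is where $m\le\alpha/3$ is used, while $m\le 1/(8\Delta_*)$ keeps the family compatible with the assumed regret budget $CT^p$. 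For each $j\in\{S+1,\dots,S+m\}$, $\nu_j$ raises $\mu_j$ to $q+2\Delta_*$, so that in $\nu_j$ the optimal set is $\{1,\dots,S-1,j\}$ and any decision omitting $j$ incurs instantaneous regret $\ge\Delta_*$. Regret decomposition gives $\reg_{\nu_0}(\pi)\ge \Delta_*\sum_{j>S}\Expect_{\nu_0}[N_j(T)]$ and $\reg_{\nu_j}(\pi)\ge \Delta_*\,\Expect_{\nu_j}[T-N_j(T)]$. Using $\reg_{\nu_j}(\pi)\le CT^p$ to bound $\Prob_{\nu_j}(N_j(T)<T/2)\le 2CT^{p-1}/\Delta_*$, then feeding $\mathrm{KL}(\Prob_{\nu_0}\|\Prob_{\nu_j})\le c\Delta_*^2\Expect_{\nu_0}[N_j(T)]$ into Bretagnolle--Huber, forces for each $j$ that \emph{either} $\Expect_{\nu_0}[N_j(T)]=\Omega(\Delta_*^{-2}\log(T\Delta_*))$ \emph{or} $\Expect_{\nu_0}[N_j(T)]=\Omega(T^p/\Delta_*)$ — and the hypothesis $\Delta_*>T^{-(1-p)}$ is precisely what makes the logarithmic term nonvacuous. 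At most one index can be in the second alternative (two such would make $\sum_{j>S}\Expect_{\nu_0}[N_j]$ exceed $\reg_{\nu_0}/\Delta_*\le CT^p/\Delta_*$), so summing over the remaining $\ge m-1$ indices yields $\reg_{\nu_0}(\pi)=\Omega\!\big(m\,\Delta_*^{-1}\log(T\Delta_*)\big)=\Omega\!\big(\log(T\Delta_*)\min\{\alpha/\Delta_*,\Delta_*^{-2}\}\big)$, hence so is $\max_\nu\reg_\nu(\pi)$.

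For (L2): there is no consistency hypothesis, so the bound must also reach the "$\Delta_* ST$" regime, and the natural tool is a product construction on $2S$ arms split into pairs $(i,S+i)$, $i\in[S]$, with a hidden bit $b_i\in\{0,1\}$ deciding which arm of the pair has mean $q+\Delta_*$ and which has mean $q$; the graph is chosen (e.g. semi-bandit, or disjoint per-pair components) so that the only information about $b_i$ comes from pulling arm $i$ or arm $S+i$. Since the combinatorial reward is additive, regret decomposes as a sum over coordinates: $\reg_\nu(\pi)\ge \Delta_*\sum_{i=1}^S\Expect_\nu[\#\{t:\text{the good arm of pair }i\notin V_t\}]$. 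Fixing all bits but $i$, a two-point argument identical in structure to the classical two-armed lower bound (divergence decomposition plus Bretagnolle--Huber) shows the coordinate-$i$ term is $\Omega(\min\{1/\Delta_*,\ \Delta_* T\})$: if pair $i$ is sampled fewer than $\Theta(\Delta_*^{-2})$ times the policy cannot identify $b_i$ and pays $\Omega(\Delta_* T)$, otherwise it already pays $\Omega(1/\Delta_*)$ for those samples, and the two regimes cross at $\Delta_*=T^{-1/2}$. Summing over the $S$ coordinates gives $\max_\nu\reg_\nu(\pi)=\Omega(S\min\{1/\Delta_*,\Delta_* T\})=\Omega(\min\{S/\Delta_*,\ \Delta_* ST\})$; $K\ge 2S$ is needed to fit the arms and $\Delta_*\le 1/2$ keeps all means in $[0,1]$ with the prescribed gap.

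The main obstacle is the information-theoretic core in each part: in (L1), extracting the correct $\log(T\Delta_*)$ factor from a merely polynomial regret guarantee (rather than a weaker $\log(T^{1-p}\Delta_*)$ or no logarithm), which requires choosing the Bretagnolle--Huber event and exploiting the range condition $\Delta_*>T^{-(1-p)}$ tightly, together with pinning down the admissible range of $m$; and in (L2), ensuring the coordinate decomposition is genuinely independent — that the feedback graph and the budget-$S$ decision structure never let a pull in one pair leak information about another — which is exactly what the disjoint/semi-bandit graph choice and the additivity of the reward are used for. The remaining ingredients (regret decompositions, $\mathrm{KL}\le c\Delta_*^2 N$ for bounded means with gap $O(\Delta_*)$, handling the small-$\Delta_*$ "cannot distinguish at all" regime, and the edge cases $m\le 1$) are routine.
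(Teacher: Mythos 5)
Your overall strategy (change of measure, Bretagnolle--Huber, $\mathrm{KL}(\Bern(p)\Vert\Bern(q))\le c\Delta_*^2$, regret decomposition over displaced optimal arms) is the same as the paper's, and the per-environment bookkeeping in (L1) and the per-pair decomposition in (L2) are essentially the paper's constructions. The gap is in how you treat the feedback graph: you \emph{design} it as part of the hard instance, whereas the theorem (as the paper proves it) is for an \emph{arbitrary given} graph $G$, with $\nu$ ranging only over reward distributions. This changes both parts in a material way.

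For (L1), the paper takes a maximum independent set $I$ of the given $G$ and must then account for the fact that arms \emph{outside} $I$ may have out-edges into $I$: an observation of arm $a_i\in I$ need not come from a pull of $a_i$. This forces the KL bound $\mathrm{KL}\le(\E_S[N_i]+\E_S[N_0])\cdot O(\Delta_*^2)$, where $N_0$ counts pulls of arms outside $I$, and the subsequent case analysis on $\Delta_*\lessgtr 1/(4\alpha)$ is exactly what produces the $\min\{\alpha/\Delta_*,\Delta_*^{-2}\}$ (a graph with a single vertex dominating $I$ lets the learner pay only $O(\Delta_*^{-2}\log)$ total, so $\alpha/\Delta_*$ alone is simply false for such graphs). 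Your construction makes $\{S,\dots,S+m\}$ observable only via self-loops, which eliminates $N_0$ by fiat; that is why you have to cap $m$ at $1/(8\Delta_*)$ artificially to reproduce the min, and why your argument only establishes the bound for one specially chosen graph rather than for every graph of independence number $\alpha$.

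For (L2), the bound must hold for every feedback graph, in particular the full-information graph (which is precisely the regime where the paper's matching $O(\log(T)(\log^2K+S)/\Delta_*)$ upper bound is invoked). The paper therefore proves the lower bound \emph{under full information} and notes that this dominates all other feedbacks, since a policy for graph $G$ can be simulated under full information by discarding observations. Your choice of ``semi-bandit or disjoint per-pair components'' goes in the wrong direction: a lower bound for a sparse graph does not imply one for a denser graph. Moreover, the core of your per-coordinate step --- ``the only information about $b_i$ comes from pulling arm $i$ or $S+i$,'' and ``otherwise it already pays $\Omega(1/\Delta_*)$ for those samples'' --- is false under full information, where every pair is observed every round at zero exploration cost. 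The correct argument there is the paper's: the per-round KL between $u^{(0)}_{-J}$ and $u^{(1)}_{-J}$ grows deterministically like $t\Delta_*^2$ regardless of the policy, and summing $\exp(-ct\Delta_*^2)$ over $t$ yields the $\Delta_*\min\{T,\Delta_*^{-2}\}$ per coordinate. You would need to replace your exploration-cost step by this computation (or otherwise handle arbitrary $G$) for the proof to go through.
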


We can also derive a minimax regret bound for \cref{alg:comb_arm_elim_full}, which nearly matches the lower bound $\Omega(\sqrt{\alpha ST} + S\sqrt{T})$ in Theorem 1.3 of \cite{wen2025adversarial}.

\begin{theorem}[Minimax regret]\label{thm:minimax_reg_upper_bound}
Fix any $\delta\in(0,1)$. With probability at least $1-\delta$, 
\[
\reg(\mathrm{Alg}~\ref{alg:comb_arm_elim_full}) = O\parr*{\log^2 K\sqrt{\log(2TK/\delta)}\parr*{\sqrt{\alpha ST} + S\sqrt{T}}}.
\]
\end{theorem}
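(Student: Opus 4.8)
The plan is to bound the cumulative regret $\reg(\mathrm{Alg}~\ref{alg:comb_arm_elim_full})$ by starting from the instantaneous bound \eqref{eq:inst_reg_cancellation}, which on the good event of \cref{lem:sto_reward_concentration} gives
\[
\sum_{i=1}^S\mu_i - \sum_{a\in V_t}\mu_a \;\le\; \sum_{a\in V_t\setminus\Acon^{t-1}}\big(\mu_{i_{*,t}}-\mu_a\big) \;\le\; \big(S-|\Acon^{t-1}|\big)\cdot 8w(N^{t-1}),
\]
using property (C) of \cref{lem:alg_properties_graph} to control each term by $8w(N^{t-1})$. So the whole regret is at most $8\sum_{t=1}^T (S-|\Acon^{t-1}|)\, w(N^{t-1})$, and everything reduces to understanding how $N^{t}$ (the common lower bound on the observation counts of active arms) grows over time, together with how the per-round exploration budget $S-|\Acon^{t-1}|$ interacts with the graph structure.

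The key combinatorial input is the exploration step (Lines 5--7): since $G$ has all self-loops and we greedily pull the largest-out-degree arm and delete its out-neighborhood, in each round we collect a fresh observation for \emph{every} active arm $a$ with $n_{t-1,a}=N^{t-1}$ unless the candidate set $\Acal_0$ is exhausted before the budget runs out — and a standard fact (as in \cite{han2024optimal}) is that greedily peeling $m$ largest-out-degree vertices covers a graph with independence number $\alpha$ up to roughly $\alpha$ vertices remaining per $m$ rounds, i.e. the number of rounds needed to give every active arm one more observation is $\widetilde{O}(\alpha/(S-|\Acon^{t-1}|)) + O(1)$. Thus over a phase where $N$ is incremented from $n$ to $n+1$, the number of rounds is $O(\alpha \log^2 K /(S-|\Acon|) + 1)$ (the $\log^2 K$ from the covering argument). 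Grouping rounds into such phases indexed by the value $n=N^{t-1}$, the contribution of phase $n$ to the regret is
\[
\underbrace{O\!\Big(\tfrac{\alpha\log^2 K}{S-|\Acon|}+1\Big)}_{\#\text{rounds in phase }n}\;\cdot\;\underbrace{(S-|\Acon|)}_{\text{budget}}\;\cdot\;\underbrace{w(n)}_{\text{width}} \;=\; O\big((\alpha\log^2 K + (S-|\Acon|))\cdot w(n)\big).
\]

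Next I would sum over phases. Two regimes must be combined via a horizon split at a threshold $n_0$. For $n\le n_0$: using $w(n)=\sqrt{\log(2KT/\delta)/n}$ and $\sum_{n\le n_0} 1/\sqrt n = O(\sqrt{n_0})$, the first block contributes $O\big((\alpha\log^2 K + S)\sqrt{n_0}\sqrt{\log(2KT/\delta)}\big)$. For the number of phases, note $N^{t-1}$ can reach at most $T$ but more usefully: the total number of observations distributed across active arms over the whole horizon is at most $ST$ (each round yields at most $S$ direct pulls, plus graph-revealed ones — one must check the bookkeeping carefully, but the observation count per round is $O(S)$ for the purpose of bounding $N$), while each of the $\le K$ active arms carries count $\ge N^{t-1}$ at phase $n=N^{t-1}$; so roughly $K N \lesssim (\text{total obs up to then})$. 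I would instead bound $\sum_t w(N^{t-1})$ directly: grouping as above, $\sum_{n} (\#\text{rounds in phase }n)\, w(n) \le \sum_n O(\alpha\log^2 K/(S-|\Acon|)+1)\cdot\sqrt{\log/n}$, and since the number of phases is at most the final value of $N$, which satisfies (phases)$\times$(budget-adjusted length) $= T$, a Cauchy–Schwarz / integral-comparison argument gives $\sum_n (\text{len}_n) w(n) = O(\sqrt{(\sum_n \text{len}_n)\cdot(\sum_n \text{len}_n w(n)^2)})$-type control, ultimately yielding the $\sqrt{\alpha S T}+S\sqrt T$ form after optimizing the split. The cleanest route: the two terms $\alpha\log^2 K \cdot w(n)$ and $(S-|\Acon|)w(n)$ are summed separately; the first over $\widetilde O(\sqrt{ST/\alpha}\cdot\text{something})$ phases and the second over rounds directly with $\sum_t w(N^{t-1}) = \widetilde O(\sqrt{?})$. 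I'll make the phase-length budget identity $\sum_n \text{len}_n (S-|\Acon|_n) \le$ (total observations) $\le O(ST)$ (direct pulls only; graph-revealed observations only help) the load-bearing bookkeeping step.

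The main obstacle I anticipate is precisely this bookkeeping: controlling $\sum_t (S-|\Acon^{t-1}|)\,w(N^{t-1})$ requires relating the increments of $N^{t-1}$ to the elapsed rounds through the time-varying budget $S-|\Acon^{t-1}|$ and the graph-covering rate simultaneously. Since $|\Acon^{t-1}|$ only grows (once an arm is confirmed, property (B) and the monotone update keep it confirmed) while the active set only shrinks, the budget $S-|\Acon^{t-1}|$ is non-increasing; I would exploit this monotonicity to argue that a phase with small remaining budget has proportionally more active arms already confirmed, so the product (budget)$\times$(phase length) telescopes cleanly against the $ST$ total-observation ceiling, and separately the "$+1$" terms in the phase-length bound contribute $\sum_n 1\cdot (S-|\Acon|_n) w(n)$ which is handled by Cauchy–Schwarz against the number of phases $\le O(\sqrt{T})$-scale quantities. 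Assembling the two pieces and absorbing the $\log^2 K\sqrt{\log(2TK/\delta)}$ factors then gives the claimed $O\big(\log^2 K\sqrt{\log(2TK/\delta)}(\sqrt{\alpha ST}+S\sqrt T)\big)$ bound, valid on the probability-$(1-\delta)$ event of \cref{lem:sto_reward_concentration}.
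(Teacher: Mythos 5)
Your proposal tracks the paper's proof through its first three steps: the instantaneous regret bound $8\parr*{S-|\Acon^t|}w(N^t)$ obtained from \eqref{eq:inst_reg_cancellation} together with property (C) of \cref{lem:alg_properties_graph}; the decomposition of the horizon into phases indexed by the value of the minimum count $N^t=n$; and the covering bound $T(n)\,(S-|\Acon^t|)\le\max\bra{100\alpha\log^2K,\;S-|\Acon^t|}$ on the phase length via \cref{lem:greedy_dominating_subset} and \cref{lem:alon15}. The gap is in the final summation, which you yourself flag as the main obstacle but never resolve. Bounding the contribution of phase $n$ by its maximum possible value $O\parr*{(\alpha\log^2K+S)\,w(n)}$ and then summing over phases does not suffice on its own: the number of phases is only bounded by $T$ in general (a phase can last a single round), so the $\alpha\log^2K$ part of that bound would sum to $O\parr*{\alpha\log^2K\sqrt{\log(2KT/\delta)}\sqrt{T}}$, which exceeds the claimed $\sqrt{\alpha ST}$ term whenever $\alpha>S$. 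Short phases contribute far less than the per-phase maximum, and one must use the constraint $\sum_nT(n)=T$ jointly with $T(n)\le\overline{T}_s$. The paper does this with an explicit worst-case rearrangement: since $w(n)$ is decreasing, the adversarial configuration fills the earliest phases to their maximal length $\overline{T}_s$, which caps the number of regret-bearing phases at roughly $\sum_s(t_{s+1}-t_s)/\overline{T}_s\le ST/(C_0\alpha\log^2K)$ when $\alpha\ge S$, with a separate computation for $\alpha<S$.

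Your proposed bookkeeping identity $\sum_n(\text{length of phase }n)\cdot(S-|\Acon|)=\sum_t(S-|\Acon^{t-1}|)\le ST$ is correct and is in fact the right load-bearing ingredient: combined with the per-phase cap $(\text{length of phase }n)\cdot(S-|\Acon|)\le 100\alpha\log^2K+S$ it does yield the theorem. Writing $m_n$ for the cumulative budget-weighted length $\sum_{k\le n}(\text{length of phase }k)(S-|\Acon|)$, one has $n\ge m_n/(100\alpha\log^2K+S)$, hence
\[
\sum_n (m_n-m_{n-1})\,w(n)\;\le\;\sqrt{\log(2KT/\delta)}\sqrt{100\alpha\log^2K+S}\,\sum_n\frac{m_n-m_{n-1}}{\sqrt{m_n}}\;\le\;2\sqrt{\log(2KT/\delta)}\sqrt{(100\alpha\log^2K+S)\,ST},
\]
which is the claimed rate and would arguably be cleaner than the paper's case split. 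But as written you only gesture at this alongside a Cauchy--Schwarz variant that introduces a spurious $\sqrt{\log T}$ factor, and you never commit to or verify a single chain of inequalities; the decisive step of the proof is therefore missing from the proposal even though its key ingredient is correctly identified.
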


\subsection{Suboptimality of UCB}

As discussed in \cref{sec:intro}, a large volume of bandit literature adopts UCB-type algorithms to develop optimal regret guarantees in different settings. In combinatorial bandits, \cite{kveton2015tight} proposes a natural UCB algorithm, called \texttt{CombUCB1}, that achieves the optimal $\widetilde{O}(\sqrt{KST})$ regret under the semi-bandit feedback. However, this UCB algorithm, described in \cref{alg:ucb}, is provably suboptimal under general graph feedback.

\begin{algorithm}\caption{Combinatorial UCB}
\label{alg:ucb}
\textbf{Input:} width parameter $L>0$, and failure probability $\delta\in(0,1)$.

Let $(\br_{t,a}, n_{t,a})$ be the empirical reward and the observation count of arm $a$ at the end of $t$.

Let $\Acal = \bra{v\subseteq [K]: |v| = S}$ be the set of feasible decisions.

\For{$t=1$ \KwTo $T$}{
Select $V_t \gets \argmax_{v\in\Acal}\sum_{a\in v}\br_{t-1,a} + \frac{L}{\sqrt{n_{t-1,a}}}$.

Observe feedback $\bra*{r_{t,a}: a\in\Nout(V_t)}$.

Update $(\br_{t,a}, n_{t,a})$ accordingly.
}
\end{algorithm}

The parameter $L>0$ is any factor such that with probability at least $1-\delta$,
\[
\abs*{\br_{t,a} - \mu_a} \le \frac{L}{\sqrt{n_{t,a}}}
\]
for every arm $a\in[K]$ at every time $t\in[T]$. For instance, \cref{lem:sto_reward_concentration} gives one possible option $L=\sqrt{\log(2KT/\delta)}$.
It maintains such a UCB for each arm $a\in[K]$ and, at each time $t$, selects the combination of $S$ arms that maximizes the total UCB.

The high-level reason behind the suboptimality shown in \Cref{thm:ucb_fail} is that, without forced exploration, UCB essentially uses all $S$ arms for either exploitation or exploration \textit{simultaneously} at each time. In contrast, our elimination scheme in \cref{alg:comb_arm_elim_full} crucially relies on an exploration-exploitation separation among the $S$ arms at each time.

\begin{theorem}\label{thm:ucb_fail}
Fix any $(S,\alpha,K,T)$ with $S\alpha\le K$ and $\alpha >1$. There is a problem instance under which 
\[
\reg(\mathrm{Alg}~\ref{alg:ucb}) = \Omega(LS\sqrt{\alpha T}).
\]
\end{theorem}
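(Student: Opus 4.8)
The plan is to construct an explicit feedback graph and reward instance on which \cref{alg:ucb} wastes its entire combinatorial budget $S$ on exploring the same small, already-revealed cluster of arms, thereby failing to gather information about the arms it actually needs to distinguish. Concretely, I would take $\alpha$ disjoint ``stars'': in each star there is one center whose out-neighborhood is the whole star (so pulling the center reveals all arms in that star), plus a number of leaves that reveal only themselves. One designated star $\star_1$ will contain the top arms; the remaining $\alpha-1$ stars contain only moderately good arms. I would set the means so that the $S$ arms in $\star_1$ that form the optimal decision have a reward advantage of exactly order $\Delta$ over everything else, but all the centers of the other stars look \emph{attractive by UCB} early on — their means plus the inflated bonus $L/\sqrt{n}$ exceed the true means of the optimal leaves in $\star_1$, because the leaves of $\star_1$ (other than the center) only get observed when actually pulled. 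The parameter regime $S\alpha \le K$ ensures there is room to fit $S$ arms into each of the $\alpha$ stars, which is what makes the adversarial packing possible.

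The key steps, in order, would be: (i) specify the graph as the disjoint union of $\alpha$ stars each of size exactly $S$ (using $S\alpha \le K$ and padding with isolated self-loop arms if $S\alpha < K$), and specify means $\mu_a = 1/2 + \Delta$ for the $S$ arms of $\star_1$ and $\mu_a = 1/2$ for all arms in the other $\alpha-1$ stars, with $\Delta$ a free parameter to be optimized at the end; (ii) argue that because the UCB rule in Line 4 selects the $S$ arms of largest individual index $\br_{t-1,a} + L/\sqrt{n_{t-1,a}}$, and because pulling a star center only gives \emph{one} observation to each arm of that star per round, the algorithm's ``natural'' behavior is to cycle through choosing one full star at a time; (iii) show by a concentration/anti-concentration argument that as long as the leaves of $\star_1$ have been pulled only $n = O(L^2/\Delta^2)$ times, their UCB index is still below that of the never-fully-explored competitor stars, so the algorithm keeps returning to other stars — formally, I would track the ``minimum observation count among $\star_1$ arms'' and show it stays $O(L^2/\Delta^2)$ for $\Omega(\alpha \cdot L^2/\Delta^2)$ rounds because each visit to a non-$\star_1$ star advances nothing in $\star_1$; (iv) during every round in which $\star_1$ is not selected in full, the instantaneous regret is $\Omega(S\Delta)$ (the optimal decision has $S$ arms each of advantage $\Delta$, and any decision omitting $\star_1$'s top arms loses a constant fraction of that); (v) multiply: $\Omega(\alpha L^2/\Delta^2)$ bad rounds times $\Omega(S\Delta)$ per round gives $\Omega(\alpha S L^2 /\Delta)$, then choose $\Delta \asymp L\sqrt{\alpha/T}$ so that the bad phase lasts the full horizon $T$, yielding $\reg = \Omega(L S \sqrt{\alpha T})$. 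I should double-check that this $\Delta$ is $\le 1/2$ so the means are valid, which holds once $T \gtrsim L^2 \alpha$; for smaller $T$ the bound $\Omega(LS\sqrt{\alpha T})$ is dominated by the trivial $O(ST)$ bound and can be obtained by taking $\Delta = \Theta(1)$ instead.

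The main obstacle I anticipate is step (iii): rigorously controlling which $S$-subset the $\argmax$ in Line 4 actually returns when several arms have nearly equal UCB indices, and ruling out the possibility that UCB ``accidentally'' explores $\star_1$'s leaves quickly by breaking ties favorably or by being forced there once the other stars' indices drop. The clean way around this is to make the instance slightly asymmetric — give the non-$\star_1$ stars means $1/2$ but give their \emph{centers} a tiny additional edge structure or make the alternative stars marginally more numerous in attractive arms — so that the UCB indices of the competitor arms provably dominate those of $\star_1$'s leaves throughout the bad phase, with no tie-breaking subtleties; alternatively one argues the bound holds \emph{for some} tie-breaking rule, which still suffices for a worst-case lower bound on the algorithm as specified. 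A secondary technical point is handling the high-probability event of \cref{lem:sto_reward_concentration} together with a matching \emph{lower} deviation bound on the empirical means of $\star_1$'s leaves (so their UCB indices don't spike upward); both follow from standard sub-Gaussian tail bounds and a union bound over the $O(L^2/\Delta^2)$ relevant observation counts, and I would fold the resulting $O(\log)$ losses into the $\widetilde O$/constant factors since the statement only claims $\Omega(LS\sqrt{\alpha T})$ without logarithmic sharpness.
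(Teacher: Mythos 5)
There is a genuine gap, and it lies in your graph construction rather than in the UCB trajectory analysis. The theorem must exhibit an instance whose feedback graph has independence number $\alpha$; otherwise the bound $\Omega(LS\sqrt{\alpha T})$ proves nothing about suboptimality. Your disjoint ``stars'' (one center revealing the whole star, leaves revealing only themselves) have the $S-1$ leaves of each star forming an independent set, so the union of $\alpha$ stars has independence number about $\alpha(S-1)=\Theta(\alpha S)$, not $\alpha$. On a graph with independence number $\alpha S$ the minimax rate is already $\widetilde{\Theta}(S\sqrt{T}+\sqrt{(\alpha S)\cdot ST})=\widetilde{\Theta}(S\sqrt{\alpha T})$, so your lower bound would be consistent with optimality of UCB rather than refuting it. Your padding with isolated self-loop arms when $S\alpha<K$ has the same defect: each isolated arm adds one to the independence number. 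The paper instead uses $\alpha-1$ disjoint \emph{cliques} of size $S$ plus one larger clique absorbing the remaining $K-S(\alpha-1)$ arms; each clique contributes exactly one to the independence number, so the graph has independence number exactly $\alpha$, and the clique structure forces all arms in a group to share the same observation count, which is what makes the trajectory controllable.

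The second issue is the one you yourself flag in step (iii): with stochastic rewards and all competitor stars sharing mean $\tfrac12$, controlling which $S$-subset the $\argmax$ returns requires anti-concentration and tie-breaking arguments that are genuinely delicate, and your claim that $\star_1$'s UCB index stays \emph{below} the competitors' for $\Omega(T)$ rounds is not right as stated — the situation is symmetric (under-explored arms in $\star_1$ also carry a large bonus), so UCB visits $\star_1$ a constant fraction of the time; the correct accounting is that each of the $\alpha-1$ suboptimal groups must be pulled $\Omega(T/\alpha)$ times before its bonus drops below $\Delta\asymp L\sqrt{\alpha/T}$, costing $S\Delta$ per pull. The paper sidesteps all of the probabilistic difficulty by making the rewards \emph{deterministic} and strictly ordered across cliques ($\tfrac12+\Delta$ for the optimal clique, $\tfrac12-(k-1)\varepsilon$ for clique $k$, with $\varepsilon=L\sqrt{1/(4\alpha T)}$), so the algorithm's trajectory is fully determined, exactly one clique is chosen per round, and the lower bound follows from a clean two-case trade-off on the fraction of rounds in which the optimal clique is not chosen. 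Your parameter choice $\Delta\asymp L\sqrt{\alpha/T}$ and the final rate computation are correct, but the construction needs to be replaced by the clique-based one and the trajectory argument needs to be made deterministic (or substantially strengthened) for the proof to go through.
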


\Cref{thm:ucb_fail} shows that, in the regime when $S\alpha \le K$, UCB achieves a suboptimal rate compared to $\widetilde{O}(S\sqrt{T}+\sqrt{\alpha ST})$ in \Cref{thm:minimax_reg_upper_bound}. When $S\alpha\ge K$, UCB achieves the tight rate $\widetilde{\Theta}(\sqrt{KST})$ \citep{kveton2015tight}. Interestingly, this sub-optimality ratio $\min\bra{S\alpha,K}/(S+\alpha)$ scales as $\Theta(1)$ at two extremes when $\alpha=K$ and $\alpha=1$.\footnote{To our knowledge, for the full-information feedback ($\alpha=1$) we do not find an existing regret bound for UCB. For completeness, we show 
that \cref{alg:ucb} attains $\widetilde{O}(S\sqrt{T})$ regret in this case.}

\section{Combinatorial Linear Contextual Bandits}\label{sec:lin_context}

\subsection{Problem Formulation}\label{sec:lin_context_intro}
In combinatorial linear contextual bandits, each arm $a\in[K]$ is associated with a context vector $x_{t,a}\in\R^d$ at time $t$. We consider the linear model where $r_{t,a} = \theta_*^\top x_{t,a}+\varepsilon_{t,a}\in[-1,1]$ for an unknown parameter $\theta_*\in\R^d$ and mean-zero noise $\varepsilon_{t,a}$. We assume $\|x_{t,a}\|_2\le 1$ for all $t\in[T]$ and $a\in[K]$, and $\|\theta_*\|_2\le 1$. 

Given the contextual information, we consider a stronger \textit{dynamic} regret where the hindsight optimal oracle knows $\theta_*$ and chooses decisions conditioned on the context:
\[
\reg(\pi) = \sum_{t=1}^T\parr*{\max_{\substack{V_{*,t}\subseteq [K]\\|V_{*,t}|=S}}\sum_{i\in V_{*,t}}\theta_{*}^\top x_{t,i} - \sum_{a\in V_t}\theta_*^\top x_{t,a}}
\]
where the decision $V_t$ is selected by the learner's policy $\pi$ at time $t$.

\subsection{A Hierarchical Elimination Algorithm}\label{sec:hierarchical_elim_idea}


In this section, we introduce an algorithm that builds on a hierarchical elimination idea. This idea stems from \cite{auer2002using} and has been widely applied in the contextual bandit literature thereafter \citep{chu2011contextual,han2024optimal,wen2025joint}. Different from the UCB analysis that bounds $\|\theta_*-\htheta_t\|$ uniformly \citep{abbasi2011improved}, it bounds the estimation error $\abs{\theta_*^\top x_{t,a}-\htheta_t^\top x_{t,a}}$ along the \textit{realized} direction $x_{t,a}$, where $\htheta_t$ is the estimated parameter.

\begin{algorithm}[h!]\caption{Hierarchical Arm Elimination}
\label{alg:master_alg}

\textbf{Initialize:} Set $H\gets \lceil \log_2(\sqrt{ST}) \rceil$, $\Phi^{(h)}_1=\varnothing$ for $h\in[H]$.

\For{$t=1$ \KwTo $T$}
{   
Observe the contexts $\bra{x_{t,a}}_{a\in[K]}$.

Initialize $A_1\leftarrow [K]$.

Initialize the decision $V_t\gets \varnothing$.

\For{stage $h=1$ \KwTo $H$}
{
Use observations in $\Phi^{(h)}_t$ and \cref{alg:base_alg} to compute reward $\widehat{r}^{(h)}_{t,a}$ and width $w^{(h)}_{t,a}$ for $a\in A_h$.

(1) Let $U_1 = \bra{a\in A_h\backslash V_t: w^{(h)}_{t,a}>2^{-h}}$.

\uIf{$|U_1|\le S-|V_t|$}{
Set $U_{t}^{(h,1)} \gets U_1$.
}\Else{
Set any $U_{t}^{(h,1)}\subseteq U_1$ with $|U_{t}^{(h,1)}|=S-|V_t|$.
}

Add $V_t\gets V_t\cup U_t^{(h,1)}$.

Let $a_{1,(S-|V_t|)}$ be the $(S-|V_t|)$-th arm maximizing $\widehat{r}^{(h)}_{t,a}+w^{(h)}_{t,a}$ in $A_h\backslash V_t$.

(2) Let $U_2=$
\[
\bra*{a\in A_h\backslash V_t: \widehat{r}^{(h)}_{t,a} > \widehat{r}^{(h)}_{t,a_{1,(S-|V_t|)}} + 4\cdot 2^{-h}}
\]

\uIf{$|U_2|\le S-|V_t|$}{
Set $U_{t}^{(h,2)} \gets U_2$.
}\Else{
Set any $U_{t}^{(h,2)}\subseteq U_2$ with $|U_{t}^{(h,2)}|=S-|V_t|$.
}

Add $V_t\gets V_t\cup U_t^{(h,2)}$.

Let $a_{2,(S-|V_t|)}$ be the $(S-|V_t|)$-th arm maximizing $\widehat{r}^{(h)}_{t,a}+w^{(h)}_{t,a}$ in $A_h\backslash V_t$.

(3) Find active set $A_{h+1}$ for next stage:
\[
\bra*{a\in A_h\backslash V_t : \widehat{r}^{(h)}_{t,a} \geq \widehat{r}^{(h)}_{t,a_{2,(S-|V_t|)}} - 2\cdot 2^{-h}}
\]

Update $\Phi^{(h)}_{t+1}\gets \Phi^{(h)}_{t}\oplus (U_t^{(h,1)}\cup U_t^{(h,2)})$.
}

\uIf{$|V_t|<S$}{\label{line:greedy_selection}
Note $w^{(H)}_{t,a} \le \frac{1}{\sqrt{ST}}$ for all $a\in A_H\backslash V_t$.

Fill in $V_t$ with any arms in $ A_H\backslash V_t$.

}

Select the decision $V_t$ of $S$ arms.

Observe feedback $\bra{r_{t,a}: a\in V_t}.$
}
\end{algorithm}

\begin{algorithm}[!t]\caption{Base Algorithm}
\label{alg:base_alg}
\textbf{Input:} A sequence of sets of selected arms $\Phi_t=(\Phi_t(s))_{s<t}$ with $\Phi_t(s)\subseteq [K]$ and $|\Phi_t(s)|\le S$, an active set $A\subseteq [K]$. 

Set $\beta\gets \sqrt{\log(2KT)}$ and $\lambda \gets 1$. 

$A_{t}\leftarrow \lambda I + \sum_{s=1}^{t-1}\sum_{a\in \Phi_t(s)}x_{s,a}x_{s,a}^{\top}$;

$z_{t} \leftarrow \sum_{s=1}^{t-1}\sum_{a\in \Phi_t(s)}r_{s,a}x_{s,a}$;

Compute estimators $\htheta_{t} \leftarrow A_{t}^{-1}z_{t}$.

\For{$a\in A$}
{
Compute estimated reward $\widehat{r}_{t,a} \gets \htheta_{t}^{\top}x_{t,a}$. 

Compute width $w_{t,a}\gets 2(\lambda + \beta)\|x_{t,a}\|_{A_{t}^{-1}}$.
}
\end{algorithm}


To develop bounds only along certain directions, our \cref{alg:master_alg} crucially relies on the following property: At time $t$, it partitions the historical data into $H=\log(\sqrt{ST})$ stages. At stage $h=1,\dots,H$, the algorithm builds the final decision $V_t$ by only looking at information that is independent of the reward observations belonging to the current $h$-th stage (while it is allowed to use rewards belonging to other stages). If $V_t$ is not fully built at the current stage, the algorithm then uses those reward observations to eliminate suboptimal arms and proceed to the next stage $h+1$. The elimination step from the previous stage $h-1$ guarantees that, even if the algorithm's move at stage $h$ is independent from the rewards at stage $h$, the suboptimality of the selected arms remains bounded.

The sole purpose of this hierarchical elimination is to guarantee that, when focusing on each stage $h\in[H]$, the reward observations are mutually independent conditioned on the contexts. Then \cref{lem:lin_width} gives a direction-specific estimation error bound. We refer to Lemma 1 of \cite{chu2011contextual} for a proof of this result. \cref{rem:dependence_issue} explains why UCB-type algorithms (such as \texttt{LinUCB} \citep{li2010contextual,abbasi2011improved}) fail this assumption.

\begin{lemma}\label{lem:lin_width}
Let $\beta=\sqrt{\log(2KT)}$ and $\lambda=1$. In \cref{alg:base_alg}, suppose $\bra{r_{s,a}: s\in[t-1], a\in \Phi_t(s)}$ are conditionally independent given contexts $\bra{x_{s,a}: s\in[t-1], a\in\Phi_t(s)}$. Then with probability at least $1-T^{-2}$, it holds that
\begin{align*}
\abs*{\htheta_{t}^{\top}x_{t,a} - \theta_{*}^{\top}x_{t,a}} \leq 2(\beta+\lambda)\|x_{t,a}\|_{A_{t}^{-1}}.
\end{align*}
for every $a\in[K]$.
\end{lemma}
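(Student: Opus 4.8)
The plan is to prove \cref{lem:lin_width} by the standard decomposition behind Lemma~1 of \cite{chu2011contextual}: split the estimation error $\htheta_t^\top x_{t,a} - \theta_*^\top x_{t,a}$ along the realized direction $x_{t,a}$ into a regularization (bias) term coming from $\lambda$ and a noise term, and bound each of them by $\tfrac12 w_{t,a}$. Writing $r_{s,a} = \theta_*^\top x_{s,a} + \varepsilon_{s,a}$ and substituting into $z_t$ gives $z_t = (A_t - \lambda I)\theta_* + \xi_t$ with $\xi_t := \sum_{s=1}^{t-1}\sum_{a\in\Phi_t(s)} \varepsilon_{s,a}\, x_{s,a}$, hence
\[
\htheta_t^\top x_{t,a} - \theta_*^\top x_{t,a} = -\lambda\, \theta_*^\top A_t^{-1} x_{t,a} \;+\; \xi_t^\top A_t^{-1} x_{t,a}.
\]

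For the bias term I would use Cauchy--Schwarz in the $A_t^{-1}$ norm, $|\lambda\,\theta_*^\top A_t^{-1} x_{t,a}| \le \lambda\|\theta_*\|_{A_t^{-1}}\|x_{t,a}\|_{A_t^{-1}}$, and note that $A_t \succeq \lambda I$ forces $\|\theta_*\|_{A_t^{-1}} \le \lambda^{-1/2}\|\theta_*\|_2 \le \lambda^{-1/2}$, so this term is at most $\lambda^{1/2}\|x_{t,a}\|_{A_t^{-1}} \le \lambda\|x_{t,a}\|_{A_t^{-1}}$ (recall $\lambda = 1$).

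For the noise term, condition on all the contexts and the realized index sets $\Phi_t(s)$, so that $A_t$ and every $x_{s,a}$ are fixed; by the hypothesis of the lemma the variables $\{\varepsilon_{s,a}: a\in\Phi_t(s)\}$ are then independent, mean zero, and bounded (since $r_{s,a}\in[-1,1]$ and $|\theta_*^\top x_{s,a}|\le 1$). Writing $\xi_t^\top A_t^{-1} x_{t,a} = \sum_{s,a'} \varepsilon_{s,a'} c_{s,a'}$ with $c_{s,a'} := x_{s,a'}^\top A_t^{-1} x_{t,a}$, the key variance identity is
\[
\sum_{s,a'} c_{s,a'}^2 = x_{t,a}^\top A_t^{-1}\Big(\textstyle\sum_{s,a'} x_{s,a'}x_{s,a'}^\top\Big) A_t^{-1} x_{t,a} = x_{t,a}^\top A_t^{-1}(A_t - \lambda I)A_t^{-1}x_{t,a} \le \|x_{t,a}\|_{A_t^{-1}}^2 .
\]
A Hoeffding/Azuma inequality for sums of independent bounded variables, combined with a union bound over $a\in[K]$ and the choice $\beta = \sqrt{\log(2KT)}$, then gives $|\xi_t^\top A_t^{-1} x_{t,a}| \le \beta\|x_{t,a}\|_{A_t^{-1}}$ simultaneously for all $a\in[K]$ with probability at least $1 - T^{-2}$ (precise constants as in Lemma~1 of \cite{chu2011contextual}). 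Adding the two bounds yields $|\htheta_t^\top x_{t,a} - \theta_*^\top x_{t,a}| \le (\lambda + \beta)\|x_{t,a}\|_{A_t^{-1}} \le 2(\lambda+\beta)\|x_{t,a}\|_{A_t^{-1}}$, the claimed inequality (the extra factor $2$ in $w_{t,a}$ is slack).

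The only delicate point is the conditioning in the last step: I need the $\varepsilon_{s,a}$ with $a\in\Phi_t(s)$ to remain independent and mean zero \emph{after} conditioning on everything that determines $A_t$ and the sets $\Phi_t(s)$. For an arbitrary adaptive policy this would fail, since the selected arms depend on past rewards --- precisely the dependence issue that the stage structure $\Phi^{(h)}$ of \cref{alg:master_alg} is designed to circumvent. Within this lemma that independence is assumed, so the substantive content is just the decomposition above and the variance identity; I expect no real obstacle here, and the genuinely nontrivial work --- verifying the hypothesis actually holds for \cref{alg:master_alg} --- belongs to the analysis of that algorithm rather than to the proof of \cref{lem:lin_width} itself.
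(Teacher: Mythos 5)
Your proposal is correct and is essentially the paper's proof: the paper gives no argument of its own for \cref{lem:lin_width} and simply defers to Lemma~1 of \cite{chu2011contextual}, which is exactly the bias--plus--noise decomposition, the variance identity $\sum_{s,a'} c_{s,a'}^2 = x_{t,a}^\top A_t^{-1}(A_t-\lambda I)A_t^{-1}x_{t,a}\le \|x_{t,a}\|_{A_t^{-1}}^2$, and the Hoeffding/union-bound step you describe. The one quibble is your parenthetical that ``the extra factor $2$ in $w_{t,a}$ is slack'': since the noise $\varepsilon_{s,a}$ has range $2$ (not $1$), Hoeffding at deviation $\beta\|x_{t,a}\|_{A_t^{-1}}$ with $\beta=\sqrt{\log(2KT)}$ only yields failure probability on the order of $(2KT)^{-1/2}$ per arm, whereas deviation $2\beta\|x_{t,a}\|_{A_t^{-1}}$ gives $2(2KT)^{-2}$ and hence $T^{-2}$ after the union bound over $K$ arms --- so the factor $2$ in the width is doing real work rather than being slack.
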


\begin{remark}\label{rem:dependence_issue}
We briefly remark on why such conditional independence fails without partitioning: At time $t+1$, we would instead condition on all contexts $\bra{x_{\tau,a}: \tau\le t, a\in v_\tau}$. It is true that $\bra{r_{t,a}: a\in V_t}$ are independent from others when conditioned on $\bra{x_{t,a}: a\in V_t}$. However, the algorithm has used all previous observations $\bra{r_{\tau,a}: \tau<t, a\in v_\tau}$ to come up with the decision $V_t$. When conditioned on $\bra{x_{t,a}: a\in V_t}$ which reveals information about $V_t$, all of the previous reward observations become mutually dependent.
\end{remark}

The hierarchical elimination algorithm is described in \cref{alg:master_alg}. At each stage $h\in[H]$, given the conditional independence of the reward observations belonging to this stage, we invoke \cref{alg:base_alg} to solve the standard ridge regression and derive an estimated parameter $\htheta_t$. Thanks to the conditional independence, the estimation error $\abs{\htheta_t^\top x_{t,a} - \theta_*^\top x_{t,a}}$ is bounded by \cref{lem:lin_width}. This estimator is then used to compute reward estimators $\widehat{r}_{t,a}$ and their confidence widths $w_{t,a}$.

\paragraph{Construction of $V_t$.} 
Given the valid reward estimators and their widths, at time $t$, \cref{alg:master_alg} constructs the decision set $V_t\subseteq [K]$ as follows. At the beginning of $t$, it initializes an empty set $V_t$, then in step (1) it continuously adds \emph{underexplored} arms $U_1$ (i.e., with a large width) to $V_t$ as it goes through the hierarchical elimination with $H$ stages. 
If $|V_t|<S$ and the decision is not yet filled, every remaining arm has a uniformly small width $w^{(h)}_{t,a}\le 2^{-h}$, and in step (2) we identify the \emph{confirmed} (optimal) arms $U_2$ using this width and add them to the decision $V_t$. If $V_t$ remains unfilled, we proceed to the elimination step (3) and use the uniform width $2^{-h}$ to eliminate suboptimal arms. In the end, if there is still space in $V_t$, we simply add any remaining arms; note that the remaining arms after $H$ rounds of elimination have a sufficiently small width and hence contribute to a small regret.

\subsection{Algorithmic Properties}\label{sec:lin_alg_properties}

Similar to the previous section, we establish the key properties of \cref{alg:master_alg} that justify the elimination steps and enable our regret analysis. However, there are several important distinctions from the properties in \cref{sec:graph_alg_properties} for noncontextual combinatorial bandits. 

First, due to the presence of contextual information and the dynamic nature of the regret, we no longer maintain persistent active and confirmed sets $(\Aact, \Acon)$ as in \cref{alg:comb_arm_elim_full}. Instead, the set of candidate arms $[K]$ is re-eliminated from scratch at every round $t$ after the contexts $\{x_{t,a}\}_{a\in[K]}$ are observed. This substantially complicates the algorithm and the regret analysis, since the confidence width is no longer decreasing over time for any fixed arm $a$.

Second, to apply the hierarchical elimination framework introduced in \cref{sec:hierarchical_elim_idea} and ensure the conditional independence stated in \cref{lem:lin_width}, we must construct the decision $V_t$ across $H$ stages within each round $t$. The remaining capacity in $V_t$, i.e., $S - |V_t|$, is not fixed in advance and evolves indeterministically as the stage $h \in [H]$ proceeds. As a result, every round involves $H$ elimination steps, each with a dynamically varying target size. In particular, to handle this variability, the benchmark arm in each elimination step is defined as the arm with the $(S - |V_t|)$-th largest UCB.

For the sake of clarity, let $V_t^{(h,g)}$ denote the decision set at the end of step $(g)$ at stage $h\in[H]$, for $g=1,2,3$. Namely, the initial set is $V_t^{(0,3)}=\varnothing$, and recursively $V_t^{(h,1)} = V_t^{(h-1,3)}\cup U^{(h,1)}_t$ and $V_t^{(h,2)} =V_t^{(h,3)} = V_t^{(h,1)} \cup U^{(h,2)}_t$. Let $V_{*,t}$ be the top $S$ arms at time $t$. Note that $V_{*,t}\backslash V_t^{(h,g)}$ are the left-out optimal arms by the time of step ($g$) at stage $h$, for $g=1,2,3$. To address the distinctions above, we have the following properties:

\begin{lemma}\label{lem:alg_properties_lin}
Suppose the event in \cref{lem:lin_width} holds for every $t\in[T]$ and stage $h\in[H]$. For each time $t$, the following events holds for each stage $h\in[H]$:
\begin{enumerate}[(A)]
    \item The top $S-|V_t^{(h-1,3)}|$ arms of $V_{*,t}\backslash V_t^{(h-1,3)}$ are in $A_{h}$, i.e. remain uneliminated.

    \item Confirmed arms are optimal: $U^{(h,2)}_t\subseteq V_{*,t}\backslash V_t^{(h,1)}$.

    \item Let $i_{*,t}^{(h)} = \argmin\bra{\theta_*^\top x_{t,a}: a\in A_h}$ denote the optimal active arm. For every remaining $a\in A_h$, it holds that
    \[
    \theta_*^\top x_{t,i^{(h)}_{*,t}} - \theta_*^\top x_{t,a} \le 16\cdot 2^{-h}.
    \]
\end{enumerate}
\end{lemma}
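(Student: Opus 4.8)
The plan is to prove the three properties by induction on the stage $h \in [H]$, closely paralleling the structure of the proof of \cref{lem:alg_properties_graph} but keeping careful track of the running set $V_t^{(h,g)}$ and its varying remaining capacity $S-|V_t^{(h,g)}|$. Throughout, I condition on the event in \cref{lem:lin_width}, so that for every stage $h$ the width $w^{(h)}_{t,a} = 2(\beta+\lambda)\|x_{t,a}\|_{A_t^{-1}}$ computed from $\Phi^{(h)}_t$ is a valid confidence half-width: $|\widehat r^{(h)}_{t,a} - \theta_*^\top x_{t,a}| \le w^{(h)}_{t,a}$ for all $a \in A_h$. The base case $h=1$ has $V_t^{(0,3)} = \varnothing$ and $A_1 = [K]$, so (A) holds trivially; the inductive step is where the work lies.

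For property (A), I would argue the contrapositive in the style of \cref{eq:elim_step_active}: an optimal arm $a \in V_{*,t}\setminus V_t^{(h,3)}$ is dropped from $A_{h+1}$ only if $\widehat r^{(h)}_{t,a} < \widehat r^{(h)}_{t,a_{2,(S-|V_t^{(h,2)}|)}} - 2\cdot 2^{-h}$. After step (2), all arms left in $A_h \setminus V_t^{(h,2)}$ have width at most $2^{-h}$ (underexplored arms were removed in step (1), which also contributes to $V_t$), so the benchmark arm $a_{2,(S-|V_t^{(h,2)}|)}$ — being the arm with the $(S-|V_t^{(h,2)}|)$-th largest UCB among the remaining arms — has true mean at least $\widehat r^{(h)}_{t,a_{2,\cdot}} - 2^{-h}$; and since by the inductive form of (A) the remaining optimal arms of $V_{*,t}\setminus V_t^{(h,2)}$ are still available and number at least $S-|V_t^{(h,2)}|$, the $(S-|V_t^{(h,2)}|)$-th largest UCB is at least the smallest UCB among those optimal arms, hence at least $\theta_*^\top x_{t,a}-2^{-h}$ for the particular optimal $a$ under consideration. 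Chaining these width bounds as in \cref{alg:comb_arm_elim_full}'s analysis shows the elimination threshold is strictly below $\widehat r^{(h)}_{t,a}$, a contradiction, so $a$ survives into $A_{h+1}$ — and one must check the count matches $S-|V_t^{(h,3)}|$ since $V_t^{(h,3)}=V_t^{(h,2)}$.

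For property (B), I would show that any arm $a \in U^{(h,2)}_t$, which by definition satisfies $\widehat r^{(h)}_{t,a} > \widehat r^{(h)}_{t,a_{1,(S-|V_t^{(h,1)}|)}} + 4\cdot 2^{-h}$, must lie in $V_{*,t}\setminus V_t^{(h,1)}$: using the width-$2^{-h}$ bound on all remaining arms (valid after step (1)) twice, $\theta_*^\top x_{t,a} > \widehat r^{(h)}_{t,a_{1,\cdot}} + 3\cdot 2^{-h} \ge \theta_*^\top x_{t,a_{1,(S-|V_t^{(h,1)}|)}} + 2\cdot 2^{-h}$, which places $a$ strictly above the $(S-|V_t^{(h,1)}|)$-th best UCB arm in true mean; combined with property (A) (the left-out optimal arms are all still present), this forces $a$ to be among the top $S-|V_t^{(h,1)}|$ arms of $V_{*,t}\setminus V_t^{(h,1)}$. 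For property (C), I would combine (B) with the elimination rule defining $A_{h+1}$, exactly as in the last paragraph of \cref{sec:graph_alg_properties}: the benchmark $\widehat r^{(h)}_{t,a_{2,\cdot}}$ is within $O(2^{-h})$ of $\theta_*^\top x_{t,(S)}$ where $(S)$ indexes the $(S-|V_t^{(h,2)}|)$-th remaining optimal arm, the elimination threshold guarantees $\theta_*^\top x_{t,a} \ge \theta_*^\top x_{t,(S)} - O(2^{-h})$ for surviving $a$, and the fact that $i^{(h)}_{*,t}$ was not confirmed in step (2) bounds $\theta_*^\top x_{t,i^{(h)}_{*,t}} - \theta_*^\top x_{t,(S)} = O(2^{-h})$; adding up gives the constant $16$.

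The main obstacle I anticipate is not any single inequality but the bookkeeping around the \emph{dynamically varying} remaining capacity $S - |V_t^{(h,g)}|$ and the three-way split into steps (1), (2), (3) within each stage. Unlike the $S=1$ or fixed-budget cases, the benchmark arm index changes after every insertion into $V_t$, so I must be careful that (i) when $|U_1|$ or $|U_2|$ exceeds the remaining capacity and only a subset is added, the properties are stated for the correct "left-out optimal" set at that exact substep, and (ii) the inductive hypothesis (A) is invoked with the capacity value appropriate to the substep where it is used — in particular re-establishing (A) for stage $h+1$ requires that the count of surviving optimal arms exactly equals $S - |V_t^{(h,3)}|$, which in turn uses that steps (1) and (2) only ever removed optimal arms that were genuinely added to $V_t$. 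Handling the edge case $|V_t| < S$ after $H$ stages (the greedy fill-in) is separate and belongs to the regret proof, not here, since by construction $w^{(H)}_{t,a} \le 1/\sqrt{ST}$ makes those arms' suboptimality negligible.
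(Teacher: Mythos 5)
Your overall route is the paper's: an induction over stages $h$ in which (B) at stage $h$ follows from (A) at stage $h$ via the uniform width bound $w^{(h)}_{t,a}\le 2^{-h}$ after step (1), while (A) and (C) at stage $h$ follow from the stage-$(h-1)$ elimination rule together with the fact that $i^{(h)}_{*,t}$ was not confirmed there; the constant $16\cdot 2^{-h}=8\cdot 2^{-(h-1)}$ and the counting identity you flag (that steps (1)--(2) remove exactly as many arms from $V_{*,t}\backslash V_t^{(\cdot)}$ as they add to $V_t$, the paper's \cref{eq:step3_bound_remaining_optimal}) also match. The paper additionally needs, and proves, the small observation that $a_{2,(\cdot)}=a_{1,(\cdot)}$ (confirmation in step (2) does not change the benchmark arm), which lets it chain the non-confirmation of $i^{(h)}_{*,t}$ (stated against $a_{1,(\cdot)}$) with the elimination rule (stated against $a_{2,(\cdot)}$) in the proof of (C); your sketch uses both thresholds but does not note they refer to the same arm.

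The one step that does not close as written is the benchmark comparison inside your argument for (A). Writing $m=S-|V_t^{(h,2)}|$, to protect an optimal arm $a$ from the rule that eliminates unless $\widehat r^{(h)}_{t,a} \ge \widehat r^{(h)}_{t,a_{2,(m)}} - 2\cdot 2^{-h}$, you need an \emph{upper} bound on the benchmark's empirical reward in terms of $\theta_*^\top x_{t,a}$; you instead \emph{lower}-bound the benchmark's UCB by the smallest UCB among the surviving optimal arms. A lower bound on $\widehat r^{(h)}_{t,a_{2,(m)}}$ (or its UCB) says nothing about whether $a$ clears the threshold, and in any case the smallest UCB over the top-$m$ optimal arms controls only the smallest true mean in that set, not $\theta_*^\top x_{t,a}$ for the particular $a$ being protected. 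The paper's fix is a pigeonhole in the opposite direction: it produces an arm $j$ with $\theta_*^\top x_{t,j}\le\theta_*^\top x_{t,a}$ and $\widehat r^{(h)}_{t,j}\ge\widehat r^{(h)}_{t,a_{2,(m)}}$, and then chains $\widehat r^{(h)}_{t,a}\ge \theta_*^\top x_{t,a}-2^{-h}\ge \theta_*^\top x_{t,j}-2^{-h}\ge \widehat r^{(h)}_{t,j}-2\cdot2^{-h}\ge \widehat r^{(h)}_{t,a_{2,(m)}}-2\cdot 2^{-h}$. The same device is the missing link in your (B): showing $\theta_*^\top x_{t,a}>\theta_*^\top x_{t,a_{1,(m)}}$ does not by itself place $a$ among the top $m$ arms of $V_{*,t}\backslash V_t^{(h,1)}$, because the benchmark arm $a_{1,(m)}$ need not dominate the $m$-th best remaining optimal arm in true mean; the paper instead compares $a$ against an optimal $j\in S^{(h)}_t$ whose UCB is at most the benchmark's, so that beating $j$ in true mean certifies membership in the top-$m$ optimal set.
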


\cref{lem:alg_properties_lin} summarizes the contextual counterparts of the properties in \cref{lem:alg_properties_graph}. Claim (A) states that \cref{alg:master_alg} keeps a sufficient number of the optimal arms from $V_{*,t}$ in the active set after each elimination step (3). It guarantees that we never end up with an unfilled $|V_t|<S$ but no arm left in the active set $A_h\backslash V_t$. In addition, it guarantees that the top $S-|V_t^{(h-1,3)}|$ unselected optimal arms are not eliminated, when we proceed from stage $h-1$ to $h$.

The second claim (B) plays the same role as in \cref{lem:alg_properties_graph}, in the sense that the algorithm incurs no instantaneous regret by including the confirmed arms $U^{(h,2)}_t$. Recall that an optimal arm is confirmed if one would suffer an unbounded regret by not including it, so confirmation balances exploration and exploitation.

Finally, (C) bounds the gap between any active arm and the optimal active arm $i^{(h)}_{*,t}$ in $A_h$ at stage $h$. By claim (A), we always have $i^{(h)}_{*,t}\in V_{*,t}$ at any stage $h$ where the decision $V_t$ is not fully constructed. Crucially, this enables us to select \textit{any} active arm $a\in A_h$ in step (1) of \cref{alg:master_alg} solely by looking at its width $w_{t,a}^{(h)}$ and still manage to obtain a bounded regret.

\subsection{Regret Bounds}

We conclude this section with the near-optimal regret guarantee for \cref{alg:master_alg} and a matching lower bound.

\begin{theorem}\label{thm:lin_upper_bound}
\cref{alg:master_alg} (with \cref{alg:base_alg} as a subroutine) achieves
\[
\reg(\mathrm{Alg}~\ref{alg:master_alg}) = O\parr*{\log(ST)\sqrt{\log(2KT)}(\sqrt{dST}+dS)}. 
\]
\end{theorem}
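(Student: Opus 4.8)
The plan is to adapt the SupLinUCB analysis of \cite{chu2011contextual} to the combinatorial setting, the two genuinely new features being that a single round may append up to $S$ arms to the regression data of one stage, and that the remaining budget $S-|V_t|$ that drives each elimination step varies across the $H$ stages.

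\emph{Step 1 (conditioning and per-round charging).} First I would condition on the event of \cref{lem:lin_width} holding for every $t\in[T]$ and every stage $h\in[H]$; a union bound over the $O(TH)$ invocations of \cref{alg:base_alg} makes this hold with probability $1-O(\log(ST)/T)$, and on the complementary event the regret is trivially at most $2ST$, contributing only an $O(S\log(ST))$ lower-order term. Under this event \cref{lem:alg_properties_lin} applies. Decompose $V_t = C_t\sqcup E_t\sqcup F_t$ into the confirmed arms $C_t=\bigcup_h U^{(h,2)}_t$, the explored arms $E_t=\bigcup_h U^{(h,1)}_t$, and the fill arms $F_t$. Claim (B) gives $C_t\subseteq V_{*,t}$, so these arms cancel in the instantaneous regret, which becomes $\sum_{i\in V_{*,t}\setminus C_t}\theta_*^\top x_{t,i}-\sum_{a\in E_t\cup F_t}\theta_*^\top x_{t,a}$. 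By claim (A), at each stage $h$ enough left-out optimal arms remain in $A_h$ to be placed in one-to-one correspondence with the arms of $U^{(h,1)}_t$ (and, at stage $H$, with the fill arms $F_t\subseteq A_H$), and by claim (C) every arm of $A_h$—in particular every such left-out optimal arm and every arm of $U^{(h,1)}_t\subseteq A_h$—lies within $16\cdot 2^{-h}$ of $\max_{b\in A_h}\theta_*^\top x_{t,b}$. Hence each matched pair has value gap at most $16\cdot 2^{-h}$, and since $2^{-H}\le 1/\sqrt{ST}$,
\[
\mathrm{reg}_t \;\le\; 16\sum_{h=1}^{H} 2^{-h}|U^{(h,1)}_t| \;+\; 16\cdot 2^{-H}|F_t| \;\le\; 16\sum_{h=1}^{H} 2^{-h}|U^{(h,1)}_t| \;+\; \frac{16S}{\sqrt{ST}}.
\]
Summing over $t$ gives $\reg \le 16\sqrt{ST} + 16\sum_{h=1}^{H} 2^{-h}|\Psi_h|$, where $\Psi_h := \{(t,a): a\in U^{(h,1)}_t\}$.

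\emph{Step 2 (batched elliptical potential).} Fix $h$. Each $(t,a)\in\Psi_h$ has $w^{(h)}_{t,a}=2(\lambda+\beta)\|x_{t,a}\|_{(A^{(h)}_t)^{-1}}>2^{-h}$, so $\|x_{t,a}\|^2_{(A^{(h)}_t)^{-1}}>2^{-2h}/(4(\lambda+\beta)^2)$. Using $\det(I+\sum_j v_jv_j^\top)\ge 1+\sum_j\|v_j\|^2$, the per-round design update gives $\log(1+z^h_t)\le \log\det A^{(h)}_{t+1}-\log\det A^{(h)}_t$ with $z^h_t:=\sum_{a\in U^{(h,1)}_t}\|x_{t,a}\|^2_{(A^{(h)}_t)^{-1}}$; telescoping, $\sum_t\log(1+z^h_t)\le D:=O(d\log(TS))$. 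Split the rounds feeding stage $h$ into heavy ($z^h_t>1$) and light ($z^h_t\le1$): a heavy round spends at least $\log 2$ of the potential, so there are $O(d\log(TS))$ of them, each carrying at most $S$ elements of $\Psi_h$, and their total contribution to $\sum_h 2^{-h}|\Psi_h|$ is $\sum_h 2^{-h}\cdot S\cdot O(d\log(TS))=O(dS\log(TS))$—the additive $dS$ term. For light rounds $z^h_t\le 2\log(1+z^h_t)$ gives $\sum_{t\ \mathrm{light}}z^h_t\le 2D$; letting $m_h$ count the $\Psi_h$-elements in light rounds, the per-arm lower bound yields $2^{-2h}m_h\le O(\beta^2 d\log(TS))$, so $2^{-h}m_h=\sqrt{m_h}\cdot\sqrt{2^{-2h}m_h}\le O(\beta\sqrt{d\log(TS)})\,\sqrt{m_h}$. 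Summing over $h$ by Cauchy–Schwarz with $\sum_h m_h\le\sum_h|\Psi_h|=\sum_t\sum_h|U^{(h,1)}_t|\le ST$ gives $\sum_h 2^{-h}m_h\le O(\beta\sqrt{d\log(TS)})\,\sqrt{H}\,\sqrt{ST}=O(\beta\sqrt{H\log(TS)})\sqrt{dST}$—the dominant term. Adding the three pieces and substituting $H=\lceil\log_2\sqrt{ST}\rceil$ and $\beta=\sqrt{\log(2KT)}$ yields $\reg = O(\log(ST)\sqrt{\log(2KT)}(\sqrt{dST}+dS))$, the $\sqrt{ST}$ contribution being absorbed.

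\emph{Main obstacle.} The hard part is Step 2: because all arms appended in a round are scored with the pre-update matrix $A^{(h)}_t$, the usual $\sum\|x\|^2_{A^{-1}}=\widetilde O(d)$ bound is unavailable, and one must isolate the $O(d\log(TS))$ heavy rounds—each genuinely costing $\Theta(S)$—to extract the additive $dS$, while the light rounds reproduce the classical single power of $\sqrt d$ only after a Cauchy–Schwarz across the $H$ stages that exploits the budget constraint $\sum_h|\Psi_h|\le ST$. A secondary but nontrivial point is the per-round charging in Step 1: one must check, using claim (A), that a valid one-to-one matching of the explored and fill arms to distinct left-out optimal arms lying in a common active set exists throughout all $H$ stages—in particular that the confirmed arms, being empirically dominant by the definition of $U^{(h,2)}_t$, never deplete the pool of surviving top optimal arms.
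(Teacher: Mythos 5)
Your proposal is correct and shares the paper's overall skeleton: condition on \cref{lem:lin_width}, use \cref{lem:alg_properties_lin} to cancel the confirmed arms and to charge each explored (resp.\ fill) arm against a distinct surviving optimal arm in the same active set $A_h$ at rate $16\cdot 2^{-h}$ (resp.\ $16\cdot 2^{-H}$), then control $\sum_{t,h}$ of the resulting widths. The paper formalizes your ``one-to-one correspondence'' as the explicit partition $V_{*,t}^{(h)} = G_t^{(h)}\setminus G_t^{(h+1)}$ with a cardinality count showing $|V_{*,t}^{(h)}|$ equals the number of arms added at stage $h$ and that $U_t^{(h,2)}$ lands in the correct block; your sketch flags exactly this as the point needing verification, so no gap there. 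Where you genuinely diverge is Step 2. The paper upper-bounds $2^{-h}$ by $w^{(h)}_{t,a}$, applies Cauchy--Schwarz within each round to get $\sqrt{S}\bigl(\sum_{a}\|x_{t,a}\|_{(A_t^{(h)})^{-1}}^2\bigr)^{1/2}$, and then invokes the batched elliptical potential lemma (\cref{lem:batched_ellipical_potential}, from \cite{han2020sequential}) once per stage, which directly produces $\sqrt{dT}+d\sqrt{S}$. You instead keep the lower bound $w^{(h)}_{t,a}>2^{-h}$, telescope the log-determinant of $A_t^{(h)}$, and split rounds into heavy ($z^h_t>1$) and light: the $O(d\log(TS))$ heavy rounds, each costing at most $S$ arms, yield the additive $dS$ term, while the light rounds give $2^{-2h}m_h = O(\beta^2 d\log(TS))$ and hence, after Cauchy--Schwarz across the $H$ stages using $\sum_h m_h\le ST$, the dominant $\beta\log(ST)\sqrt{dST}$ term. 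This is in effect a self-contained inline proof of the batched potential lemma in the style of the original SupLinUCB counting argument of \cite{chu2011contextual}; it buys transparency about where the additive $dS$ comes from (the heavy rounds) at the cost of length, whereas the paper's route is shorter given the imported lemma. Both recover the stated $O(\log(ST)\sqrt{\log(2KT)}(\sqrt{dST}+dS))$ bound, and your handling of the failure event and of the fill arms matches the paper's.
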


Therefore, \cref{alg:master_alg} improves on the existing result from $\widetilde{O}(d\sqrt{ST})$ to $\widetilde{O}(\sqrt{dST})$. The following lower bound complements our upper bound. 

\begin{theorem}\label{thm:lin_lower_bound}
Suppose $T\ge \max\bra{4dS, \frac{d^3}{S}}$ and $K\ge 2S$. For any policy $\pi$, it holds that
\[
\max_{\theta_*, \bra{x_{t,a}}}\reg(\pi) = \Omega(\sqrt{dST})
\]
where the maximum is taken over all problem instances as described in \cref{sec:lin_context_intro}.
\end{theorem}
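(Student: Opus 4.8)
The plan is to \emph{tensorize} the classical $\Omega(\sqrt{d'T})$ lower bound for non-combinatorial linear contextual bandits across the combinatorial budget. Split the $K=2S$ arms into $S$ disjoint pairs and, assuming WLOG $S\mid d$ (otherwise use $d'=\floor{d/S}$ and lose a constant), give pair $i$ a private block $B_i$ of $d'=d/S$ coordinates of $\R^d$; when $d<S$ take $d'=1$, in which case the construction below already yields $\Omega(S\sqrt T)\ge\Omega(\sqrt{dST})$, so assume $d\ge S$. Fix a small absolute constant $c$, set $\gamma=c\sqrt{d/(ST)}$, draw $\xi\in\{-1,+1\}^d$ uniformly at random, and let $\theta_*=\gamma\sum_{j}\xi_j e_j$. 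Partition $[T]$ into $d'$ consecutive time-blocks of length $n=\floor{T/d'}$; during time-block $\ell$, the two arms of pair $i$ receive contexts $+e_{j}$ and $-e_{j}$, where $j$ is the $\ell$-th coordinate of $B_i$, all other contexts are $0$, and the noise is Rademacher on $\{-1,+1\}$ calibrated so that $\E[r_{t,a}]=\theta_*^\top x_{t,a}\in\{\pm\gamma\}$, which keeps $r_{t,a}\in[-1,1]$. The hypothesis $T\ge d^3/S$ guarantees $\|\theta_*\|_2=\gamma\sqrt d\le1$ (it suffices that $ST\ge d^2$), $\|x_{t,a}\|_2\le 1$ is immediate, and $T\ge 4dS$ guarantees $\gamma\le\tfrac12$ and $n\ge4$, so the instance is legal and each coordinate is probed for a constant number of rounds.

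The first step is to show the regret decomposes over the pairs regardless of how the learner spends its budget. At a time $t$ in block $\ell$, write $g_{i,t}$ (resp.\ $b_{i,t}$) for the arm of pair $i$ with mean $+\gamma$ (resp.\ $-\gamma$), determined by a single bit $\xi_j$. The optimal per-round subset takes $g_{i,t}$ for every $i$, so
\[
\reg_t = \gamma\sum_{i=1}^S\bigl(\1[b_{i,t}\in V_t]+\1[g_{i,t}\notin V_t]\bigr) \;\ge\; \gamma\sum_{i=1}^S\1\bigl[\,V_t\cap(\text{pair }i)\neq\{g_{i,t}\}\,\bigr],
\]
the inequality holding case-by-case (selecting both arms of a pair, or neither, is already a mistake). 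Hence, defining the learner's verdict $\hat a_{i,t}$ for pair $i$ to be the unique element of $V_t\cap(\text{pair }i)$ when $|V_t\cap(\text{pair }i)|=1$ and ``fail'' otherwise, $\reg\ge\gamma\sum_i\sum_\ell\sum_{t\in\text{block }\ell}\1[\hat a_{i,t}\neq g_{i,t}]$, and for each fixed $(i,\ell)$ the inner sum is the cumulative misidentification count of a two-armed test for the bit $\xi_j$, in which the learner observes the chosen arm's Rademacher$(\pm\gamma)$ reward (or possibly both arms' rewards, or none).

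The second step is the standard information bound for each $(i,\ell)$ test. Flipping the single bit $\xi_j$ changes only the law of pair $i$'s two rewards during block $\ell$, and since at most two such rewards are seen per round and each Rademacher$(\pm\gamma)$ observation contributes $O(\gamma^2)$ to the divergence, the chain rule gives $\mathrm{KL}\le O(n\gamma^2)=O(c^2)$ over the whole block. Pinsker's inequality then keeps the total-variation distance between the two histories below a constant $<1$ for $c$ small, so the $\xi$-averaged probability that $\hat a_{i,t}\neq g_{i,t}$ is $\Omega(1)$ for each of the $n$ rounds of the block. Summing over the $d'$ blocks of each of the $S$ pairs and using $d'n=\Theta(T)$,
\[
\E_\xi[\reg(\pi)] \;\ge\; \gamma\cdot S\cdot d'\cdot\Omega(n) \;=\; \Omega(\gamma S T) \;=\; \Omega\bigl(\sqrt{dST}\bigr),
\]
and $\max_{\theta_*,\{x_{t,a}\}}\reg(\pi)\ge\E_\xi[\reg(\pi)]$ finishes the proof.

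The main obstacle is the first step: one must rule out that the combinatorial freedom---observing up to $S$ rewards per round and allocating the $S$ selections unevenly across pairs---lets the learner escape the $\sqrt{dST}$ barrier. The point is that an uneven allocation is itself penalized in $\reg_t$ (selecting both or neither arm of a pair costs $\gamma$), while the extra reward observations it buys increase the per-round divergence by at most a factor two, which is absorbed into the constant $c$; thus the tensorization across pairs is lossless up to constants. Everything else---the per-coordinate two-armed lower bound, and verifying the norm and boundedness constraints under $T\ge\max\{4dS,\,d^3/S\}$ and $K\ge 2S$---is routine.
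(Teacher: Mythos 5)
Your main construction is sound for $d\ge S$, and it takes a genuinely different route from the paper's: the paper blocks time and coordinates so as to invoke the combinatorial multi-armed bandit lower bound of \cite{wen2025adversarial} as a black box (treating $d>2S$, $1<d\le 2S$, and $d=1$ separately), whereas you give a self-contained two-point argument per (pair, time-block). Your regret decomposition $\reg_t\ge\gamma\sum_{i=1}^S\1[V_t\cap(\text{pair }i)\neq\{g_{i,t}\}]$ is correct (both sides equal $\gamma(2B+Z)$ resp.\ at most it, where $B$ is the number of bad arms and $Z$ the number of zero-context arms selected), the observation that flipping a single bit perturbs the history's KL by only $O(n\gamma^2)=O(c^2)$ is correct because that coordinate appears in no context outside its own time-block, and the legality checks under $T\ge\max\{4dS,\,d^3/S\}$ go through.

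There is, however, a genuine gap in the case $d<S$. You dispose of it by taking $d'=1$ and asserting that the construction "already yields $\Omega(S\sqrt T)$," but this fails on two counts. First, the construction cannot be instantiated: with $d<S$ there do not exist $S$ disjoint nonempty coordinate blocks $B_1,\dots,B_S$, so the pairs cannot each receive a private coordinate, and without private coordinates the bits of distinct pairs are not independent and the tensorization collapses. Second, the claimed intermediate bound $\Omega(S\sqrt T)$ is provably false when $d=o(S)$: since $T\ge 4dS$ gives $dS\le\sqrt{dST}/2$, the upper bound of \cref{thm:lin_upper_bound} is $\widetilde O(\sqrt{dST})=\widetilde O(S\sqrt{T}\cdot\sqrt{d/S})$, which is asymptotically below $S\sqrt T$. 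The repair is short but is a different construction: for $d\le S$ assign each of the $d$ coordinates to a group of $S/d$ pairs that all share the same sign bit, keep $\gamma=c\sqrt{d/(ST)}$, and use a single time block. Flipping one bit now changes the law of at most $2S/d$ observed rewards per round, so the per-bit KL over the horizon is $O(T\cdot(S/d)\cdot\gamma^2)=O(c^2)$; each of the $S$ pairs is still misidentified with constant probability in each round, and the regret is again $\gamma\cdot\Omega(ST)=\Omega(\sqrt{dST})$. (This is exactly why $\sqrt{dST}$, and not $S\sqrt T$, is the right barrier for small $d$: sharing a coordinate multiplies the information leaked per round by the group size.) With that replacement for $d<S$, and your argument as written for $d\ge S$, the proof is complete.
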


\section{Constrained Decision}\label{sec:constrained_subset}
The combinatorial bandit literature also considers the constrained setting where the learner can only choose decision $V_t\in \Acal_0\subsetneq \binom{[K]}{S}$. The optimal regret crucially depends on the structure of the specific subset $\Acal_0$. The sub-optimality gap $\Delta_*$ in this case denotes the reward gap between the optimal decision and the second optimal decision. For example, \cite{kveton2015tight} shows that $O\parr*{\frac{KS\log(T)}{\Delta_*}}$ is optimal for a specific constrained subset $\Acal_0$. Under general graph feedback, \cite{wen2025adversarial} shows that the minimax regret is $\widetilde{\Theta}(\min\bra{S\sqrt{\alpha T}, \sqrt{KST}})$ when $\Acal_0$ is allowed to be any subset, as opposed to $\widetilde{\Theta}(S\sqrt{T}+\sqrt{\alpha ST})$ when $\Acal_0 = \binom{[K]}{S}$.

It is straightforward to extend our \cref{alg:comb_arm_elim_full} and its analysis to a general $\Acal_0$:
\begin{theorem}\label{thm:general_subset_upper}
There exists a policy $\pi$ such that, for any constrained decision subset $\Acal_0\subseteq \binom{[K]}{S}$, it achieves
\[
\reg(\pi) = O\parr*{\log(TK)\frac{S^2\kappa}{\Delta_*}}
\]
where $\kappa$ is a quantity the depends jointly on the feedback graph $G$ and the subset structure $\Acal_0$:
\begin{align*}
\kappa \coloneqq \max_{A\subseteq [K]}\min\{&n\ge 1: \exists V_1,\dots, V_n\in \Acal_0 \text{ such that } A\subseteq \cup_{j=1}^n\Nout(V_j)\}.\numberthis\label{eq:kappa}
\end{align*}
In particular, there is a subset $\Acal_0$ under which
\[
\min_\pi\max_\nu\reg_\nu(\pi) = \Omega\parr*{\log(T)\frac{S^2\kappa}{\Delta_*}}
\]
where the maximum is taken over all bandit environments $\nu$.
\end{theorem}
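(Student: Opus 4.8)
The plan is to adapt \cref{alg:comb_arm_elim_full} by replacing the arbitrary-decision exploration subroutine (Line~5--6, successive largest out-degree) with a \emph{covering} subroutine tailored to $\Acal_0$. Concretely, given an active set $\Aact$ (with its subset $\Acal_0$ of arms whose observation count equals the minimum $N$), I would select a minimal collection $V_1,\dots,V_n \in \Acal_0$ of feasible decisions such that $\Acal_0 \subseteq \cup_{j=1}^n \Nout(V_j)$; by definition of $\kappa$ in \eqref{eq:kappa}, this takes $n \le \kappa$ decisions, so over $\kappa$ consecutive rounds every active arm gains at least one fresh observation and $N$ increases. The confirmed set $\Acon$ is no longer used the same way, since we must choose a full decision $V_t \in \Acal_0$ rather than $\Acon \cup \{\text{explorers}\}$; instead I would track which decisions in $\Acal_0$ are still ``plausibly optimal'' (every arm in them is uneliminated) and explore only among those, so that the benchmark for elimination becomes the empirically best decision in $\Acal_0$ restricted to uneliminated arms, with the gap $\Delta_*$ now being between the best and second-best \emph{decisions}.

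The key steps, in order: (i) establish the analogue of \cref{lem:sto_reward_concentration} (identical, uniform width $w(N)$ over all arms); (ii) show the optimal decision $V_* \in \Acal_0$ is never eliminated, i.e. each of its arms stays in $\Aact$, using the benchmark chosen as the best uneliminated decision — this is the counterpart of \cref{lem:alg_properties_graph}(A); (iii) show that once $N$ is large enough that $8 S\, w(N) < \Delta_*$ (equivalently $N \gtrsim S^2 \log(TK)/\Delta_*^2$), the only uneliminated decision is $V_*$ itself, so the algorithm incurs zero further regret; (iv) bound the number of exploration rounds before this happens. For step (iv): each ``epoch'' (a batch of $\le \kappa$ covering rounds that raises $N$ by at least one) contributes instantaneous regret at most $S \cdot (\text{something like } 8 S w(N))$ per round since any two decisions in $\Acal_0$ differ in at most $S$ arms each of suboptimality $O(S w(N))$ — actually the per-round regret is $O(S w(N))$ by the same telescoping argument as \eqref{eq:inst_reg_cancellation} adapted to decisions — and there are $O(\kappa)$ rounds per epoch, and we need $N$ only up to $O(S^2\log(TK)/\Delta_*^2)$; summing $\sum_{N} \kappa \cdot S \cdot S\, w(N) = \kappa S^2 \sum_N \sqrt{\log(TK)/N}$ over $N \le O(S^2\log(TK)/\Delta_*^2)$ gives the $O(\log(TK) S^2 \kappa / \Delta_*)$ bound after the standard $\sum_{N=1}^{M} 1/\sqrt N = O(\sqrt M)$ estimate.

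For the matching lower bound, I would exhibit a subset $\Acal_0$ and feedback graph realizing $\kappa$ as a genuine covering obstruction: take $\kappa$ disjoint ``blocks'' of arms, with $\Acal_0$ consisting of decisions that each touch (via out-neighborhoods) only one block plus a shared set of $S-1$ ``anchor'' arms that must always be played; then distinguishing the optimal decision requires, in the worst block, $\Omega(S^2 \log T/\Delta_*^2)$ samples spread over $\Omega(\kappa)$ rounds, each costing $\Omega(\Delta_*)$ — or rather one sets up $S$ near-tied arms per block so the per-round cost is $\Omega(S\Delta_*)$ while needing $\Omega(S\log(T)/\Delta_*^2)$ rounds of exploration, yielding $\Omega(\kappa S^2 \log(T)/\Delta_*)$ — via a change-of-measure / Bretagnolle--Huber argument summed over blocks, in the style of the lower bound for \cref{thm:gap_dependent_reg_lower_bound}.

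The main obstacle I anticipate is step (ii)--(iii): defining the right elimination benchmark when the decision set is constrained. Unlike the unconstrained case, ``the $S$-th empirically best arm'' has no clean meaning; one must argue about empirically best \emph{decisions} in $\Acal_0$ while controlling that eliminating a single arm correctly prunes all decisions containing it, and that the covering number $\kappa$ (not a larger quantity) governs how fast $N$ grows even as arms drop out of $\Aact$ — care is needed because $\Acal_0$ restricted to $\Aact$ may change the effective covering number, so I would take $\kappa$ as a worst-case bound over all active subsets $A \subseteq [K]$, exactly as written in \eqref{eq:kappa}, and verify the covering subroutine still terminates in $\le \kappa$ steps at every round.
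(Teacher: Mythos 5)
Your upper-bound strategy is essentially the paper's: explore by selecting a minimal covering family $V_1,\dots,V_n\in\Acal_0$ (so $n\le\kappa$ rounds suffice to increase the minimum count $N$ by one), stop eliminating once $Sw(N)\lesssim\Delta_*$, and sum $\kappa\cdot O(Sw(n))$ over layers $n\le M=O(S^2\log(KT/\delta)/\Delta_*^2)$ to get $O(\log(KT)S^2\kappa/\Delta_*)$. However, the issue you flag as ``the main obstacle'' --- what the elimination benchmark should be --- is a genuine gap in your write-up, and the paper's resolution is to abandon arm-level elimination entirely. Its \cref{alg:comb_arm_elim} maintains an active set of \emph{decisions} $\Aact\subseteq\Acal_0$, assigns each $V$ the empirical reward $\br_{t,V}=\sum_{a\in V}\br_{t,a}$ and count $n_{t,V}=\min_{a\in V}n_{t,a}$, and eliminates $V$ whenever $\br_{t,V}<\br_{t,\max}-2Sw(N)$. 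This makes claims (ii)--(iii) one-line consequences of \cref{lem:sto_reward_concentration} and avoids the trap in your arm-level scheme: under a constrained $\Acal_0$ an individual arm with a low mean can still be forced into the optimal decision (e.g.\ if every $V\in\Acal_0$ contains it), so pruning ``all decisions containing an eliminated arm'' can delete $V_*$. Your fallback of comparing against ``the empirically best decision restricted to uneliminated arms'' is the right instinct, but once you commit to it the arm-level bookkeeping is redundant and you should work with decisions throughout, as the paper does. (Also note a slip in your displayed sum: $\sum_N \kappa\cdot S\cdot S\,w(N)$ over $N\le O(S^2\log(TK)/\Delta_*^2)$ evaluates to $O(S^3\kappa\log(TK)/\Delta_*)$; the correct count is per-round regret $O(Sw(N))$ times $\kappa$ rounds per layer, i.e.\ $\kappa S\sum_N w(N)$, which does give $S^2\kappa$.)

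For the lower bound, the theorem only asserts existence of \emph{one} subset $\Acal_0$ realizing $\Omega(\log(T)S^2\kappa/\Delta_*)$, and the paper simply invokes Proposition~1 of \cite{kveton2015tight}: for a partition-type $\Acal_0$ (disjoint decisions covering $\Omega(K)$ arms) the known bound is $\Omega(\log(T)KS/\Delta_*)$, and for that instance $\kappa=\Theta(K/S)$, so $S^2\kappa=\Theta(SK)$ matches. Your proposed block-plus-anchor construction with a Bretagnolle--Huber argument is plausible but is left at the level of a sketch with unresolved choices (you hedge between two different per-round cost accountings), so as written it does not constitute a proof; the citation route is both shorter and sufficient here.
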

At a high level, $\kappa$ denotes the minimum number of decisions needed to observe any subgraph of $G$. \Cref{thm:general_subset_upper} recovers the result of \cite{kveton2015tight} where $\kappa=\Theta(K/S)$.



\clearpage 

\bibliographystyle{alpha}
\bibliography{Preprint.bib}

\clearpage 
\appendix


\section{Combinatorial Bandits with Graph Feedback}

This section provides proofs for the results in \cref{sec:graph_feedback}. 

\subsection{Proof of \cref{lem:alg_properties_graph}}\label{app:alg_properties_graph_proof}

For the readers' convenience, we restate the lemma below:
\begin{lemma}[Restatement of \cref{lem:alg_properties_graph}]
Suppose the event in \cref{lem:sto_reward_concentration} holds. Then for each time $t\in[T]$, the following events hold for \cref{alg:comb_arm_elim_full} by the end of time $t$:
\begin{enumerate}[(A)]
    \item \textbf{(Optimal are not eliminated)} The optimal arms remain uneliminated, i.e. $[S]\subseteq \Aact^t\cup\Acon^t$.

    \item \textbf{(Confirmed are optimal)} $\Acon^t\subseteq [S-1]$.

    \item \textbf{(Active gaps are bounded)} Let $i_{*,t}=\min\Aact^t$. For every $a\in\Aact^t$, it holds that $\Delta_{a,i_{*,t}} \le 8w(N^t)$.
\end{enumerate}
\end{lemma}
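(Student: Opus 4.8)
The plan is to prove (A), (B), (C) simultaneously by induction on $t$. Only rounds at which an \emph{update} occurs (rounds with $\min_{a\in\Aact}n_{t,a}>N$, so that $N$ is bumped and the confirmed/active sets are rebuilt) need work: on every other round $\Acon^t=\Acon^{t-1}$, $\Aact^t=\Aact^{t-1}$, $N^t=N^{t-1}$, and all three claims are inherited verbatim. The base case (before any update, when $\Acon=\varnothing$, $\Aact=[K]$, $N^t=0$, hence $w(N^t)=\infty$) is vacuous. I condition throughout on the event of \cref{lem:sto_reward_concentration}. Note that at an update round $N^t=\min_{a\in\Aact^{t-1}}n_{t,a}\ge 1$, so $w(N^t)$ is finite, and every $a\in\Aact^{t-1}$ has $n_{t,a}\ge N^t$, whence $|\br_{t,a}-\mu_a|\le w(N^t)$.

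The core of the inductive step is to pin the elimination pivot $\br_{t,(S)}$ (the $S$-th largest empirical reward over $\Acon^{t-1}\cup\Aact^{t-1}$) between $\mu_S-w(N^t)$ and $\mu_S+w(N^t)$. For the upper bound: any arm with $\br_{t,a}>\mu_S+w(N^t)$ is either active, forcing $\mu_a\ge\br_{t,a}-w(N^t)>\mu_S$ and hence $a\in[S-1]$, or confirmed, hence $a\in[S-1]$ directly by the induction hypothesis for (B); so at most $S-1$ arms beat $\mu_S+w(N^t)$ and $\br_{t,(S)}\le\mu_S+w(N^t)$. For the lower bound: by the induction hypotheses for (A) and (B) the $S-|\Acon^{t-1}|$ optimal arms outside $\Acon^{t-1}$ lie in $\Aact^{t-1}$ and each has $\br_{t,a}\ge\mu_a-w(N^t)\ge\mu_S-w(N^t)$; moreover each confirmed arm $a$ also satisfies $\br_{t,a}>\mu_S-w(N^t)$. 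This last point is the delicate one: when $a$ was confirmed, at some round $s\le t$, the threshold condition gave $\br_{s,a}>\br_{s,(S)}+4w(N^s)\ge\mu_S+3w(N^s)$ (using the pivot lower bound at round $s$, itself part of the induction), so $\mu_a>\mu_S+2w(N^s)$; since $a$ has been played every round since $s$, its count only grew, so $n_{t,a}\ge N^s$ and $\br_{t,a}\ge\mu_a-w(N^s)>\mu_S+w(N^s)>\mu_S-w(N^t)$. These $S$ distinct arms force $\br_{t,(S)}\ge\mu_S-w(N^t)$.

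Given the two pivot bounds, each claim is a short deterministic deduction. (A): an optimal $i\in[S]$ not already in $\Acon^{t-1}$ lies in $\Aact^{t-1}$ by the hypothesis, and $\br_{t,i}\ge\mu_i-w(N^t)\ge\mu_S-w(N^t)\ge\br_{t,(S)}-2w(N^t)$, so it passes the retention test and ends up in $\Aact^t\cup\Acon^t$ (and if $i\in\Acon^{t-1}$ it stays confirmed). (B): a newly confirmed arm has $\br_{t,a}>\br_{t,(S)}+4w(N^t)\ge\mu_S+3w(N^t)$, so $\mu_a>\mu_S+2w(N^t)>\mu_S$, forcing $a\in[S-1]$; with $\Acon^{t-1}\subseteq[S-1]$ this gives $\Acon^t\subseteq[S-1]$, and in particular $i_{*,t}=\min\Aact^t\le S$ since arm $S$ is then active. (C): for $a\in\Aact^t$ the retention test gives $\mu_a\ge\br_{t,a}-w(N^t)\ge\br_{t,(S)}-3w(N^t)$; and $i_{*,t}\in\Aact^t$ is not confirmed, so $\br_{t,i_{*,t}}\le\br_{t,(S)}+4w(N^t)$ and $\mu_{i_{*,t}}\le\br_{t,(S)}+5w(N^t)$; subtracting yields $\Delta_{a,i_{*,t}}\le 8w(N^t)$.

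The main obstacle is precisely the lower pivot bound in the presence of confirmed arms whose empirical means are ``stale'': a confirmed arm is never re-explored for its own sake, so its observation count can lag far behind $N^t$ and its confidence width stays at the larger value $w(N^s)$ from its confirmation round. One must verify that the generous confirmation threshold $4w(\cdot)$ — rather than the bare $2w(\cdot)$ that would merely certify $\mu_a>\mu_S$ at round $s$ — creates enough of a mean gap to keep $\br_{t,a}$ above $\mu_S$ for all later $t$; this is the design reason the two thresholds differ by more than $2w$. The only other bookkeeping is that (A) and (B) must be propagated jointly (each pivot bound consumes the previous step's (A) and/or (B), and the confirmed-arm argument recursively invokes the pivot lower bound from an earlier update), and that within the elimination block $\Acon$, $\Aact$, $\br_{t,(S)}$ denote pre-update quantities while $w(N^t)$ already uses the bumped count.
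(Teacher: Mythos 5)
Your proof is correct, and while it follows the same inductive skeleton as the paper, its key auxiliary claim is genuinely different. The paper's argument (Lemmas \ref{lem:ith_arm_confidence_bound}--\ref{lem:bounded_active_gaps}) pivots on the purely empirical comparison $\br_{t,i}\ge \br_{t,(i)}-2w(N^t)$ for every $i\in[S]$, obtained by a pigeonhole among the top-$i$ empirical arms, and never needs to locate the pivot relative to the true mean; you instead prove the two-sided bound $\mu_S-w(N^t)\le \br_{t,(S)}\le \mu_S+w(N^t)$ and read off (A), (B), (C) from it. The substantive divergence is in the treatment of confirmed arms. The paper applies the width $w(N^t)$ to confirmed arms too, justified by the remark that confirmed arms are pulled every round and hence have counts at least $N^t$; that claim is not entirely immediate, since $N$ can in principle increase by more than one per update when an elimination step removes every active arm whose count equals the current minimum, whereas a confirmed arm's count grows by exactly one per round. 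Your proof sidesteps this: you only ever invoke the width $w(N^s)$ from the confirmation round $s$ (valid because counts are nondecreasing), and you observe that the $4w$ confirmation margin leaves enough slack ($\mu_a>\mu_S+2w(N^s)$) for the stale empirical mean to stay above $\mu_S$ at all later times. This buys robustness at the cost of a strong induction that reaches back to earlier update rounds. The remaining bookkeeping --- restriction to update rounds, the base case with $\Acon=\varnothing$, and the $3w+5w=8w$ accounting in (C) --- coincides with the paper's.
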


We prove each claim separately in the remaining section. Recall that the confirmed arms $\Acon^t$ are always pulled, so their observation counts are at least $N^t$, and by \cref{lem:sto_reward_concentration}, the same confidence width $w(N^t)$ also holds with high probability for confirmed arm $i\in\Acon^t$.

\begin{lemma}
\label{lem:ith_arm_confidence_bound}
Suppose the event in Lemma~\ref{lem:sto_reward_concentration} holds. Recall that $\br_{t,(j)}$ denotes the $j$-th empirically best reward in $\Acon\cup\Aact$ at the end of time $t$, and $a_{t,(j)}$ denotes the corresponding arm. Then the followings hold by the end each time $t\in[T]$:
\begin{enumerate}
    \item[(A)] The optimal arms remain uneliminated, i.e. $[S] \subseteq \Acon^t\cup\Aact^t$.

    \item[(A')] For every index $i\in[S]$, we have $\br_{t,i} \ge \br_{t,(i)} - 2w(N^t)$.
\end{enumerate}
\end{lemma}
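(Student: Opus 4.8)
The two claims are coupled, so I would prove them by a single simultaneous induction on $t$, establishing the invariant "(A) and (A') hold by the end of time $t$" for all $t\in[T]$. Both claims are vacuous or trivial before the first elimination step, so the base case is immediate; the inductive step is where the work lies. Throughout I condition on the event of \cref{lem:sto_reward_concentration}, so that $|\br_{t,a}-\mu_a|\le w(n_{t,a})$ for all arms and all times, and in particular (as noted in the remark preceding the lemma) $|\br_{t,i}-\mu_i|\le w(N^t)$ for every confirmed arm $i\in\Acon^t$, since confirmed arms are always pulled and thus have $n_{t,i}\ge N^t$. I would also want to record the easy fact that every active arm $a\in\Aact^t$ has $n_{t,a}\ge N^t$ by definition of $N^t$, so $|\br_{t,a}-\mu_a|\le w(N^t)$ as well; hence \emph{every} arm in $\Acon^t\cup\Aact^t$ enjoys the uniform confidence width $w(N^t)$ at the end of time $t$.

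The heart of the argument: suppose (A) and (A') hold through time $t-1$, and consider a time $t$ at which an elimination/confirmation update occurs (on non-update rounds nothing changes, so the invariant is preserved trivially). Let $N=N^t$ and write $w=w(N)$. First I would prove (A'): fix $i\in[S]$. Among the arms $\Acon^t\cup\Aact^t$ — which by the inductive form of (A) at the previous step still contains $[S]$ — consider the arms $a_{t,(1)},\dots,a_{t,(i)}$ with the $i$ largest empirical rewards. At least one of these, say $a_{t,(j)}$ with $j\le i$, has true mean $\mu_{a_{t,(j)}}\le \mu_i$ (because the $i$ arms of largest empirical reward cannot all be among the top $i-1$ arms by true mean — pigeonhole). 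Then
\[
\br_{t,(i)} \le \br_{t,(j)} \le \mu_{a_{t,(j)}} + w \le \mu_i + w \le \br_{t,i} + 2w,
\]
using the uniform width $w$ on both $a_{t,(j)}$ and $i$ (both lie in $\Acon^t\cup\Aact^t$). Rearranging gives (A'). Note (A') at time $t$ only uses (A) at the previous step plus concentration, so the dependency is clean.

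Next, (A) at time $t$. An arm $i\in[S]$ can leave $\Acon^t\cup\Aact^t$ only by failing the active-set retention criterion $\br_{t,i}\ge \br_{t,(S)}-2w$ \emph{and} failing the confirmation criterion. So it suffices to show $\br_{t,i}\ge \br_{t,(S)}-2w$ for every $i\in[S]$; this is exactly the instance $i=S$ of (A') (since $\br_{t,i}\ge\br_{t,S}\ge\br_{t,(S)}-2w$ for $i\le S$, as empirical rewards are ordered $\br_{t,1}\ge\cdots$ only after sorting — more carefully, $\br_{t,i}$ for $i\in[S]$ need not be monotone, so I would just apply (A') directly at index $i$: $\br_{t,i}\ge\br_{t,(i)}-2w\ge\br_{t,(S)}-2w$ since $\br_{t,(i)}\ge\br_{t,(S)}$ for $i\le S$). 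Hence every $i\in[S]$ survives into $\Acon^t\cup\Aact^t$, which is (A). I should double-check the bookkeeping that a confirmed arm is never subsequently dropped — but $\Acon$ is only ever enlarged in the update, and an arm moved from $\Aact$ into $\Acon$ stays in $\Acon^t\cup\Aact^t$, so there is no leakage; this is a routine check rather than a real obstacle.

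The main obstacle I anticipate is getting the pigeonhole/ordering argument for (A') exactly right, particularly the subtlety that $\{\br_{t,i}\}_{i\in[S]}$ are not sorted (the index $i$ refers to the true-mean ranking, not the empirical one) so one must argue via the set $\{a_{t,(1)},\dots,a_{t,(i)}\}$ versus the set $[i-1]$ and invoke that a size-$i$ set cannot be contained in a size-$(i-1)$ set. A second, more mundane care point is ensuring the uniform confidence width $w(N^t)$ is legitimately applied to confirmed arms — this rests on the claim (stated in the text but worth spelling out) that confirmed arms are pulled every round, which follows because $\Acon^{t-1}\subseteq V_t$ by construction of $V_t$ in \cref{alg:comb_arm_elim_full}. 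Everything else is algebraic rearrangement.
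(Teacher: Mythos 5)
Your proposal is correct and follows essentially the same route as the paper: a simultaneous induction on $t$, proving (A$'$) from (A) at the previous step via the pigeonhole argument on the empirically top $i$ arms (your chain $\br_{t,(i)}\le\br_{t,(j)}\le\mu_{a_{t,(j)}}+w\le\mu_i+w\le\br_{t,i}+2w$ is the paper's inequality read in reverse), and then deducing (A) at time $t$ from (A$'$) via $\br_{t,i}\ge\br_{t,(i)}-2w\ge\br_{t,(S)}-2w$. The care points you flag (unsorted $\br_{t,i}$, uniform width for confirmed arms since $\Acon^{t-1}\subseteq V_t$) are exactly the ones the paper handles.
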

\begin{proof}
We prove this result via an inductive argument on time $t$. Note event (A') trivially holds at time $t=1$, as $w(N^1) \ge 1$, and (A) holds at $t=0$, namely before the algorithm kicks off.

First, we show that event (A') holds at time $t$ conditioned on event (A) at time $t-1$. Fix any $i\in[S]$ and consider the empirically top $i$ arms $\bra{a_{t,(j)}}_{j\le i}$. By the pigeonhole principle, there always exists an index $j\le i$ such that $a_{t,(j)} \ge i$. Then
\begin{align*}
\br_{t,i} &\ge \mu_{i}-w(N^t) \ge \mu_{a_{t,(j)}} - w(N^t) \ge \br_{t,(j)} - 2w(N^t) \ge \br_{t,(i)} - 2w(N^t).
\end{align*}


It remains to verify event (A) for any time $t$, conditioned on event (A') at $t$ and event (A) at $t-1$. If the elimination in Line \ref{eq:elim_step_active} of \cref{alg:comb_arm_elim_full} does not occur at the end of time $t$, event (A) trivially holds for $t$. Suppose instead the elimination occurs at the end of time $t$. For every $i\in[S]$, event (A') implies $\br_{t,i} \ge \br_{t,(S)} - 2w(N^t)$, so $i$ is not eliminated and event (A) holds for time $t$. This concludes the induction.
\end{proof}

\begin{lemma}\label{lem:optimal_upper_set}
Suppose the event in Lemma~\ref{lem:sto_reward_concentration} holds. At every time $t\in[T]$, it holds that $\Acon^t \subseteq [S-1]$, i.e. every confirmed arm is one of the top $S-1$ optimal arms.
\end{lemma}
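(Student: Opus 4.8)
The plan is to trace, for a given confirmed arm, the single round $\tau \le t$ at which it enters $\Acon$, and to show that its true mean already strictly exceeds $\mu_S$ at that point; since the confirmed set only grows and the means are sorted, this pins the arm's label into $[S-1]$ and the lemma follows. No induction on $t$ is needed here, because the two facts I invoke — the concentration event of \cref{lem:sto_reward_concentration} and part (A) of \cref{lem:ith_arm_confidence_bound} — already hold at every time.

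First I would fix $a \in \Acon^t$ and let $\tau \le t$ be the first time $a$ is added to the confirmed set. This happens only inside the elimination branch of \cref{alg:comb_arm_elim_full}, after $N$ has been updated to $N^\tau$, and only if $\br_{\tau,a} > \br_{\tau,(S)} + 4w(N^\tau)$. Since $a$ lies in $\Aact^{\tau-1}$ just before being moved into $\Acon$, its count satisfies $n_{\tau,a} \ge \min_{b\in\Aact^{\tau-1}} n_{\tau,b} = N^\tau$, so on the good event and by monotonicity of $w$ we get $\mu_a \ge \br_{\tau,a} - w(n_{\tau,a}) \ge \br_{\tau,a} - w(N^\tau) > \br_{\tau,(S)} + 3w(N^\tau)$.

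Next I would produce a matching lower bound for $\br_{\tau,(S)}$, the $S$-th largest empirical reward over $\Acon^{\tau-1}\cup\Aact^{\tau-1}$. By \cref{lem:ith_arm_confidence_bound}(A) we have $[S] \subseteq \Acon^{\tau-1}\cup\Aact^{\tau-1}$; and, as recalled before that lemma, every arm in this union — active or confirmed — has at least $N^\tau$ observations by the end of time $\tau$. Hence for each $i \in [S]$, $\br_{\tau,i} \ge \mu_i - w(N^\tau) \ge \mu_S - w(N^\tau)$, so at least $S$ arms of the union clear the threshold $\mu_S - w(N^\tau)$, which gives $\br_{\tau,(S)} \ge \mu_S - w(N^\tau)$. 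Plugging this into the previous display yields $\mu_a > \mu_S + 2w(N^\tau) > \mu_S$, where the last strict inequality uses $N^\tau \ge 1$ (so $w(N^\tau) > 0$); since $\mu_1 \ge \cdots \ge \mu_K$, this forces $a \le S-1$. As $a$ was an arbitrary element of $\Acon^t$, we conclude $\Acon^t \subseteq [S-1]$.

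The one point that needs care — the closest thing to an obstacle — is the count bookkeeping in the two spots above: that $a$ still has count $\ge N^\tau$ at its confirmation round (true because it is active immediately before being confirmed, and $N^\tau$ is the minimum active count at that round), and that any already-confirmed arm sitting in $\Acon^{\tau-1}$ also carries count $\ge N^\tau$ (true because it is pulled in every subsequent round after confirmation, while $N$ rises by at most one per round; this is precisely the ``always pulled, so count $\ge N^t$'' remark preceding \cref{lem:ith_arm_confidence_bound}). Granting this, the proof is just the concentration bound, claim (A), and strict positivity of $w(N^\tau)$ assembled in the right order.
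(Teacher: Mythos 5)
Your proof is correct and follows essentially the same route as the paper's: localize to the confirmation round $\tau$, use the threshold $\br_{\tau,a} > \br_{\tau,(S)} + 4w(N^\tau)$ together with the concentration event and claim (A) of \cref{lem:ith_arm_confidence_bound} to conclude $\mu_a > \mu_S$. The only cosmetic difference is that you lower-bound the order statistic $\br_{\tau,(S)}$ directly by $\mu_S - w(N^\tau)$, whereas the paper applies the pigeonhole principle to extract some $j\in[S]$ with $\br_{\tau,j}\le \br_{\tau,(S)}$ and compares $\mu_a$ to $\mu_j$; the two are interchangeable.
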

\begin{proof}
It suffices to verify that when any active arm $i_0$ is added to the confirmed set $\Acon^{\tau_0}$ at time $\tau_0$, it always holds that $i_0\in[S-1]$. By the elimination step in \cref{alg:comb_arm_elim_full}, we have $\br_{\tau_0,i_0} > \br_{\tau_0,(S)} + 4w(N^{\tau_0})$. Since $\br_{\tau_0,(S)}$ is the $S$-th empirically optimal reward in $\Acon^{\tau_0}\cup\Aact^{\tau_0}$ and $[S]\subseteq \Acon^{\tau_0}\cup\Aact^{\tau_0}$ by \cref{lem:ith_arm_confidence_bound}, there is some $j\in[S]$ with $\br_{\tau_0,j} \le \br_{\tau_0,(S)}$. Then
\begin{align*}
\mu_{i_0} &\ge \br_{\tau_0,i_0} - w(N^{\tau_0}) \ge \br_{\tau_0,(S)} + 3w(N^{\tau_0}) \ge \br_{\tau_0,j} + 3w(N^{\tau_0}) \ge \mu_{j} + 2w(N^{\tau_0}) > \mu_j
\end{align*}
which implies $i_0\in[S-1]$.
\end{proof}

\begin{lemma}\label{lem:bounded_active_gaps}
Suppose the event in \cref{lem:sto_reward_concentration} holds. Let $i_{*,t}=\min \Aact^t$. For every arm $a\in\Aact^t$, we have $\Delta_{a,i_{*,t}} \le 8w(N^t)$.
\end{lemma}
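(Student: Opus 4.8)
The plan is to reduce the statement, via a ``last update'' argument, to a round at which \cref{alg:comb_arm_elim_full} actually executes its confirmation/elimination block, and then to chain the algorithm's two thresholds with the concentration bound of \cref{lem:sto_reward_concentration}; the two copies of the empirical benchmark $\br_{t,(S)}$ will cancel, leaving exactly $8w(N^t)$.

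First I would note that $\Aact^t$, $\Acon^t$, and $N^t$ change only when the \texttt{if} block of \cref{alg:comb_arm_elim_full} is entered. Let $\tau\le t$ be the last round at which this block runs; if there is no such round, then $N^t=0$, $w(N^t)=\infty$, and the bound is vacuous. Otherwise $\Aact^t=\Aact^\tau$, $\Acon^t=\Acon^\tau$, $N^t=N^\tau\ge 1$, hence $i_{*,t}=i_{*,\tau}$, and it suffices to prove the bound at round $\tau$. I relabel $\tau$ as $t$ and assume the block runs at $t$, so that $\br_{t,(S)}$ and the updates to $(\Acon,\Aact)$ are evaluated at round $t$ on $\Acon^{t-1}\cup\Aact^{t-1}$ with width $w(N^t)$.

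Next I would collect four inequalities at this round $t$. Since $\Aact^t$ is carved out of $\Aact^{t-1}\setminus\Acon^t$, the arm $i_{*,t}=\min\Aact^t$ lies in $\Aact^{t-1}$ and not in $\Acon^t$; because $\Acon$ is non-decreasing, $i_{*,t}$ was a candidate for confirmation at round $t$ but was not confirmed, so the confirmation rule gives $\br_{t,i_{*,t}}\le\br_{t,(S)}+4w(N^t)$. Any $a\in\Aact^t$ survives the active-set rule, so $\br_{t,a}\ge\br_{t,(S)}-2w(N^t)$. Both $i_{*,t}$ and $a$ lie in $\Aact^t$, so their observation counts are at least $N^t$, and since $w(\cdot)$ is non-increasing, \cref{lem:sto_reward_concentration} yields $|\br_{t,b}-\mu_b|\le w(N^t)$ for $b\in\{i_{*,t},a\}$. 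I would also observe that $i_{*,t}\in[S]$ --- since $S\notin\Acon^t$ by \cref{lem:optimal_upper_set} while $S\in\Acon^t\cup\Aact^t$ by \cref{lem:ith_arm_confidence_bound}(A), so $S\in\Aact^t$ --- which makes $\Delta_{a,i_{*,t}}$ well defined.

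Combining the four inequalities, $\mu_{i_{*,t}}-\mu_a\le\big(\br_{t,i_{*,t}}+w(N^t)\big)-\big(\br_{t,a}-w(N^t)\big)\le\big(\br_{t,(S)}+4w(N^t)\big)-\big(\br_{t,(S)}-2w(N^t)\big)+2w(N^t)=8w(N^t)$, and the benchmark $\br_{t,(S)}$ drops out. I do not expect a genuine obstacle here; the only point requiring care is the bookkeeping --- making sure that $\br_{t,(S)}$ and both thresholds are read off the pre-update set $\Acon^{t-1}\cup\Aact^{t-1}$ with the correct width $w(N^t)$, and noticing that the reduction to the last update round is exactly what makes the confirmation and active-set inequalities available in the first place.
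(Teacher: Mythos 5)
Your proposal is correct and follows essentially the same route as the paper's proof: the paper likewise uses that $i_{*,t}\in[S]$ (via Lemmas~\ref{lem:ith_arm_confidence_bound} and~\ref{lem:optimal_upper_set}), that $i_{*,t}\notin\Acon^t$ fails the confirmation threshold so $\br_{t,i_{*,t}}\le\br_{t,(S)}+4w(N^t)$, that any $a\in\Aact^t$ satisfies $\br_{t,a}\ge\br_{t,(S)}-2w(N^t)$, and then chains these with two applications of the concentration bound to cancel $\br_{t,(S)}$ and obtain $8w(N^t)$. Your additional ``last update round'' bookkeeping is sound but is left implicit in the paper.
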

\begin{proof}
\cref{lem:ith_arm_confidence_bound} and \ref{lem:optimal_upper_set} imply that $\Aact^t\cap[S]\neq \varnothing$ for all time $t$, so $i_{*,t}\in[S]$. Since $i_{*,t}\notin\Acon^t$ by definition, we have
\[
\br_{t,i_{*,t}} \le \br_{t,(S)} + 4w(N^t).
\]
Then for any active arm $a\in\Aact^t$, by the elimination criterion in \cref{alg:comb_arm_elim_full} and \cref{lem:sto_reward_concentration}, 
\[
\mu_a \ge \br_{t,a}-w(N^t) \ge \br_{t,(S)}-3w(N^t) \ge \br_{t,i_{*,t}} - 7w(N^t) \ge \mu_{i_{*,t}} - 8w(N^t).
\]
\end{proof}

\subsection{Instance-dependent Regret Upper Bound}

\begin{theorem}[Restatement of \Cref{thm:gap_dependent_reg_upper_bound}]
Fix any $\delta\in(0,1)$. With probability at least $1-\delta$, \cref{alg:comb_arm_elim_full} achieves regret
\[
\reg(\mathsf{Alg}~\ref{alg:comb_arm_elim_full}) = O\parr*{\log(2KT/\delta)\frac{\alpha\log^2 K+S}{\Delta_*}}.
\]
\end{theorem}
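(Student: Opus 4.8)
The plan is to decompose the total regret using the cancellation identity~\eqref{eq:inst_reg_cancellation}, which (conditioned on the event of \cref{lem:sto_reward_concentration}, valid with probability $1-\delta$) reduces the instantaneous regret at time $t$ to $\sum_{a\in V_t\setminus\Acon^{t-1}}(\mu_{i_{*,t}}-\mu_a)$. By property (C) of \cref{lem:alg_properties_graph}, each summand is at most $8w(N^{t-1})$, and since the exploration budget at time $t$ is $S-|\Acon^{t-1}|\le S$, the instantaneous regret is $O(S\,w(N))$ where $N=N^{t-1}$ is the current minimum count. The crux is then to bound how many rounds can pass at each value of the minimum count $N$ before $N$ increments, i.e. to control $\sum_t w(N^{t-1})$ via a ``phase'' argument over the values of $N$.

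First I would split into phases indexed by the value of $N$: within a phase the minimum count is a fixed $N$, and the phase ends once every active arm has been observed at least $N+1$ times. I would bound the phase length using the graph exploration rule (Line 6): each round contributes observations to $\Nout(a_{t,j})$ for the greedily chosen large-out-degree arms, and the standard argument from \cite{han2024optimal} shows that $O(\alpha\log K)$ such pulls suffice to cover a set once, so that advancing every active arm's count by one takes $O(\alpha\log K / (S-|\Acon^{t-1}|))$ rounds — but care is needed because the exploration budget varies. The cleaner route is: the total number of \emph{observations} needed to bring all active arms from count $N$ to count $N+1$ is at most the number of active arms times one, but with graph feedback one round yields many observations; the domination number bound gives that $O(\alpha \log K)$ rounds of pure exploration on the full active set suffice to increment, and with $S-|\Acon|$ parallel pulls this is $O(\alpha\log K/(S-|\Acon|))$ rounds. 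Summing the per-round regret $O((S-|\Acon|)\,w(N))$ over a phase gives $O(\alpha\log K\cdot w(N))$ regret per unit increment of $N$, plus an $O(S\,w(N))$ contribution from the confirmed-arm budget being idle — wait, confirmed arms incur zero regret by (B), so the relevant count is really $S-|\Acon|$ throughout, and one must also handle rounds where $|\Acon|=S$ separately (zero regret).

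Next I would terminate the phase argument: by property (C), once $8w(N)<\Delta_*/2$, i.e. $N \gtrsim \log(KT/\delta)/\Delta_*^2$, every active arm $a$ has $\mu_{i_{*,t}}-\mu_a<\Delta_*$, which combined with $i_{*,t}\in[S]$ forces $a\in[S]$ (since $\mu_S-\mu_{S+1}=\Delta_*$); hence all suboptimal arms are eliminated and all of $[S]$ is eventually confirmed, after which regret is zero. So $N$ only ranges over $1,\dots,N_{\max}$ with $N_{\max}=O(\log(KT/\delta)/\Delta_*^2)$, and $w(N)=\sqrt{\log(2KT/\delta)/N}$. Summing $\sum_{N=1}^{N_{\max}} \alpha\log K\cdot w(N) = \alpha\log K\sqrt{\log(2KT/\delta)}\sum_N N^{-1/2} = O(\alpha\log K\sqrt{\log(2KT/\delta)}\cdot\sqrt{N_{\max}}) = O(\alpha\log K\cdot\log(2KT/\delta)/\Delta_*)$. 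This recovers the $\alpha$-term; the extra $\log K$ factor must be squared, which suggests the covering bound is actually $O(\alpha\log^2 K)$ rather than $O(\alpha\log K)$ — I would use the sharper version of the greedy domination estimate from \cite{han2024optimal} that loses two log factors. The $S/\Delta_*$ term arises from a separate accounting of the rounds where the exploration reduces to a single arm (budget $S-|\Acon|$ small) or from the per-observation overhead, i.e. bounding $\sum_t (S-|\Acon^{t-1}|)w(N^{t-1})$ directly by noting that each of the $\le S$ explored arms individually accrues only $O(\sum_{n\le N_{\max}} w(n)) = O(\sqrt{\log(2KT/\delta)\,N_{\max}}) = O(\log(2KT/\delta)/\Delta_*)$ regret, for a total of $O(S\log(2KT/\delta)/\Delta_*)$.

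The main obstacle I anticipate is the bookkeeping around the \emph{time-varying exploration budget} $S-|\Acon^{t-1}|$: the graph-covering argument of \cite{han2024optimal} is cleanest when a fixed budget is spent each round on the active set, but here the budget shrinks as arms get confirmed, and one must argue that confirming an arm only helps (it removes a zero-regret-but-zero-exploration slot, but also the confirmed arm is still observed so active counts still advance). I would resolve this by observing that the confirmed arms' neighborhoods $\Nout(\Acon^{t-1})$ are always observed for free, so the effective covering task at each round is over $\Aact^{t-1}\setminus\Nout(\Acon^{t-1})$ with the $S-|\Acon^{t-1}|$ fresh pulls, and the greedy out-degree rule still makes $O(\alpha\log^2 K)$ total pulls suffice to cover $\Aact$ once — making the per-increment regret $O((\alpha\log^2 K + S)\,w(N))$ uniformly. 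Then the geometric-type sum over $N=1,\dots,N_{\max}$ closes the proof.
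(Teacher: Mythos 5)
Your proposal is correct and follows essentially the same route as the paper's proof: bound the instantaneous regret by $(S-|\Acon^{t-1}|)\cdot 8w(N)$ via the cancellation identity and property (C), bound the number of active-arm pulls per value of the minimum count by $O(\alpha\log^2 K + S)$ using the greedy domination cover (Lemmas on greedy domination and $\delta(G)\le 50\log|V|\,\alpha(G)$), terminate at $N_{\max}=O(\log(2KT/\delta)/\Delta_*^2)$, and sum $\sum_{n\le N_{\max}}w(n)=O(\log(2KT/\delta)/\Delta_*)$. The only cosmetic differences are that the paper phrases the per-layer accounting as a set $L_n$ of suboptimal arms pulled at count $n$ rather than rounds-times-budget, and your aside that ``all of $[S]$ is eventually confirmed'' is not literally true (property (B) gives $\Acon\subseteq[S-1]$), though the conclusion that regret vanishes once all suboptimal arms are eliminated still holds.
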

\begin{proof}
WLOG, suppose the high-probability event in \cref{lem:sto_reward_concentration} holds. Recall the definition $i_{*,t}=\min\Aact^t$ from \cref{lem:alg_properties_graph}. An important observation is that, for any suboptimal arm $a\in[K]\backslash[S]$, if the minimum count satisfies $N^t > 64\log(2KT/\delta)/\Delta_{a,i_{*,t}}^2$ at time $t$, then
\begin{align*}
\br_{t,(S)} - 2w(N^t) &\stepa{\ge} \br_{t,i_{*,t}} - 6w(N^t) \ge \mu_{i_{*,t}} - 7w(N^t) \\
&\stepb{>} \mu_a + w(N^t) \ge \br_{t,a}\numberthis\label{eq:gap_dependent_layer_bound}
\end{align*}
where (a) follows from the fact that $i_t\notin\Acon^t$ and the elimination criterion in Line \ref{eq:elim_step_confirmed} of \cref{alg:comb_arm_elim_full}, and (b) applies $\mu_{i_{*,t}}-\mu_a = \Delta_{a,i_{*,t}}>8w(N^t)$. Hence $a$ has already been eliminated from $\Acon^t\cup\Aact^t$.

To bound the cumulative regret, we will partition the horizon into a few sub-horizons, and over each sub-horizon we have a fixed optimal active arm $i_{*,t}$. Specifically, denote two non-repeating sequences of indices as follows: $\pi_1 = 1$ and $t_1=0$ since $1\in\Aact^0$ in the beginning; then $t_{h+1} = \min\bra{t\in[T]: i_{*,t}\notin \bra{\pi_1,...,\pi_h}}$ be the next time when the optimal $i_{*,t}$ changes in the active set, and $\pi_{h+1} = i_{*,t_{h+1}}$ be the new optimal active arm (when the previous $\pi_{h}$ is sent to the confirmed set). Since $\pi_h\in[S]$ and is increasing by definition, we end up with a sequence $(\pi_h, t_h)_{h\in[H]}$ for some $H\le S$. Namely, during the horizon $t\in[t_h, t_{h+1})$, the optimal active arm is $i_{*,t}=\pi_h$. For the ease of notation, at each of those times $t_h$, denote the minimum count as $N^{t_h} = n_h$ and define the layers 
\[
L_n = \bra{a\in[K]\backslash[S]: a \text{ is pulled as an active arm when $N^t=n$}}
\]
for $n\ge 1$. Denote $n_1=1$ and
\[
n_{H+1} = 1 + \max\bra{n\in\mathbb{N}: L_n\cap[S]\neq \varnothing}
\]
be the last layer that ever pulls a suboptimal arm (and incurs any regret). We set $n_1=1$ to keep the following computation compact.

Recall that in \cref{alg:comb_arm_elim_full}, we construct $V_t$ by repeatedly choosing the arm with the most out-neighbors among the least observed arms $\Acal_0$ and removing its neighbors. By \cref{lem:greedy_dominating_subset} and \ref{lem:alon15}, the active arms we have chosen before observing every arm in $\Acal_0$, while $N^t=n$, form a layer and can be bounded by
\begin{equation}\label{eq:layer_card_bound}
|L_n| \le 2\log K \max_{A\subseteq [K]}\delta(G|_A) + S \le 100\alpha\log^2 K + S
\end{equation}
for every layer $n\ge 0$, where $\delta(G|_A)$ is the dominating number of the subgraph $G$ restricted to the subset $A\subseteq [K]$ as defined in \eqref{eq:def_dom_number}.
Then the cumulative regret is bounded as
\begin{align*}
\reg(\mathsf{Alg}~\ref{alg:comb_arm_elim_full}) &\le |L_0| + \sum_{h=1}^{H}\sum_{n=n_h}^{n_{h+1}-1}\sum_{a\in L_n}\Delta_{a,\pi_h}\\
&\stepc{\le} |L_0| + \sum_{h=1}^{H}\sum_{n=n_h}^{n_{h+1}-1}\sum_{a\in L_n}8w(n)\\
&\stepd{\le} 100\alpha\log^2 K+S + 8(100\alpha\log^2 K+S)\sum_{h=1}^{H}\sum_{n=n_h}^{n_{h+1}-1}w(n)\\
&= 100\alpha\log^2 K+S + 8(100\alpha\log^2 K+S)\underbrace{\sum_{n=1}^{n_{H+1}-1}w(n)}_{(\spadesuit)}\numberthis\label{eq:instance_depend_reg_decompose}
\end{align*}
where (c) is by \cref{lem:alg_properties_graph}, and (d) uses \eqref{eq:layer_card_bound}. 

By \eqref{eq:gap_dependent_layer_bound}, we have $n_{H+1}-1 \le 64\log(2KT/\delta)/\Delta_*^2$. Consequently,
\begin{align*}
(\spadesuit) &\le 2\sqrt{\log(2KT/\delta)}\sqrt{n_{H+1}-1}
\le 16\log(2KT/\delta)\Delta_*^{-1}
\end{align*}
where the first inequality follows from the elementary inequality that $\sum_{k=1}^n1/\sqrt{k} \le 2\sqrt{n}$. Putting back to \eqref{eq:instance_depend_reg_decompose}, we obtain
\[
\reg_T(\mathrm{Alg~\ref{alg:comb_arm_elim_full}}) = O\parr*{\log(2KT/\delta)\frac{\alpha\log^2 K+S}{\Delta_*}}.
\]
\end{proof}

\subsection{Instance-dependent Regret Lower Bound}\label{app:instance_lower_bound}
In this section, we prove instance-dependent lower bounds for a given gap $\Delta_*$ in \Cref{thm:gap_dependent_reg_lower_bound}.
\begin{theorem}
Suppose $\alpha\ge 2S$ and fix a policy $\pi$ that satisfies $\reg(\pi)\le C T^p$ for some constant $C>0$ and $p\in[0,1)$ under any bandit environment. If $\frac{1}{4}\ge\Delta_*>T^{-(1-p)}$, then it holds that
\[
\max_\nu \reg_\nu(\pi) = \Omega\parr*{\log(T^{1-p}\Delta_*)\min\bra*{\frac{\alpha}{\Delta_*},\frac{1}{\Delta_*^2}} - \frac{\log(C)}{\Delta_*}}
\]
where the maximum is taken over all bandit environment $\nu$ and $\reg_\nu(\pi)$ is the expected regret of $\pi$ under the environment $\nu$.
\end{theorem}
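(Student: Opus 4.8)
The plan is to exhibit a family of hard instances on a suitable feedback graph and run a change-of-measure argument. Let $m=\min\{\lfloor\alpha/2\rfloor,\,\lceil 1/\Delta_*\rceil\}$, which is $\Theta(\min\{\alpha,1/\Delta_*\})$ and satisfies $S+m\le\alpha$ because $\alpha\ge 2S$. Take $G$ to be a disjoint union of $\alpha$ cliques (hence $\alpha(G)=\alpha$; this needs only $K\ge\alpha$): $S$ of the cliques are singletons holding designated \emph{good} arms, $m$ further cliques are singletons holding designated \emph{challenger} arms, and the remaining $\alpha-S-m$ cliques hold only \emph{dummy} arms; thus no challenger lies in $\Nout$ of a good arm. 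In the base environment $\nu_0$ the good arms have mean $\tfrac12$, the challengers have mean $\tfrac12-\Delta_*$, and every other arm has mean $0$; then $\mu_1=\dots=\mu_S=\tfrac12$ and $\Delta_{S+1,S}=\Delta_*$, matching the prescribed gap. For each $j\in[m]$ let $\nu_j$ be obtained from $\nu_0$ by raising challenger $j$'s mean to $\tfrac12+\Delta_*$ (a Bernoulli bump of $2\Delta_*$, legitimate since $\Delta_*\le\tfrac14$), which makes challenger $j$ the unique best arm under $\nu_j$.

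\textbf{Regret versus observation counts.} Write $T_j=\sum_{t=1}^T\1\{j\in V_t\}$; since challenger $j$ is a singleton clique, this is also the number of rounds in which its reward is observed. Because $|V_t|=S$ equals the number of good arms, each challenger placed in $V_t$ evicts a good arm at cost $\Delta_*$, so under $\nu_0$ the per-round regret is at least $\Delta_*\cdot\#(\text{challengers in }V_t)$; hence
\[
\reg_{\nu_0}(\pi)\ \ge\ \Delta_*\sum_{j=1}^m\mathbb{E}_{\nu_0}[T_j],\qquad\text{and so}\qquad\mathbb{E}_{\nu_0}[T_j]\ \le\ \frac{\reg_{\nu_0}(\pi)}{\Delta_*}\ \le\ \frac{CT^p}{\Delta_*}.
\]
Symmetrically, under $\nu_j$ every round with $j\notin V_t$ costs at least $\Delta_*$, hence $\mathbb{E}_{\nu_j}[T-T_j]\le CT^p/\Delta_*$.

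\textbf{Information bound and wrap-up.} Let $\mathbb{P}_\nu$ denote the law of the length-$T$ transcript under policy $\pi$ and environment $\nu$. Since $\nu_0$ and $\nu_j$ differ only in challenger $j$'s reward law, and challenger $j$ is observed on exactly $T_j$ rounds, the divergence decomposition gives
\[
\mathrm{kl}(\mathbb{P}_{\nu_0}\,\|\,\mathbb{P}_{\nu_j})=\mathbb{E}_{\nu_0}[T_j]\cdot\mathrm{kl}(\mathrm{Ber}(\tfrac12-\Delta_*)\,\|\,\mathrm{Ber}(\tfrac12+\Delta_*))\ \le\ c_0\Delta_*^2\,\mathbb{E}_{\nu_0}[T_j]
\]
for an absolute constant $c_0$ (using $\Delta_*\le\tfrac14$). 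Applying the Bretagnolle--Huber inequality to $E_j=\{T_j\ge T/2\}$ with the Markov bounds $\mathbb{P}_{\nu_0}(E_j)\le 2C/(T^{1-p}\Delta_*)$ and $\mathbb{P}_{\nu_j}(E_j^c)\le 2C/(T^{1-p}\Delta_*)$ from the previous step yields
\[
\frac{4C}{T^{1-p}\Delta_*}\ \ge\ \frac12\exp\!\big(-c_0\Delta_*^2\,\mathbb{E}_{\nu_0}[T_j]\big),\qquad\text{hence}\qquad\mathbb{E}_{\nu_0}[T_j]\ \ge\ \frac{1}{c_0\Delta_*^2}\log\frac{T^{1-p}\Delta_*}{8C}.
\]
Summing over $j\in[m]$ and using $\reg_{\nu_0}(\pi)\ge\Delta_*\sum_j\mathbb{E}_{\nu_0}[T_j]$,
\[
\max_\nu\reg_\nu(\pi)\ \ge\ \reg_{\nu_0}(\pi)\ \ge\ \frac{m}{c_0\Delta_*}\log\frac{T^{1-p}\Delta_*}{8C}\ =\ \Omega\!\Big(\log(T^{1-p}\Delta_*)\min\{\alpha/\Delta_*,\,1/\Delta_*^2\}-\log(C)/\Delta_*\Big),
\]
where $m=\Theta(\min\{\alpha,1/\Delta_*\})$ supplies the main term, the hypothesis $\Delta_*>T^{-(1-p)}$ ensures $T^{1-p}\Delta_*>1$ so the expression is non-vacuous, and the $\log(8C)$ inside the logarithm yields the lower-order $C$-dependent correction (subject to the caveat in the next paragraph).

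\textbf{Main obstacle.} I expect the reduction itself, rather than the information-theoretic core, to be the delicate part. The feedback graph must simultaneously (i) fit $S$ good arms together with $\Theta(\min\{\alpha,1/\Delta_*\})$ challenger arms inside an independent set — precisely where $\alpha\ge 2S$ enters — (ii) keep every challenger unobservable from every good arm, so that the \emph{only} way to gather information about a challenger is to evict a good arm at cost $\Delta_*$ per observation, and (iii) have independence number exactly $\alpha$. The other subtle point is matching the stated lower-order term: a crude union over the $m$ challengers produces a correction of order $\min\{\alpha/\Delta_*,1/\Delta_*^2\}\log C$, and sharpening this to the displayed $-\log(C)/\Delta_*$ (equivalently, arguing that the main logarithmic term dominates in the relevant regime) requires slightly more careful bookkeeping in the choice of threshold and constants in the Bretagnolle--Huber step; everything else follows the standard template.
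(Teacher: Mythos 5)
Your proposal is correct as far as it goes, and its information-theoretic core (single-challenger perturbations of size $2\Delta_*$, regret-to-count conversion, Markov plus Bretagnolle--Huber) is the same engine the paper uses. The genuine difference is in the quantification over feedback graphs, and it matters. The paper fixes an \emph{arbitrary} graph $G$ with independence number $\alpha$, picks a maximum independent set $I$ inside it, and places the good/challenger arms there; you instead \emph{construct} a bespoke graph (disjoint singleton cliques) in which a challenger can only ever be observed by pulling it. Under the natural reading of the theorem --- $G$ is the given feedback graph of the problem and $\nu$ ranges only over reward distributions --- your argument establishes the bound for one specially chosen $G$ rather than for every $G$ with $\alpha(G)=\alpha$, which is a strictly weaker statement (though still sufficient to certify that the upper bound of \Cref{thm:gap_dependent_reg_upper_bound} cannot be improved in general).

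The concrete consequence of this choice is that your construction cannot account for the $\min\bra{\alpha/\Delta_*,\,1/\Delta_*^2}$ in the statement: on your disjoint-clique graph the true bound is $\Omega(\alpha\log(\cdot)/\Delta_*)$ with no minimum, and your cap $m=\min\bra{\lfloor\alpha/2\rfloor,\lceil 1/\Delta_*\rceil}$ is a valid but unmotivated weakening inserted to match the displayed rate. The minimum is there because on a general graph an arm \emph{outside} the independent set may have many challengers in its out-neighborhood, so the learner can gather information about all of them simultaneously at a flat cost of $1/4$ per round; the paper tracks this via the count $N_0$ of rounds using arms outside $I$, bounds the KL by $(\E_S[N_i]+\E_S[N_0])\cdot O(\Delta_*^2)$, and then runs a self-bounding case split on $\Delta_*\alpha\lessgtr 1/4$ that produces exactly the $1/\Delta_*^2$ branch. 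If you want your argument to cover arbitrary graphs, you need this extra bookkeeping; item (ii) in your ``main obstacle'' paragraph is precisely the assumption that fails in general. Your final concern about the lower-order term is not a real gap: the paper's own derivation also produces a correction of order $\min\bra{\alpha/\Delta_*,1/\Delta_*^2}\log(C)$ rather than $\log(C)/\Delta_*$, and the statement simply treats $C$ as an absolute constant, so your bookkeeping is no worse than the paper's on this point.
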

\begin{proof}
Let $I=\bra{a_1,\dots, a_\alpha}$ be a maximum independent subset in $G$. We will fix the first $a_1,\dots, a_{S-1}$ arms to be optimal, and construct $\alpha-S+1$ different environments. Specifically, consider an index $u\in\bra{S,S+1,\dots, \alpha}$. To define an environment based on index $u$, let the product reward distribution $P^u = \prod_{a\in[K]}\mathrm{Bern}(\mu_a)$ with
\[
\mu_a = \begin{cases}
    1 & \text{if $a=a_i$ for some $i\in[S-1]$}\\
    \frac{1}{4}+\Delta_* & \text{if $a=a_{S}$}\\
    \frac{1}{4}+2\Delta_*\indic[u>S] & \text{if $u>S$ and $a=a_{u}$}\\
    \frac{1}{4} & \text{else if $a\in I$}\\
    0 & \text{otherwise}
\end{cases}.
\]

Fix any policy $\pi$. Let $\E_u$ (resp. $\mathbb{P}_u$) be the expectation (resp. probability) taken under environment $u$. Suppose $\reg_u(\pi)\le C T^p$ for some constants $C>0$ and $p\in[0,1)$, where $\reg_u(\pi)$ denotes the expected regret of $\pi$ under environment $u$. Denote $N_{i}(T) = \sum_{t=1}^T\indic[i\in V_t]$ where $V_t$ is the decision selected by $\pi$ at time $t$. Denote $N_0 = \sum_{t=1}^T\indic[V_t\cap ([K]\backslash I) \neq \varnothing]$ the number of times an arm outside $I$ is selected. Then for any $i=S+1,\dots, \alpha$,
\begin{align*}
\mathbb{P}_S\parr*{N_i \ge \frac{T}{2}} + \mathbb{P}_i\parr*{N_i < \frac{T}{2}} &\ge \frac{1}{2}\exp\parr*{-\mathsf{KL}(\mathbb{P}_S^{\otimes T}\|\mathbb{P}_i^{\otimes T})}\\
&\stepa{\ge} \frac{1}{2}\exp\parr*{-(\E_S[N_i]+\E_S[N_0])\frac{16\Delta_*^2}{3}}
\end{align*}
where (a) uses the inequality $\mathsf{KL}\parr*{\mathrm{Bern}(p)\|\mathrm{Bern}(q)} \le \frac{(p-q)^2}{q(1-q)}$ and $\Delta_*\in(0,\frac{1}{4}]$. By construction, for $i=S+1,\dots,\alpha$,
\[
\reg_S(\pi) + \reg_i(\pi) \ge \frac{\Delta_* T}{4}\parr*{\mathbb{P}_S\parr*{N_i \ge \frac{T}{2}} + \mathbb{P}_i\parr*{N_i < \frac{T}{2}}} \ge \frac{\Delta_* T}{8}\exp\parr*{-(\E_S[N_i]+\E_S[N_0])\frac{16\Delta_*^2}{3}}
\]
which implies
\[
\E_S[N_i]+\E_S[N_0] \ge \frac{3}{16\Delta_*^2}\log\parr*{\frac{\Delta_* T}{8(\reg_S(\pi)+\reg_i(\pi))}} \ge \frac{3}{16\Delta_*^2}\log\parr*{\frac{\Delta_* T^{1-p}}{16C}}
\]
Then we have
\begin{align*}
\reg_S(\pi) &\ge \Delta_*\sum_{i=S+1}^\alpha\E_S[N_i] + \frac{1}{4}\E_S[N_0]\\
&\ge \Delta_*\sum_{i=S+1}^\alpha\parr*{\E_S[N_i] + \E_S[N_0]} + \parr*{\frac{1}{4}-\Delta_*(\alpha-S)}\E_S[N_0]\\
&\ge \frac{3}{16\Delta_*}\sum_{i=S+1}^\alpha\log\parr*{\frac{\Delta_* T^{1-p}}{16C}} + \parr*{\frac{1}{4}-\Delta_*(\alpha-S)}\E_S[N_0]\\
&\ge \frac{3\alpha}{32\Delta_*}\log\parr*{\frac{\Delta_* T^{1-p}}{16C}}+ \parr*{\frac{1}{4}-\Delta_*\alpha}\E_S[N_0]\numberthis\label{eq:reg_lower_eq2}
\end{align*}
where the last inequality uses $\alpha\ge 2S$. On the other hand, we also have
\begin{equation}\label{eq:reg_lower_eq1}
\reg_S(\pi) \ge \frac{1}{4}\E_S[N_0].
\end{equation}
When $\Delta_* \le \frac{1}{4\alpha}$, \eqref{eq:reg_lower_eq2} directly gives the desired bound since $1-\Delta_*\alpha\ge 0$. When $\Delta_*>\frac{1}{4\alpha}$, we have
\begin{align*}
\reg_S(\pi) &\overset{\eqref{eq:reg_lower_eq2}}{\ge} \frac{3\alpha}{32\Delta_*}\log\parr*{\frac{\Delta_* T^{1-p}}{16C}} - \Delta_*\alpha\E_S[N_0]\\
&\overset{\eqref{eq:reg_lower_eq1}}{\ge} \frac{3\alpha}{32\Delta_*}\log\parr*{\frac{\Delta_* T^{1-p}}{16C}} - \frac{\Delta_*\alpha}{4}\reg_S(\pi).
\end{align*}
Then 
\[
\frac{17\Delta_*\alpha}{4}\reg_S(\pi) \ge \frac{4+\Delta_*\alpha}{4}\reg_S(\pi) \ge  \frac{3\alpha}{32\Delta_*}\log\parr*{\frac{\Delta_* T^{1-p}}{16C}}
\]
which concludes the proof.
\end{proof}

\begin{theorem}
Suppose $K\ge 2S$ and $\Delta_*\le \frac{1}{2}$. For any policy $\pi$, it holds that
\[
\max_\nu \reg_\nu(\pi) = \Omega\parr*{S\min\bra{\Delta_* T,\Delta_*^{-1}}}
\]
where the maximum is taken over all bandit environment $\nu$ and $\reg_\nu(\pi)$ is the expected regret of $\pi$ under the environment $\nu$.
\end{theorem}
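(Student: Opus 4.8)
The plan is to prove this via a mixture (Assouad-type) argument over a family of $2^{S}$ hard instances built from $S$ disjoint two-armed sub-problems, so that the factor $S$ emerges by summing $S$ parallel two-armed lower bounds, with the adaptive allocation of pulls across the sub-problems being the thing that has to be controlled.

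\textbf{Construction and reduction.} Take $2S$ of the $K\ge 2S$ arms, split them into pairs $\{i,S+i\}$ for $i\in[S]$, and give any leftover arm mean $0$. For a sign vector $\sigma\in\{\pm1\}^{S}$, let the environment $\nu_\sigma$ assign, inside pair $i$, mean $\tfrac14+\Delta_*$ to the \emph{good} arm (arm $i$ if $\sigma_i=+1$, arm $S+i$ if $\sigma_i=-1$) and mean $\tfrac14$ to the other \emph{bad} arm; since $\Delta_*\le\tfrac12$ all means lie in $[\tfrac14,\tfrac34]$, and $\mu_S-\mu_{S+1}=\Delta_*$ as required. Fix a policy $\pi$ and put the uniform prior on $\sigma$; we may assume $\pi$ only ever selects the $2S$ designated arms, as swapping an extra arm out of $V_t$ only lowers regret. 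Write $G_\sigma$ for the unique optimal decision and $b_i(\sigma)$ for the bad arm of pair $i$. A one-line computation shows the instantaneous regret of any $V_t$ under $\nu_\sigma$ equals $\Delta_*|V_t\setminus G_\sigma|=\Delta_*\sum_{i\in[S]}\indic[b_i(\sigma)\in V_t]$. Letting $N_a$ be the number of rounds arm $a$ is chosen, $M_i=N_i+N_{S+i}$, and $N_{b_i}=N_{b_i(\sigma)}$, summation over $t$ gives $\reg_{\nu_\sigma}(\pi)\ge\Delta_*\sum_i\E_{\nu_\sigma}[N_{b_i}]$, hence
\[
\E_\sigma\bigl[\reg_{\nu_\sigma}(\pi)\bigr]\ \ge\ \Delta_*\sum_{i=1}^{S}q_i,\qquad q_i:=\E_\sigma\E_{\nu_\sigma}[N_{b_i}],
\]
and note $\sum_i M_i=ST$ deterministically.

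\textbf{Per-pair change of measure.} Fix $i$ and $\sigma_{-i}$ and compare $\nu^{+}$ ($\sigma_i=+1$) with $\nu^{-}$ ($\sigma_i=-1$): these differ only in the reward laws of arms $i$ and $S+i$, so $\mathsf{KL}\bigl(\mathbb{P}_{\nu^{+}}^{\otimes T}\|\mathbb{P}_{\nu^{-}}^{\otimes T}\bigr)\le c_0\Delta_*^{2}\,\E_{\nu^{+}}[M_i]$ for an absolute constant $c_0$. Feeding this into the Bretagnolle--Huber inequality with the event $\{N_{S+i}\ge N_i\}$ (on which the bad arm of $\nu^{+}$ collects at least half of $M_i$, and whose complement does the symmetric thing under $\nu^{-}$) should yield $\tfrac12\bigl(\E_{\nu^{+}}[N_{S+i}]+\E_{\nu^{-}}[N_i]\bigr)\ge c_1\min\{\bar m_i,\Delta_*^{-2}\}$, where $\bar m_i:=\tfrac12(\E_{\nu^{+}}[M_i]+\E_{\nu^{-}}[M_i])$ is the budget placed on pair $i$. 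Separately, the crude identity $N_{b_i}=M_i-N_{g_i}\ge M_i-T$ gives $\tfrac12\bigl(\E_{\nu^{+}}[N_{S+i}]+\E_{\nu^{-}}[N_i]\bigr)\ge \bar m_i-T$. Combining, this quantity is $\ge c_2(T,\Delta_*)\,\bar m_i$ with $c_2(T,\Delta_*)=\Theta\bigl(\min\{1,(\Delta_*^{2}T)^{-1}\}\bigr)$ — a bound \emph{linear} in the pair's budget.

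\textbf{Summation and conclusion.} Averaging the per-pair bound over $\sigma_{-i}$ (the bound being linear in $\bar m_i$, no Jensen loss) and summing over $i$, and using $\sum_i\E_\sigma\E_{\nu_\sigma}[M_i]=ST$, gives $\sum_i q_i\ge c_2(T,\Delta_*)\,ST=\Omega\bigl(S\min\{T,\Delta_*^{-2}\}\bigr)$. Hence $\E_\sigma[\reg_{\nu_\sigma}(\pi)]=\Omega\bigl(\Delta_* S\min\{T,\Delta_*^{-2}\}\bigr)=\Omega\bigl(S\min\{\Delta_*T,\Delta_*^{-1}\}\bigr)$, and since the maximum over the family dominates the average, $\max_\nu\reg_\nu(\pi)=\Omega\bigl(S\min\{\Delta_*T,\Delta_*^{-1}\}\bigr)$, which is the claim (the two regimes $T\lesssim\Delta_*^{-2}$ and $T\gtrsim\Delta_*^{-2}$ of the minimum come out of the two cases of $c_2$).

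\textbf{Main obstacle.} The delicate point — what makes this more than $S$ independent two-armed lower bounds — is that the per-pair budget $M_i$ is chosen \emph{adaptively} inside the shared round-budget of $S$ arms, and may even be erratically distributed across sample paths rather than concentrated near $\bar m_i$. The per-pair inequality is precisely what must absorb this: a starved pair contributes little to its own $q_i$, but then its good arm is missed on $\Omega(T)$ rounds and the resulting regret is forced onto the other pairs via $\sum_i M_i=ST$; an over-pulled pair keeps both its arms in $V_t$ on most rounds and so directly loses $\Omega(M_i-T)$. Making the Bretagnolle--Huber step robust to a non-concentrated $M_i$, and thereby obtaining the clean linear-in-$\bar m_i$ estimate uniformly over all adaptive allocations, is where the real care is needed; the rest is bookkeeping.
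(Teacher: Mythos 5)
Your construction (the $S$ disjoint pairs, the $2^{S}$ sign environments, the observation that the instantaneous regret equals $\Delta_*\cdot|V_t\setminus G_\sigma|$, and the Assouad-style averaging over the signs) is exactly the paper's, but the per-pair change-of-measure step --- which you correctly identify as the crux --- does not go through as sketched, and the tool you chose cannot deliver it. Applying Bretagnolle--Huber once over the whole horizon with $\mathsf{KL}(\mathbb{P}_{\nu^{+}}\|\mathbb{P}_{\nu^{-}})\le c_0\Delta_*^{2}\,\E_{\nu^{+}}[M_i]$ yields a lower bound proportional to $e^{-c_0\Delta_*^{2}\bar m_i}$, which is exponentially small rather than of order $\Delta_*^{-2}/\bar m_i$ once $\bar m_i\gg\Delta_*^{-2}$; so the $\min\{\bar m_i,\Delta_*^{-2}\}$ branch of your claimed per-pair inequality --- precisely the branch that produces the $S/\Delta_*$ term of the theorem --- cannot be extracted from a single global application of the inequality with this KL bound, whatever event you plug in. (In the classical two-armed argument the $\Delta^{-2}$ branch survives because only the \emph{suboptimal} arm's law differs between the two environments, so the KL is charged only to bad-arm pulls; in your symmetric swap both arms of the pair change, and the KL charges all $M_i$ pulls.) On top of this, passing from $\E_{\nu^{+}}[N_{S+i}\mathbf{1}_A]\ge\E_{\nu^{+}}[\tfrac{M_i}{2}\mathbf{1}_A]$ to anything involving $\bar m_i\cdot\mathbb{P}_{\nu^{+}}(A)$ requires factoring an expectation of correlated random variables, which fails exactly when the allocation $M_i$ is adaptive --- the obstacle you name at the end but do not resolve.

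The paper dissolves both difficulties at once with two moves. First, it proves the bound under \emph{full-information} feedback: any policy under weaker feedback induces a full-information policy with identical regret by discarding the extra observations, so a full-information lower bound suffices; and under full information the $(t-1)$-round KL between the two environments is \emph{deterministic}, at most $\tfrac{16(t-1)\Delta_*^{2}}{3}$, independent of the policy's choices --- the adaptive-allocation problem disappears. Second, instead of one global Bretagnolle--Huber, it applies the inequality separately at each round $t$ to the event $\{a_{1,J}\in V_t\}$, giving $\mathbb{P}_{u^{(0)}_{-J}}(a_{1,J}\in V_t)+\mathbb{P}_{u^{(1)}_{-J}}(a_{1,J}\notin V_t)\ge\tfrac12\exp(-\tfrac{16(t-1)\Delta_*^{2}}{3})$, and sums over $t$: the first $\min\{T,\Delta_*^{-2}\}$ rounds each contribute a constant, which is exactly where the minimum in the theorem comes from; multiplying by $S\Delta_*/4$ finishes the proof. (The paper also lower-bounds the regret by $\Delta_*$ times the number of \emph{missed good} arms rather than selected bad arms, which is what makes the per-round event usable.) If you want to keep a semi-bandit, whole-horizon formulation you would need an asymmetric one-arm perturbation plus a per-round or stopping-time change of measure; as written, ``the rest is bookkeeping'' understates a step that is false in the stated form.
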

\begin{proof}
Note it suffices to prove the lower bound for the full-information feedback, since a policy can recover its performance under other feedback by simply discarding observations obtained under full information. Since $K\ge 2S$, we can fix any two disjoint subsets $V_0, V_1\subseteq [K]$ with $|V_0|=|V_1|=S$. Write $V_0=\bra{a_{0,1},\dots,a_{0,S}}$ and $V_1=\bra{a_{1,1},\dots,a_{1,S}}$ respectively. We will construct the bandit environments as follows. For index $u\in\bra{0,1}^S$, define the product reward distribution $P^u = \prod_{a\in[K]}\mathrm{Bern}(\mu_a)$ with
\[
\mu_a = \begin{cases}
    \frac{1}{4}+\Delta_* & \text{if $a=a_{u_j,j}$ for some $j\in[S]$}\\
    \frac{1}{4} & \text{if $a=a_{1-u_j,j}$ for some $j\in[S]$}\\
    0 & \text{otherwise}
\end{cases}.
\]
Let $\mathbb{P}_u$ be the probability taken under environment $u$. Under full information, WLOG, we assume the policy only pulls arms in $V_0\cup V_1$ that have positive rewards. Consider sampling $u\in\bra{0,1}^S$ uniformly. We lower bound the minimax regret by the Bayes regret:
\begin{align*}
\max_u\reg_u(\pi) &\ge 2^{-S}\sum_{u\in\bra{0,1}^S}\E_u\parq*{\Delta_*\sum_{t=1}^T\sum_{j=1}^S\indic[a_{1-u_{j,j}}\in V_t]}\numberthis\label{eq:instance_bayes_reg}
\end{align*}
Now we consider a uniform random variable $J\sim\mathrm{Unif}([S])$. By \eqref{eq:instance_bayes_reg},
\begin{align*}
\max_u\reg_u(\pi) &\ge 2^{-S}\sum_{u\in\bra{0,1}^S}\E_u\parq*{S\Delta_*\sum_{t=1}^T\E_{J\sim\mathrm{Unif}([S])}\parq*{\indic[a_{1-u_{J},J}\in V_t]}}\\
&= S\Delta_*\E_{J\sim\mathrm{Unif}([S])}\parq*{2^{-S}\sum_{u\in\bra{0,1}^S}\E_u\parq*{\sum_{t=1}^T\indic[a_{1-u_{J},J}\in V_t]}}. \numberthis\label{eq:instance_bayes_reg_2}
\end{align*}
For each $J\in[S]$ and $u$, denote $u_{-J}$ as $u$ except the $J$-th entry, $u^{(0)}_{-J}$ as setting $u_J$ to $0$, and $u^{(1)}_{-J}$ as setting $u_J$ to $1$. Let $\Fcal_t$ denote the history up to the beginning of time $t$. Then we have
\begin{align*}
\eqref{eq:instance_bayes_reg_2} &\stepa{\ge} S\Delta_*\E_{J\sim\mathrm{Unif}([S])}\parq*{2^{-S}\sum_{u_{-J}\in\bra{0,1}^{S-1}}\E_{u_{-J}^{(0)}}\parq*{\sum_{t=1}^T\indic[a_{1,J}\in V_t]}+\E_{u_{-J}^{(1)}}\parq*{\sum_{t=1}^T\indic[a_{1,J}\notin V_t]}}\\
&= S\Delta_*\E_{J\sim\mathrm{Unif}([S])}\parq*{2^{-S}\sum_{u_{-J}\in\bra{0,1}^{S-1}}\sum_{t=1}^T\mathbb{P}_{u_{-J}^{(0)}}(a_{1,J}\in V_t|\Fcal_t)+\sum_{t=1}^T\mathbb{P}_{u_{-J}^{(1)}}\parr*{a_{1,J}\notin V_t|\Fcal_t}}\\
&\stepb{\ge} S\Delta_*\E_{J\sim\mathrm{Unif}([S])}\parq*{2^{-S-1}\sum_{u_{-J}\in\bra{0,1}^{S-1}}\sum_{t=1}^T\exp\parr*{-\mathsf{KL}\parr*{\mathbb{P}_{u^{(0)}_{-J}}^{\otimes(t-1)} \big\| \mathbb{P}_{u^{(1)}_{-J}}^{\otimes(t-1)}}}}\\
&\stepc{\ge} S\Delta_*\E_{J\sim\mathrm{Unif}([S])}\parq*{2^{-S-1}\sum_{u_{-J}\in\bra{0,1}^{S-1}}\sum_{t=1}^T\exp\parr*{-\frac{16(t-1)\Delta_*^2}{3}}}\\
&= \frac{S\Delta_*}{4}\sum_{t=1}^T\exp\parr*{-\frac{16(t-1)\Delta_*^2}{3}}\\
&\ge \frac{S\Delta_*}{4}\sum_{t=1}^{\min\bra{T,\Delta_*^{-2}}}\exp\parr*{-\frac{16}{3}} = \Omega\parr*{S\Delta_*\min\bra{T,\Delta_*^{-2}}}.
\end{align*}
(a) uses a key observation that, for fixed $J\in[S]$, when $a_{J,1}$ is optimal (under environment $u_{-J}^{(1)}$) but $a_{J,1}\notin V_t$, $V_t$ must have included a suboptimal arm (not necessarily $a_{J,0}$) and thereby incur regret $\Delta_*$. (b) applies the Bretagnolle–Huber inequality up to time $t$. (c) follows from the chain rule of KL, the inequality $\mathsf{KL}\parr*{\mathrm{Bern}(p)\|\mathrm{Bern}(q)} \le \frac{(p-q)^2}{q(1-q)}$, and $\Delta_*\in[0,\frac{1}{2}]$.
\end{proof}

\subsection{Minimax Regret Upper Bound}

\begin{theorem}[Restatement of \Cref{thm:minimax_reg_upper_bound}]
Fix any $\delta\in(0,1)$. With probability at least $1-\delta$, 
\[
\reg(\mathrm{Alg}~\ref{alg:comb_arm_elim_full}) = O\parr*{\log^2 K\sqrt{\log(2TK/\delta)}\parr*{\sqrt{\alpha ST} + S\sqrt{T}}}.
\]
\end{theorem}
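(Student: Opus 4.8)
The plan is to re-run the layer decomposition from the proof of the instance-dependent bound, but this time controlling the cumulative regret \emph{without} invoking the gap $\Delta_*$, so that the bound degrades gracefully for small gaps. As before, condition on the event of \cref{lem:sto_reward_concentration}, and partition time according to the value $N^t=\min_{a\in\Aact^t}n_{t,a}$. For each value $n\ge 1$, let $L_n$ be the set of suboptimal active arms pulled while $N^t=n$; by \eqref{eq:layer_card_bound} we have $|L_n|\le 100\alpha\log^2 K+S$, and by \cref{lem:alg_properties_graph}(C) together with \eqref{eq:inst_reg_cancellation}, the instantaneous regret incurred at any such round is at most $\sum_{a\in L_n}8w(n)$ — actually, more carefully, each round $t$ with $N^t=n$ contributes at most $(S-|\Acon^{t-1}|)\cdot 8w(n)$, and summing over the rounds where a fixed value $n$ is attained, the total contribution is $O(|L_n|\, w(n))$ because each arm in $L_n$ is pulled $O(1)$ times per increment of $N$. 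The key point distinguishing this from the instance-dependent proof is that now we do \emph{not} truncate the sum over $n$ at $64\log(2KT/\delta)/\Delta_*^2$; instead we let $n$ run up to the largest value the algorithm ever reaches.

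The second ingredient is a global budget argument bounding how large $n$ can get. Each increment of $N$ from $n$ to $n+1$ requires that every active arm has been observed at least once more; in the graph-feedback exploration (Line 6 of \cref{alg:comb_arm_elim_full}), each round observes the out-neighborhoods of the $S-|\Acon|$ chosen arms, and by the greedy dominating-set construction (\cref{lem:greedy_dominating_subset}, \cref{lem:alon15}), $O(\alpha\log^2 K / S + 1)$ rounds suffice to cover all active arms once when the exploration budget is $S-|\Acon^{t-1}|$; here one must be slightly careful because the exploration budget varies. Summing, reaching minimum count $n$ costs at most roughly $n\cdot(\alpha\log^2 K/S + 1)$ rounds per ``phase'' (a phase being a maximal interval on which $i_{*,t}$ is constant, of which there are at most $S$). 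Hence the final value $N^T$ satisfies $N^T = O\!\big(\tfrac{T}{ (\alpha\log^2 K)/S + 1}\big) = O\!\big(\tfrac{ST}{\alpha\log^2 K + S}\big)$ up to the $S$-phase bookkeeping, and more simply $N^T \le T$ always.

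Now combine: the cumulative regret is
\[
\reg(\mathrm{Alg}~\ref{alg:comb_arm_elim_full}) \;=\; O\!\Big((\alpha\log^2 K + S)\sum_{n=1}^{N^T} w(n)\Big)
\;=\; O\!\Big((\alpha\log^2 K + S)\,\sqrt{\log(2KT/\delta)}\,\sqrt{N^T}\Big),
\]
using $\sum_{n=1}^{m} w(n) = \sqrt{\log(2KT/\delta)}\sum_{n=1}^m n^{-1/2} \le 2\sqrt{\log(2KT/\delta)}\sqrt{m}$. Plugging in the budget bound $N^T = O(ST/(\alpha\log^2 K + S))$ yields
\[
\reg \;=\; O\!\Big(\sqrt{\log(2KT/\delta)}\,\sqrt{(\alpha\log^2 K + S)\cdot ST}\Big)
\;=\; O\!\Big(\log^2 K\,\sqrt{\log(2KT/\delta)}\,\big(\sqrt{\alpha ST} + S\sqrt{T}\big)\Big),
\]
since $\sqrt{(\alpha\log^2 K + S)ST} \le \sqrt{\alpha ST}\cdot\log K + \sqrt{S^2 T} = \log K\sqrt{\alpha ST} + S\sqrt{T}$ and $\log K \le \log^2 K$. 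This is exactly the claimed bound.

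\textbf{Main obstacle.} The delicate step is the global budget bound on $N^T$, i.e.\ showing that $O((\alpha\log^2 K)/S + 1)$ exploration rounds suffice to raise the minimum count by one. The complication is that the exploration budget $S - |\Acon^{t-1}|$ shrinks as arms get confirmed, so in the worst case (e.g.\ $|\Acon^{t-1}| = S-1$, leaving a single exploration slot) one explores only one arm's out-neighborhood per round, and the covering cost is $O(\alpha\log^2 K)$ rounds rather than $O(\alpha\log^2 K / S)$. One must argue that this only happens on a sub-horizon where the $S-1$ confirmed arms contribute zero regret (by \cref{lem:alg_properties_graph}(B) and \eqref{eq:inst_reg_cancellation}), so the effective ``combinatorial budget'' for the regret-bearing portion is always matched to the exploration budget; bookkeeping this phase-by-phase against the at-most-$S$ changes of $i_{*,t}$ is where the $S\sqrt{T}$ term genuinely arises and needs the most care.
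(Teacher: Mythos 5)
Your overall strategy coincides with the paper's: decompose the regret into layers indexed by the minimum count $n$, bound the number of active-arm pulls per layer by $O(\alpha\log^2K+S)$ via the greedy dominating-set lemmas, and account for the varying exploration budget phase-by-phase over the at most $S$ values of $|\Acon|$. The final expression you reach is also the correct one. However, there is a genuine gap in the step where you assert the global budget bound $N^{T}=O\big(ST/(\alpha\log^2K+S)\big)$. This is false as a statement about the algorithm's actual trajectory: a layer is completed as soon as the greedy procedure covers $\Acal_0=\{a\in\Aact:n_{t-1,a}=N\}$, and $\delta(G|_{\Acal_0})$ can be as small as $1$ (under full information, or once $\Aact$ has shrunk to a small or easily dominated set), so the minimum count can increment every round and $N^{T}$ can be as large as $T$ even when $\alpha\log^2K\gg S$. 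In that regime your two ingredients are incompatible: the uniform per-layer bound $|L_n|\le 100\alpha\log^2K+S$ is then grossly loose (cheap layers pull few active arms), and multiplying the worst case of one quantity by the worst case of the other yields, e.g., $\alpha\log^2K\sqrt{T}$ rather than $\log K\sqrt{\alpha ST}$ in a semi-bandit instance whose active set collapses to two tied arms.

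The missing idea is a rearrangement/maximization argument coupling the per-layer regret to the per-layer round count, which is exactly how the paper closes this hole. One writes the regret as $\sum_s\sum_n T(n)(S-s)\min\{1,w(n)\}$ subject to $T(n)(S-s)\le\max\{100\alpha\log^2K,\,S-s\}$ and $\sum_nT(n)=t_{s+1}-t_s$, and observes that since $w(n)$ is decreasing, the expression is maximized by filling the early layers to capacity, i.e.\ $T(n)=\overline{T}_s$; only for this extremal profile does the number of layers obey $N_*^{t_S}\le\sum_s(S-s)(t_{s+1}-t_s)/(100\alpha\log^2K)\le ST/(100\alpha\log^2K)$ (in the case $\alpha\ge S$, with the complementary case handled separately). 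Equivalently, you could note that $\sum_n|L_n|\le ST$ always holds and invoke monotonicity of $w$ to get $\sum_n|L_n|w(n)\le B\sum_{n\le ST/B}w(n)$ with $B=100\alpha\log^2K+S$. Either form of the argument repairs your proof and delivers the bound you state; without it, the derivation as written is unsound even though the answer is right. Your "main obstacle" paragraph correctly flags the shrinking budget $S-|\Acon^{t-1}|$, but that is a separate (and more benign) issue than the one above.
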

\begin{proof}
Consider the instantaneous regret $\sum_{i=1}^S\mu_i - \sum_{a\in V_t}\mu_a$ at time $t$. By \eqref{eq:inst_reg_cancellation}, the instantaneous regret comes from arms in the active set and can be bounded as:
\begin{equation}\label{eq:inst_reg_bound_by_Acon}
\sum_{i=1}^S\mu_i - \sum_{a\in V_t}\mu_a \le \sum_{a\in V_t\cap\Aact^t}(\mu_{i_t} - \mu_a) 
\le 8|V_t\cap\Aact^t|\cdot w(N^t)
= 8\parr*{S-|\Acon^t|}w(N^t)
\end{equation}
where the second inequality follows from \cref{lem:alg_properties_graph}.
Since $|\Acon^t|$ is increasing in $t$ and bounded by $S-1$, we partition the horizon $[T]$ by 
\[
1=t_0\le t_1 \le \cdots \le t_S=T+1.
\]
For each $s=0,1,\dots,S-1$, during the sub-horizon $t\in[t_{s}, t_{s+1})$, it holds that $|\Acon^t| = s$. Denote $T(n)$ to be the number of times when the minimum count $N^t = n$, and recall that $N^{t_s}$ is the minimum count at the time $\Acon^t$ is updated. Then by \eqref{eq:inst_reg_bound_by_Acon}, the regret is bounded by
\begin{equation}\label{eq:minimax_regret_decompose1}
\reg(\mathrm{Alg}~\ref{alg:comb_arm_elim_full}) \le 8\sum_{s=0}^{S-1}\sum_{n=N^{t_s}}^{N^{t_{s+1}}-1}T(n)(S-s)\min\bra{1,w(n)}.
\end{equation}
Similar to \eqref{eq:layer_card_bound}, we now bound the number of times when $N^t=n$, i.e. $T(n)$, by bounding the number of active arms pulled during these times. Recall that in \cref{alg:comb_arm_elim_full}, we construct $V_t$ by repeatedly choosing the arm with the most out-neighbors among the least observed arms $\Acal_0$ and removing its neighbors. By \cref{lem:greedy_dominating_subset} and \ref{lem:alon15}, the active arms we have chosen before observing every arm in $\Acal_0$, while $N^t=n$, can be bounded by
\begin{equation}\label{eq:layer_card_bound2}
T(n)(S-|\Acon^t|) \le \max\bra*{2\log K \max_{A\subseteq [K]}\delta(G|_A) , S-|\Acon^t|} \le \max\bra*{100\alpha\log^2 K , S-|\Acon^t|}
\end{equation}
where $\delta(G|_A)$ is the dominating number of the subgraph $G$ restricted to $A$, as defined in \eqref{eq:def_dom_number}. The maximum comes from the fact that we experience at least one time step before updating the minimum count $N^t$ (i.e. $T(n)\ge 1$).
Then during each sub-horizon $t\in[t_s,t_{s+1})$, as $|\Acon^t|=s$, we have 
\begin{equation}\label{eq:Tn_bound}
T(n) \le \overline{T}_s \coloneqq \ceil{\frac{100\alpha\log^2 K}{S-s}}
\end{equation}
for $n=N^{t_s},\dots,N^{t_{s+1}}-1$. On the other hand, by definition, we also have the following constraints. For each $s=0,1,\dots,S-1$,
\begin{equation}\label{eq:Nt_constraint}
\sum_{n=N^{t_s}}^{N^{t_{s+1}}-1}T(n) = t_{s+1}-t_s.
\end{equation}

To bound the regret decomposition in \eqref{eq:minimax_regret_decompose1}, we consider the maximization of the right-hand side over the possible values of $0=N^{t_0}\le N^{t_1}\le\cdots \le N^{T_S}$ subject to the constraints \eqref{eq:Tn_bound} and \eqref{eq:Nt_constraint}, for any given partition $\bra{t_s: s=0,1,\dots,S}$. The maximum is achieved (or upper bounded, if the sub-horizon $t_{s+1}-t_s$ is not divisible by $\overline{T}_s$) by setting
\begin{equation}\label{eq:regret_rhs_maximizer}
N^{t_{s+1}}_* = N^{t_s}_* + \ceil{\frac{t_{s+1}-t_s}{\overline{T}_s}},\quad\quad\text{and }T(n)=\overline{T}_s
\end{equation}
where $\overline{T}_s$ is the upper bound described in \eqref{eq:Tn_bound}. To see this, if $T(n)<\overline{T}_s$ for some ``earlier" $n$ at the $s$-th sub-horizon, the right-hand side of \eqref{eq:minimax_regret_decompose1} can be made larger by increasing $T(n)$ to $\overline{T}_s$ and decreasing $T(N^{t_{s+1}-1})$ to satisfy \eqref{eq:Nt_constraint}, because the multiplicative factor $w(n)$ is decreasing in $n$. Namely, the target quantity in \eqref{eq:minimax_regret_decompose1} is larger if we fill in $T(n)$ for small $n$ first.

Consequently, the regret in \eqref{eq:minimax_regret_decompose1} is bounded as
\begin{align*}
\reg(\mathrm{Alg}~\ref{alg:comb_arm_elim_full})
&\le 8\sum_{s=0}^{S-1}\sum_{n=N_*^{t_s}}^{N_*^{t_{s+1}}-1}\overline{T}_s(S-s)\min\bra{1,w(n)}\\
&\stepa{\le} 8S\overline{T}_0 + L\sum_{s=0}^{S-1}\overline{T}_s(S-s)\parr*{\sqrt{N_*^{t_{s+1}}-1} - \sqrt{N_*^{t_s}-1}}\numberthis\label{eq:regret_layered_decompose}
\end{align*}
where (a) uses $\min\bra{1,w(0)}\le 1$, the elementary inequality $\sum_{t=i}^j1/\sqrt{t} \le 2\parr*{\sqrt{j}-\sqrt{i-1}}$, and for the width factor $L=16\sqrt{\log(2KT/\delta)}$. Now we distinguish into two cases.

First suppose $\alpha \ge S$, so 
\begin{equation*}
C_0\frac{\alpha\log^2 K}{S-s} \le \overline{T}_s \le C_1\frac{\alpha\log^2 K}{S-s}
\end{equation*}
for some absolute constants $C_0, C_1>0$. Then following \eqref{eq:regret_layered_decompose}, we have $C_0\alpha\log^2 K \le \overline{T}_s(S-s) \le C_1\alpha\log^2 K$ for every $s=0,1,...,S-1$ and
\begin{align*}
\reg(\mathrm{Alg}~\ref{alg:comb_arm_elim_full})
&\le 8C_1\alpha\log^2 K + C_1L\alpha\log^2 K \sqrt{N_*^{t_S}-1}\\
&\stepb{\le} 8C_1\alpha\log^2 K  + C_1L\alpha\log^2 K \sqrt{\sum_{s=0}^{S-1}\frac{S-s}{C_0\alpha\log^2 K}(t_{s+1}-t_s)}\\
&\le 8C_1\alpha\log^2 K + \frac{C_1}{\sqrt{C_0}}L\log K\sqrt{\alpha S(T+1)}
\end{align*}
where (b) plugs in the definition of $N_*^{t_S}$ in \eqref{eq:regret_rhs_maximizer}, and the last line simply bounds $S-s\le S$ and $t_S=T+1$. 

When $\alpha<S$, we have $\overline{T}_s(S-s) \le C_2S\log^2 K$ for an absolute constant $C_2>0$ for every $s=0,1,...,S-1$ and also $\overline{T}_s \ge 1$.
Then \eqref{eq:regret_layered_decompose} gives
\begin{align*}
\reg(\mathrm{Alg}~\ref{alg:comb_arm_elim_full})
&\stepc{\le} 8(100\alpha\log^2 K+S) + C_2LS\log^2K\sqrt{N_*^{t_S}-1}\\
&= 8(100\alpha\log^2 K+S) + C_2LS\log^2 K\sqrt{\sum_{s=0}^{S-1}\ceil{\frac{t_{s+1}-t_s}{\overline{T}_s}}}\\
&\le 8(100\alpha\log^2 K+S) + C_2LS\log^2 K\sqrt{\sum_{s=0}^{S-1}(t_{s+1}-t_s)}\\
&= 8(100\alpha\log^2 K+S) + C_2L S\sqrt{T}\log^2 K
\end{align*}
where (c) substitutes in $\overline{T}_s(S-s) \le C_2\log^2 KS$ and applies the telescoping sum.
\end{proof}

\subsection{Suboptimality Results for UCB}\label{app:suboptimal_ucb}

\begin{theorem}[Restatement of \Cref{thm:ucb_fail}]
Fix any $(S,\alpha,K,T)$ with $S\alpha\le K$ and $\alpha >1$. There is a problem instance under which 
\[
\reg(\mathrm{Alg}~\ref{alg:ucb}) = \Omega(LS\sqrt{\alpha T})
\]
where $L$ is the width parameter used in \cref{alg:ucb}.
\end{theorem}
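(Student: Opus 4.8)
The plan is to construct one bad instance and lower-bound $\reg(\mathrm{Alg}~\ref{alg:ucb})$ on it; throughout, $\Delta$ is a free gap parameter that I would optimize at the very end to $\Delta\asymp L\sqrt{\alpha/T}$ (admissible once $T\gtrsim L^2\alpha$, the only regime in which the claim is non-trivial). For the feedback graph I take $\alpha$ ``bundles'' $B_1,\dots,B_\alpha$, each a clique on $\ge S$ arms in which every arm has an out-edge to the whole bundle; the $K-S\alpha\ge 0$ leftover arms are distributed among the bundles, so the number of bundles --- hence the independence number --- is exactly $\alpha$ (at most one arm per bundle can lie in an independent set, and a transversal of the bundles realizes $\alpha$), and the vertex count is $\le K$, which is where $S\alpha\le K$ enters. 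The rewards are: inside each $B_j$ one arm $b_j^\star$, chosen by a uniformly random hidden configuration, has mean $\tfrac12+\Delta$ and every other arm has mean $\tfrac12$; the hindsight-optimal decision each round takes $b_j^\star$ from $S$ of the bundles, for optimal per-round reward $\tfrac S2+S\Delta$. The key feature of the design is that, although the independence number is $\alpha$, a pull anywhere inside a bundle reveals that whole bundle and nothing else, so the $S$ arms of a single bundle carry only ``one bundle's worth'' of feedback.

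The heart of the argument (Step 2) is to show that \cref{alg:ucb} is forced to explore one bundle per round. After the $\le\lceil K/S\rceil$ warm-up rounds I would argue that in every round $t$ of the ensuing exploration the $S$ largest upper-confidence indices belong to a single ``active'' bundle, so $V_t$ is exactly that bundle and the round contributes one observation to each of its arms and nothing else. Granting this, pinning down the position of $b_j^\star$ inside a bundle with constant probability requires $\Omega(L^2/\Delta^2)$ observations of each of its arms --- a Bretagnolle--Huber / change-of-measure bound against the instances that move the hidden mark within the bundle, exactly as in the proof of \Cref{thm:gap_dependent_reg_lower_bound} --- and, because an unresolved bundle's arms have confidence radius $\gtrsim\Delta$ and thus outrank the good arm of any already-resolved bundle, \cref{alg:ucb} must cycle through all $\alpha$ bundles before it will exploit any of them. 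Hence it spends $\Omega(\alpha L^2/\Delta^2)$ exploration rounds, and in each such round $V_t$ is a whole bundle and so contains $\ge S-1$ mean-$\tfrac12$ arms, giving instantaneous regret $\ge (S-1)\Delta$.

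Putting the pieces together, $\reg(\mathrm{Alg}~\ref{alg:ucb})=\Omega(S\Delta)\cdot\Omega(\alpha L^2/\Delta^2)=\Omega(\alpha SL^2/\Delta)$; since the exploration must fit inside the horizon, $\alpha L^2/\Delta^2=O(T)$, so the smallest admissible gap is $\Delta\asymp L\sqrt{\alpha/T}$, and substituting it yields $\reg(\mathrm{Alg}~\ref{alg:ucb})=\Omega(LS\sqrt{\alpha T})$. To phrase this as ``there is a problem instance'' one averages over the hidden configuration and applies the change-of-measure inequality to the fixed policy \cref{alg:ucb}, just as in \Cref{thm:gap_dependent_reg_lower_bound}.

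The step I expect to be the real obstacle is the middle one: a priori nothing prevents the top $S$ upper-confidence indices from being spread across several bundles, in which case \cref{alg:ucb} would harvest feedback from several bundles per round and recover the near-optimal $\sqrt{\alpha ST}$ rate instead. Ruling this out means controlling, uniformly over the whole run, the order statistics of the bundle arms' empirical means and confidence radii --- in particular ensuring that no two bundles simultaneously supply top-$S$ indices. I anticipate this will require a small deterministic perturbation of the bundle baselines (a $\pm o(\Delta)$ staircase across bundles, so that a unique ``leading'' bundle is always singled out) together with an induction maintaining a well-defined ``active bundle''; once that structural fact is secured, the warm-up count, the concentration event of \cref{lem:sto_reward_concentration}, and the change-of-measure step are all routine.
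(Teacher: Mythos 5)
Your high-level plan matches the paper's: build a feedback graph out of $\alpha$ cliques of size $\ge S$, argue that \cref{alg:ucb} is forced to place all $S$ arms of $V_t$ inside a single clique each round (so it harvests only one clique's worth of feedback per round instead of $S$), and convert the resulting slowdown into a factor-$S$ loss with $\Delta \asymp L\sqrt{\alpha/T}$. The final accounting ($\Omega(\alpha L^2/\Delta^2)$ forced exploration rounds, each costing $\Omega(S\Delta)$) is also the same as the paper's. But the step you yourself flag as ``the real obstacle'' is exactly where your construction breaks, and the fix you anticipate cannot work. With stochastic rewards, on the concentration event an arm with $n$ observations has UCB index anywhere in $[\mu_a,\,\mu_a+2L/\sqrt{n}]$; the empirical means fluctuate by $\Theta(L/\sqrt{n})$, which dwarfs any deterministic $o(\Delta)=o(L\sqrt{\alpha/T})$ staircase until $n=\Omega(T/\alpha)$, i.e.\ for essentially the whole run. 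So nothing prevents the top-$S$ UCB indices from interleaving across several bundles, in which case UCB observes several bundles per round and your lower bound collapses (as you note, it would then recover the near-optimal rate). Relatedly, your claim that an unresolved bundle's arms ``outrank'' a resolved bundle's good arm requires a \emph{lower} bound on the unresolved arms' UCB indices of the form $\tfrac12+\Omega(L/\sqrt{n})$, but concentration only gives $\ge \tfrac12$, which does not beat $\tfrac12+\Delta$; and the change-of-measure step is mismatched to the task, since UCB never needs to ``identify'' $b_j^\star$ --- you need to track what the specific index policy does, not what any consistent policy must do.

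The paper's construction removes all of this by making the rewards \emph{deterministic}: every arm in clique $k$ has reward identically $\tfrac12+\indic[k=1]\Delta-(k-1)\varepsilon$ (so the optimal decision is the single clique $V_1$, not marked arms spread over $S$ bundles). Then every arm in clique $k$ has UCB index exactly $\mu_k+L/\sqrt{n_k}$, the staircase $\varepsilon$ makes these values distinct across cliques, and the one-clique-per-round property is immediate rather than a delicate order-statistics claim; the entire trajectory is deterministic. The pull counts $n_{t_1,k}\ge T/(2(\alpha+\tfrac{1}{1-c}))$ are then read off directly from the UCB comparison at the last round $V_1$ is selected --- no Bretagnolle--Huber, no averaging over hidden configurations. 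To repair your proof you would essentially have to import this derandomization, which is the main idea of the construction and is absent from your proposal; as written, the argument has a genuine gap.
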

\begin{proof}
Partition $[K]$ into subsets $V_1,V_2,\dots,\overline{V}_{\alpha}$, with $|V_1|=|V_2|=\cdots=|V_{\alpha-1}|=S$ and $|\overline{V}_\alpha|=K-S(\alpha-1)\ge S$. The feedback graph $G=([K],E)$ is defined by having each $V_k$ and $\overline{V}_{\alpha}$ be a clique respectively, for $k\in[\alpha-1]$. To analyze the trajectory of the UCB algorithm, we consider a \textit{deterministic} reward for each arm. Specifically, fix any subset $V_\alpha\subseteq \overline{V}_\alpha$ with size $|V_\alpha|=S$. For each $a\in V_k$ for $k=1,2,\dots,\alpha$, we define a shared, deterministic reward
\begin{equation}\label{eq:ucb_lower_bound_reward}
r_{t,a}\equiv \frac{1}{2}+\indic[i=1]\Delta-(k-1)\varepsilon
\end{equation}
for some small $\Delta,\varepsilon>0$ to be specified later, and let $r_{t,b} \equiv 0$ for any $b\in\overline{V}_{\alpha}\backslash V_\alpha$. These ``extra" arms $\overline{V}_{\alpha}\backslash V_\alpha$ are clearly suboptimal and, as we will argue next, are never pulled for more than once. It is clear that the optimal combinatorial decision is $V_1$, and 
\begin{equation}\label{eq:ucb_lower_bound_gap}
\sum_{i\in V_1}\mu_i - \sum_{a\in V_k}\mu_{a} = S\parr*{\Delta + (k-1)\varepsilon} \ge S\Delta
\end{equation}
for any other $k=2,3,\dots,\alpha$. 

We have a few key observations. First, under any tie-breaking rule and this constructed feedback graph $G$, \cref{alg:ucb} initializes every arm to exactly one observation, i.e. there is a time $t_0 \le \alpha$ such that $n_{t_0,a}=1$ for every $a\in[K]$. To see why, note that given the graph structure, an arm $a\in V_k$ has $n_{t,a}=0$ if and only if $n_{t,b}=0$ for all $b\in V_k$, at any time $t$. Meanwhile, there are at least $S$ arms in each $V_k$, so the algorithm never chooses any arm $c$ with $n_{t,c}>0$ whenever there is an arm $a\in[K]$ with $n_{t,a}=0$. These imply the existence of the time $t_0$, at which every arm is observed exactly once. The existence of such $t_0$, together with the deterministic rewards, allows us to precisely describe the algorithm's trajectory thereafter.

Due to the feedback structure of $G$, if two arms $a,b\in V_k$ or $\overline{V}_\alpha$ are in the same clique, then $n_{t,a}=n_{t,b}$ for all $t$. Thus we are guaranteed the following:
\begin{itemize}
    \item The algorithm never pulls any arm from $\overline{V}_\alpha\backslash V_\alpha$, since $V_\alpha\subseteq \overline{V}_\alpha$ are the $S$ arms with (always) higher UCB (i.e. $V_\alpha$ share the same observation counts but have higher observed empirical rewards than $\overline{V}_\alpha\backslash V_\alpha$).

    \item The algorithm will only pull one of the cliques $\bra{V_1, V_2,\dots, V_\alpha}$ at each time, due to the deterministic choice of rewards in \eqref{eq:ucb_lower_bound_reward} and if the reward is distinct for each clique, i.e. $r_{t,a}>r_{t,b}$ if $a\in V_i$ and $b\in V_j$ for $i<j$.
\end{itemize}

Before proceeding to the exact proof, we give an intuition of why UCB fails under the constructed problem instance above. As demonstrated above, the selection rule of UCB guarantees that only one of the $\alpha$ cliques will be selected at each time $t$, due to the lack of explicit exploration. The number of observations collected is always $S$ at each time. In contrast, our proposed elimination algorithm (\cref{alg:comb_arm_elim_full}) explicitly leverages the fact that the sub-optimality gap of every clique $V_k$ is of the same order, so it will be forced to distribute the $S$ selection budget over different cliques, paying an instantaneous regret of the same order as selecting only one of the cliques. This is beneficial under graph feedback, as now the number of observations collected is $\min\bra{S^2,K}$ i.e. every arm is observed. The speed of information collection is roughly $S$ times faster, shaving the regret by a factor of $\sqrt{S}$. At a high level, this example highlights that the implicit exploration in UCB is insufficient to optimally exploit general graph feedback.

Now we rigorously present the lower bound proof. The exploration-exploitation trade-off is captured by the fact that UCB needs to pull other cliques to realize that they are suboptimal, which incurs sub-optimal regret crucially because UCB always focuses on only one of the cliques. Let $t_1$ denote the last time at which $V_1$ is pulled, and $n_{t_1,1}\le t_1$ denote the total number of times $V_1$ is pulled. Due to the reward gap described in \eqref{eq:ucb_lower_bound_gap}, we have
\begin{equation}\label{eq:ucb_reg_lower_1}
\reg(\mathrm{Alg}~\ref{alg:ucb}) \ge (T-n_{t_1,1})S\Delta.
\end{equation}
In addition, since $V_1$ has the largest UCB at time $t_1$, for every $k=2,3,\dots,\alpha$, the other cliques have a smaller UCB:
\begin{align*}
\frac{1}{2} + \Delta + \frac{L}{\sqrt{n_{t_1,1}}} &\ge \frac{1}{2} - (k-1)\varepsilon + \frac{L}{\sqrt{n_{t_1,k}}} \\
&\ge \frac{1}{2} - \alpha\varepsilon + \frac{L}{\sqrt{n_{t_1,k}}}\numberthis\label{eq:last_ucb_choice}
\end{align*}
where $n_{t_1,k}$ denotes the observation count of any arm of the clique $V_k$ at time $t_1$, since they are the same. Recall that the observation count of a clique $V_k$ equals the number of times it gets pulled.

Under our graph structure, $T-n_{t_1,1} = \sum_{k=2}^\alpha n_{t_1,k}$. So \eqref{eq:ucb_reg_lower_1} translates to
\begin{equation}\label{eq:ucb_reg_lower_2}
\reg(\mathrm{Alg}~\ref{alg:ucb}) \ge \sum_{k=2}^\alpha S\Delta n_{t_1,k}.
\end{equation}
For the sake of simplicity, we write $T-n_{t_1,1} = cT$ for some $c\in[0,1]$. Choose $\Delta = L\sqrt{\alpha/(4T)}$ and $\varepsilon = L\sqrt{1/(4\alpha T)}$. Then we have $\Delta + \alpha\varepsilon = L\sqrt{\alpha/T}$. By \eqref{eq:last_ucb_choice} and some algebra,
\begin{align*}
L\parr*{\sqrt{\frac{\alpha}{T}} + \sqrt{\frac{1}{(1-c)T}}} \ge L\sqrt{\frac{1}{n_{t_1,k}}}
\end{align*}
for every $k=2,\dots,\alpha$, which implies
\begin{equation*}
 n_{t_1,k} \ge \frac{T}{2(\alpha + \frac{1}{1-c})}.
\end{equation*}
Then \eqref{eq:ucb_reg_lower_2} becomes
\begin{align*}
\reg(\mathrm{Alg}~\ref{alg:ucb}) &\ge LS\sqrt{\alpha T}\frac{\alpha-1}{4(\alpha+\frac{1}{1-c})}\ge LS\sqrt{\alpha T}\frac{\alpha}{8(\alpha+\frac{1}{1-c})}.\numberthis\label{eq:ucb_reg_lower_3}
\end{align*}
Meanwhile, \eqref{eq:ucb_reg_lower_1} gives a trade-off lower bound
\begin{equation}\label{eq:ucb_reg_lower_4}
\reg(\mathrm{Alg}~\ref{alg:ucb}) \ge \frac{cL}{2}S\sqrt{\alpha T}.
\end{equation}
When $c \in [0,\frac{1}{2}]$, \eqref{eq:ucb_reg_lower_3} gives
\begin{align*}
\reg(\mathrm{Alg}~\ref{alg:ucb}) &\ge LS\sqrt{\alpha T}\frac{\alpha}{8(\alpha + 2)}\ge \frac{L}{16}S\sqrt{\alpha T}\frac{\alpha}{\alpha+1} \ge \frac{L}{32}S\sqrt{\alpha T}
\end{align*}
and when $c\in[\frac{1}{2},1]$, \eqref{eq:ucb_reg_lower_4} gives
\[
\reg(\mathrm{Alg}~\ref{alg:ucb}) \ge \frac{L}{4}S\sqrt{\alpha T}.
\]
Since they hold simultaneously, the regret $\reg(\mathrm{Alg}~\ref{alg:ucb})$ is lower bounded by the maximum of the two, which concludes the proof.
\end{proof}

\subsection{UCB Regret Upper Bound under Full Information}\label{app:ucb_full_info}
\begin{theorem}
Suppose we have full-information feedback. Let $L=\sqrt{\log(2KT/\delta)}$. With probability at least $1-\delta$,
\[
\reg(\mathrm{Alg}~\ref{alg:ucb}) = O(S\sqrt{\log(2KT/\delta)T}).
\]
\end{theorem}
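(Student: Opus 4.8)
The plan is to exploit the fact that full-information feedback synchronizes the observation counts across all arms, which collapses the UCB exploration bonus to a quantity that is identical for every feasible decision, thereby reducing \cref{alg:ucb} to a greedy ``play the empirically best $S$ arms'' rule. I would first record that under full information $\Nout(V_t)=[K]$ for any $V_t$, so at the end of round $t$ we have $n_{t,a}=t$ for every arm $a\in[K]$. Hence for each $t\ge 2$ the objective maximized in the selection step of \cref{alg:ucb} is $\sum_{a\in v}\br_{t-1,a}+SL/\sqrt{t-1}$, where the additive term $SL/\sqrt{t-1}$ does not depend on the choice of $v$; consequently $V_t\in\argmax_{|v|=S}\sum_{a\in v}\br_{t-1,a}$, i.e., $V_t$ is the set of $S$ arms with largest empirical means.

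Next I would condition on the event of \cref{lem:sto_reward_concentration}, which holds with probability at least $1-\delta$ and yields $\abs{\br_{t-1,a}-\mu_a}\le w(t-1)=L/\sqrt{t-1}$ for all arms $a$ and all $t\ge 2$. To bound the instantaneous regret at round $t\ge 2$, I would insert $\pm\sum_{i\in[S]}\br_{t-1,i}$ and $\pm\sum_{a\in V_t}\br_{t-1,a}$ to obtain the decomposition
\[
\sum_{i=1}^S\mu_i-\sum_{a\in V_t}\mu_a=\sum_{i=1}^S(\mu_i-\br_{t-1,i})+\Big(\sum_{i=1}^S\br_{t-1,i}-\sum_{a\in V_t}\br_{t-1,a}\Big)+\sum_{a\in V_t}(\br_{t-1,a}-\mu_a).
\]
The middle bracket is $\le 0$ because $[S]$ is a feasible decision in the empirical maximization solved by $V_t$, and the two remaining sums are each at most $SL/\sqrt{t-1}$ by the concentration event, so the per-round regret is at most $2SL/\sqrt{t-1}$.

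Finally I would bound round $t=1$ crudely by $S$ (rewards lie in $[0,1]$, and this also absorbs the degenerate case where the bonus $L/\sqrt{0}$ and hence the tie-breaking is undefined in that round), and sum the per-round bounds using $\sum_{t=2}^T 1/\sqrt{t-1}\le 2\sqrt{T}$:
\[
\reg(\mathrm{Alg}~\ref{alg:ucb})\le S+\sum_{t=2}^T \frac{2SL}{\sqrt{t-1}}\le S+4SL\sqrt{T}=O(SL\sqrt{T}),
\]
which is $O(S\sqrt{\log(2KT/\delta)\,T})$ once $L=\sqrt{\log(2KT/\delta)}$ is substituted. I do not expect a genuine obstacle here: the whole argument rests on the single observation that full information makes all per-arm bonuses equal across arms and hence irrelevant to the argmax, and the only delicate points are handling the degenerate first round and verifying that the counts are indeed synchronized, both of which are routine.
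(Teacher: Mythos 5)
Your proposal is correct and follows essentially the same route as the paper's proof: both hinge on the observation that full information synchronizes the counts $n_{t-1,a}=t-1$ across arms (so the UCB bonus is arm-independent and the selection reduces to the empirically best $S$ arms), then apply \cref{lem:sto_reward_concentration} and sum $\sum_{t}1/\sqrt{t-1}=O(\sqrt{T})$. The only difference is cosmetic bookkeeping: the paper bounds the instantaneous regret by a per-index pigeonhole matching of the $i$-th selected arm against arm $i$, whereas you use the add-and-subtract decomposition whose middle term is nonpositive by optimality of $V_t$; both yield the same $2SL/\sqrt{t-1}$ per-round bound.
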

\begin{proof}
WLOG, suppose the event in the concentration result \cref{lem:sto_reward_concentration} holds. The number of observations for every arm $a\in[K]$ at time $t$ is $t-1$ under full-information. Fix any $i\in[S]$. Consider the $i$-th arm $a_{t,(i)}$ with the largest UCB (if there is a tie, consider the selected ones with arbitrary ordering) and compare with the arm $i$. Suppose $a_{t,(i)}> i$. By the pigeonhole principle, there exists an arm $j\le i$ such that $\br_{t,(i)} \ge \br_{t,j}$. Then by \cref{lem:sto_reward_concentration},
\begin{align*}
\mu_{a_{t,(i)}} \ge \br_{t,(i)} - \frac{L}{\sqrt{t-1}} \ge \br_{t,j} - \frac{L}{\sqrt{t-1}} \ge \mu_j - 2\frac{L}{\sqrt{t-1}} \ge \mu_i - 2\frac{L}{\sqrt{t-1}}.
\end{align*}
The instantaneous regret at time $t$ is thus bounded by
\[
\sum_{i=1}^S\mu_i - \sum_{a\in V_t}\mu_a \le \sum_{i=1}^S\indic[a_{t,(i)}> i]\parr*{\mu_i - \mu_{a_{t,(i)}}} \le 2\sum_{i=1}^S\indic[a_{t,(i)}> i]\frac{L}{\sqrt{t-1}} \le 2S\frac{L}{\sqrt{t-1}}.
\]
Then the cumulative regret is easily bounded as
\begin{align*}
\reg(\mathrm{Alg}~\ref{alg:ucb}) \le 2S\sum_{t=1}^T \min\bra*{1,\frac{L}{\sqrt{t-1}}} \le 4S\sqrt{\log(2KT/\delta)T}+2S.
\end{align*}
\end{proof}

\section{Combinatorial Bandits with Linear Contexts}
This section provides proofs for the results in \cref{sec:lin_context}.

\subsection{Proof of \cref{lem:alg_properties_lin}}

\begin{lemma}[Restatement of \cref{lem:alg_properties_lin}]
Suppose the event in \cref{lem:lin_width} holds for every $t\in[T]$ and stage $h\in[H]$. For each time $t$, the following events holds in \cref{alg:master_alg} for each stage $h\in[H]$:
\begin{enumerate}[(A)]
    \item The top $S-|V_t^{(h-1,3)}|$ arms of $V_{*,t}\backslash V_t^{(h-1,3)}$ are in $A_{h}$, i.e. remain uneliminated.

    \item The confirmed arms are optimal: $U^{(h,2)}_t\subseteq V_{*,t}\backslash V_t^{(h,1)}$.

    \item Let $i_{*,t}^{(h)} = \argmin\bra{\theta_*^\top x_{t,a}: a\in A_h}$ denote the optimal active arm. For every $a\in A_h$, it holds that
    \[
    \theta_*^\top x_{t,i^{(h)}_{*,t}} - \theta_*^\top x_{t,a} \le 16\cdot 2^{-h}.
    \]
\end{enumerate}
\end{lemma}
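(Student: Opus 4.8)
\emph{Setup and induction.} All three assertions are internal to a single round $t$, so—in contrast with the noncontextual \cref{lem:alg_properties_graph}, which is proved by induction on $t$—the plan is to fix $t$, condition on the event of \cref{lem:lin_width} holding at every stage $h\in[H]$, and induct on the stage index $h=1,\dots,H$, establishing (A), (B), (C) jointly. The base case $h=1$ is immediate: $A_1=[K]$ and $V_t^{(0,3)}=\varnothing$ make (A) trivial, (C) holds since all rewards lie in $[-1,1]$ so any gap is at most $2\le 16\cdot 2^{-1}$, and (B) follows from the inductive argument below. Two observations are used throughout. First, step~(1) guarantees that whenever $V_t$ is not yet full after that step, every $a\in A_h\setminus V_t^{(h,1)}$ has $w^{(h)}_{t,a}\le 2^{-h}$, hence $|\widehat r^{(h)}_{t,a}-\theta_*^\top x_{t,a}|\le 2^{-h}$ and its UCB $\widehat r^{(h)}_{t,a}+w^{(h)}_{t,a}$ lies in $[\theta_*^\top x_{t,a},\,\theta_*^\top x_{t,a}+2\cdot 2^{-h}]$ (if $V_t$ \emph{is} full after step~(1) the remaining steps of the stage are vacuous). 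Second, the remaining budget $S-|V_t|$ shrinks nondeterministically within the round: step~(1) may add arms that are \emph{not} optimal, while step~(2) adds only confirmed—hence, by (B), optimal—arms; so tracking ``the top-(remaining budget)-many arms of $V_{*,t}\setminus V_t$'' through steps~(1)--(2) of a stage always yields a sub-family of the previously tracked one, and these arms stay in $A_h$ as long as they stay active.

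\emph{Claim (B) at stage $h$.} Suppose some $a\in U^{(h,2)}_t$ has $a\notin V_{*,t}$. Then all $S$ arms of $V_{*,t}$ have true reward at least $\theta_*^\top x_{t,a}$; at least $S-|V_t^{(h,1)}|$ of them lie outside $V_t^{(h,1)}$, and by (A) for stage $h$ together with the budget bookkeeping they lie in $A_h\setminus V_t^{(h,1)}$. Each such $a'$ has UCB at least $\theta_*^\top x_{t,a'}\ge\theta_*^\top x_{t,a}\ge\widehat r^{(h)}_{t,a}-2^{-h}$, so the $(S-|V_t^{(h,1)}|)$-th largest UCB in $A_h\setminus V_t^{(h,1)}$—attained by the benchmark $a_{1,(S-|V_t^{(h,1)}|)}$—is at least $\widehat r^{(h)}_{t,a}-2\cdot 2^{-h}$. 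But confirmation requires $\widehat r^{(h)}_{t,a}>\widehat r^{(h)}_{t,a_{1,(S-|V_t^{(h,1)}|)}}+4\cdot 2^{-h}\ge\widehat r^{(h)}_{t,a}+2\cdot 2^{-h}$, a contradiction; hence $U^{(h,2)}_t\subseteq V_{*,t}\setminus V_t^{(h,1)}$.

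\emph{Claim (A) at stage $h+1$ and Claim (C).} Given (B) at stage $h$, it remains to show step~(3) of stage $h$ does not eliminate the top $k:=S-|V_t^{(h,3)}|$ arms $a^*_1,\dots,a^*_k$ of $V_{*,t}\setminus V_t^{(h,3)}$ (which lie in $A_h\setminus V_t^{(h,3)}$ by the bookkeeping); this is exactly (A) at stage $h+1$. Let $b=a_{2,(k)}$ be the $k$-th largest UCB in $A_h\setminus V_t^{(h,3)}$. The key combinatorial fact is that at most $k-1$ arms of $A_h\setminus V_t^{(h,3)}$ can have true reward above $\theta_*^\top x_{t,a^*_k}$—arms outside $V_{*,t}$ have reward at most the $S$-th largest, which is $\le\theta_*^\top x_{t,a^*_k}$, and inside $V_{*,t}\setminus V_t^{(h,3)}$ only $a^*_1,\dots,a^*_{k-1}$ qualify—so, since every UCB overshoots the truth by at most $2\cdot 2^{-h}$, the $k$-th largest UCB (hence $\widehat r^{(h)}_{t,b}$) is at most $\theta_*^\top x_{t,a^*_k}+2\cdot 2^{-h}$, while $\widehat r^{(h)}_{t,a^*_j}\ge\theta_*^\top x_{t,a^*_j}-2^{-h}\ge\theta_*^\top x_{t,a^*_k}-2^{-h}$; a short case split on whether $b\in V_{*,t}$ then gives $\widehat r^{(h)}_{t,a^*_j}\ge\widehat r^{(h)}_{t,b}-2\cdot 2^{-h}$, so $a^*_j$ survives. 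For (C), note $A_h$ is the set of $a\in A_{h-1}\setminus V_t^{(h-1,3)}$ with $\widehat r^{(h-1)}_{t,a}\ge\widehat r^{(h-1)}_{t,b'}-2\cdot 2^{-(h-1)}$ for the stage-$(h-1)$ elimination benchmark $b'$; by (A) for stage $h$ the best active arm $i^{(h)}_{*,t}$ is the best arm of $V_{*,t}\setminus V_t^{(h-1,3)}$ (any arm of $A_h$ outside $V_{*,t}$ is worse than the $S$-th best), and, being active at stage $h$, it was not confirmed at stage $h-1$, so $\widehat r^{(h-1)}_{t,i^{(h)}_{*,t}}\le\widehat r^{(h-1)}_{t,c}+4\cdot 2^{-(h-1)}$ for the stage-$(h-1)$ confirmation benchmark $c$. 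Since the confirmed arms removed between $c$ and $b'$ all have UCB strictly above $c$'s, the two benchmarks have equal UCB, $\widehat r^{(h-1)}_{t,b'}+w^{(h-1)}_{t,b'}=\widehat r^{(h-1)}_{t,c}+w^{(h-1)}_{t,c}$; chaining this with the elimination inequality and the width bounds ($\le 2^{-(h-1)}$) yields, for every $a\in A_h$, $\theta_*^\top x_{t,i^{(h)}_{*,t}}-\theta_*^\top x_{t,a}=O(2^{-(h-1)})$, with the constants arranged so that the bound is $16\cdot 2^{-h}$.

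\emph{Main obstacle.} The conceptual skeleton mirrors \cref{lem:alg_properties_graph}; the real work is the bookkeeping of the $O(2^{-h})$ confidence slacks through the chain estimate $\leftrightarrow$ truth $\leftrightarrow$ UCB, complicated by two contextual features. The elimination and confirmation \emph{thresholds} compare the point estimates $\widehat r^{(h)}_{t,\cdot}$, whereas the \emph{benchmarks} $a_{1,(\cdot)},a_{2,(\cdot)}$ are order statistics of the UCBs $\widehat r^{(h)}_{t,\cdot}+w^{(h)}_{t,\cdot}$; and the target rank $S-|V_t|$ drifts unpredictably as a round's stages proceed, so every order-statistic comparison must be taken relative to the \emph{current} remaining budget and re-aligned across stages (this is the role of the ``confirmed arms have UCB above the benchmark'' observation). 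Verifying that the specific constants $4\cdot 2^{-h}$ and $2\cdot 2^{-h}$ in \cref{alg:master_alg} make all of (A)--(C) close with the stated numerical constant is the delicate—but essentially routine—part, and is where one must be most careful not to lose an extra factor of $2^{-h}$.
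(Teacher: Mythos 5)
Your overall strategy --- induct on the stage $h$ within a fixed round $t$, prove (B) from (A) via pigeonhole over the surviving optimal arms, derive (C) from the non-confirmation of $i^{(h)}_{*,t}$ at stage $h-1$ together with the observation that the two benchmarks $a_{1,(\cdot)}$ and $a_{2,(\cdot)}$ coincide, and prove (A) by a counting argument on how many active arms can beat $a^*_k$ in true reward --- is the same as the paper's, and your treatments of (B) (contrapositive of the paper's direct argument) and (C) (identical chain, $8\cdot 2^{-(h-1)}=16\cdot 2^{-h}$) are fine.

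The gap is in your proof of (A) at stage $h+1$, and it sits exactly where you say one ``must be most careful not to lose an extra factor of $2^{-h}$.'' Your UCB-based bound gives $\widehat r^{(h)}_{t,b}\le \mathrm{UCB}_b\le \theta_*^\top x_{t,a^*_k}+2\cdot 2^{-h}$ and $\widehat r^{(h)}_{t,a^*_j}\ge \theta_*^\top x_{t,a^*_k}-2^{-h}$, i.e.\ only $\widehat r^{(h)}_{t,a^*_j}\ge \widehat r^{(h)}_{t,b}-3\cdot 2^{-h}$, one width short of the elimination threshold $\widehat r^{(h)}_{t,b}-2\cdot 2^{-h}$. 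The advertised ``short case split on whether $b\in V_{*,t}$'' closes this only when $\theta_*^\top x_{t,b}\le\theta_*^\top x_{t,a^*_j}$; in the sub-case $b=a^*_m$ with $m<j$ it does not. Concretely, with $2^{-h}=0.1$, $S=k=2$, take $\mu_{a^*_1}=0.5$ with width $0.001$ and $\widehat r_{a^*_1}=0.5$, $\mu_{a^*_2}=0.35$ with width $0.1$ and $\widehat r_{a^*_2}=0.25$, and a suboptimal arm $c$ with $\mu_c=0.34$, width $0.1$, $\widehat r_c=0.44$: the UCB order is $c\,(0.54)$, $a^*_1\,(0.501)$, $a^*_2\,(0.35)$, so $b=a_{2,(2)}=a^*_1$ and the elimination threshold is $0.5-0.2=0.3>0.25=\widehat r_{a^*_2}$, eliminating the second optimal arm. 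All confidence intervals are respected, so your chain of inequalities cannot rule this out. The paper's proof navigates this point differently: its pigeonhole asserts an arm $j$ with $\theta_*^\top x_{t,j}\le\theta_*^\top x_{t,i}$ \emph{and} $\widehat r^{(h-1)}_{t,j}\ge \widehat r^{(h-1)}_{t,a_{2,(k)}}$ --- a domination in \emph{point estimates}, which is precisely the extra $2^{-h}$ your UCB comparison forfeits (whether that point-estimate pigeonhole itself follows from a UCB order statistic is a separate matter you would also need to address). As written, your step ``a short case split $\dots$ then gives $\widehat r^{(h)}_{t,a^*_j}\ge\widehat r^{(h)}_{t,b}-2\cdot 2^{-h}$'' is not a proof and, per the example above, is false in the problematic sub-case; you need either to supply the point-estimate pigeonhole the paper uses or to restructure the survival argument (e.g.\ showing directly that any $a^*_j$ failing the threshold would force $k$ arms of true reward exceeding $\theta_*^\top x_{t,a^*_k}$ into $A_h\backslash V_t^{(h,3)}$), neither of which your current write-up does.
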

\begin{proof}
We will prove the result via an inductive argument. At $h=1$, the claims (A) and (C) trivially hold. 

We first show that claim (B) holds for any stage $h\in[H]$ conditioned on (A). By the time of step (2), the width of every remaining active arm satisfies $w^{(h)}_{t,a}\le 2^{-h}$ for $a\in A_h\backslash V^{(h,1)}_t$. By claim $(A)$, the top $S-|V_t^{(h,1)}|$ arms of $V_{*,t}\backslash V_t^{(h,1)}$ remain in the active set $A_h\backslash V^{(h,1)}_t=A_h\backslash U_t^{(h,1)}$, since the number of selected optimal arms in step (1) of \cref{alg:master_alg} is no more than the number of selected active arms $U_t^{(h,1)}$; denote the set of these arms as $S^{(h)}_t$ for simplicity. Intuitively, this subset of optimal arms serves as the benchmark that the arms added to $V_t$ later will be compared against. Under the concentration event in \cref{lem:lin_width}, if any remaining active arm $i_0\in A_h\backslash V^{(h,1)}_t$ is confirmed, by definition
\[
\widehat{r}^{(h)}_{t,i_0} > \widehat{r}^{(h)}_{t,a^*_{(|S^{(h)}_t|)}} + 4\cdot 2^{-h}.
\]
By the pigeonhole principle, there exists an optimal arm $j\in S^{(h)}_t$ with $\widehat{r}_{t,j}^{(h)}+w^{(h)}_{t,j} \le \widehat{r}_{t,a^*_{(|S^{(h)}_t|)}}^{(h)} + w^{(h)}_{t,a^*_{(|S^{(h)}_t|)}}$ where $a^*_{(k)}$ is the arm with the $k$-th largest $\widehat{r}^{(h)}_{t,a}+w^{(h)}_{t,a}$ by definition in \cref{alg:master_alg}. Then we have
\begin{align*}
\theta_*^\top x_{t,i_0} &\ge \widehat{r}^{(h)}_{t,i_0} - w^{(h)}_{t,i_0}\\
&\stepa{\ge} \widehat{r}^{(h)}_{t,i_0} - 2^{-h}\\
&> \widehat{r}^{(h)}_{t,a^*_{(|S^{(h)}_t|)}} + 3\cdot 2^{-h}\\
&\ge \widehat{r}^{(h)}_{t,a^*_{(|S^{(h)}_t|)}} + w^{(h)}_{t,a^*_{(|S^{(h)}_t|)}} + 2\cdot 2^{-h} \\
&\ge \widehat{r}_{t,j}^{(h)}+w^{(h)}_{t,j} + 2\cdot 2^{-h}\\
&> \theta_*^\top x_{t,j}
\end{align*}
where (a) follows since $i_0\in A_h\backslash V_t^{(h,1)}$ implies $w^{(h)}_{t,i_0}\le 2^{-h}$. This leads to $i_0\in S^{(h)}_t\subseteq V_{*,t}$ and proves claim (B). It also shows that the confirmed set $U^{(h,2)}_t\subsetneq S^{(h)}_t$ never covers all remaining optimal arms, much like \cref{lem:optimal_upper_set} in the graph-feedback case.

Next, we show that the claims (A) and (C) for stage $h$ hold conditioned on (A,B,C) at stage $h-1$. Since $i^{(h)}_{*,t} \in A_h\backslash V_t^{(h,2)}$, it is not confirmed in step (2) at stage $h-1$, so
\[
\widehat{r}^{(h-1)}_{t,i^{(h)}_{*,t}} \le \widehat{r}^{(h-1)}_{t,a_{1,(|S^{(h-1)}_t|)}} + 4\cdot 2^{-(h-1)}.
\]
Recall that $a_{1,(|S^{(h-1)}_t|)}$ is the arm with the $|S^{(h-1)}_t|$-th largest UCB $\widehat{r}^{(h-1)}_{t,a}+w^{(h-1)}_{t,a}$ at the beginning of the confirmation step (2), and $a_{1,(|S^{(h-1)}_t|)}$ is not confirmed by definition of step (2). In addition, observe that $a_{2,(|S^{(h-1)}_t|)}=a_{1,(|S^{(h-1)}_t|)}$, because every arm confirmed and selected in step (2) has a larger UCB than $a_{1,(|S^{(h-1)}_t|)}$: If $i$ is confirmed in step (2), then
\begin{align*}
\widehat{r}^{(h-1)}_{t,i} + w^{(h-1)}_{t,i} &> \widehat{r}^{(h-1)}_{t,a_{1,(|S^{(h-1)}_t|)}} + 4\cdot 2^{-(h-1)} + w^{(h-1)}_{t,i}\\
&> \widehat{r}^{(h-1)}_{t,a_{1,(|S^{(h-1)}_t|)}} + w^{(h-1)}_{t,a_{1,(|S^{(h-1)}_t|)}}.
\end{align*}
After the elimination step in (3) at stage $h-1$, we get the active set $A_h$. Recall that $w_{t,a}^{(h-1)} \le 2^{-(h-1)}$ for every arm $a\in A_h\subseteq A_{h-1}\backslash V_t^{(h-1,1)}$. Thus
\begin{align*}
\theta_*^\top x_{t,a} &\ge \widehat{r}^{(h-1)}_{t,a} - 2^{-(h-1)} 
\ge \widehat{r}^{(h-1)}_{t,a_{2,(|S^{(h-1)}_t|)}} - 3\cdot 2^{-(h-1)} \\
&= \widehat{r}^{(h-1)}_{t,a_{1,(|S^{(h-1)}_t|)}} - 3\cdot 2^{-(h-1)}  \ge \widehat{r}^{(h-1)}_{t,i^{(h)}_{*,t}} - 7\cdot 2^{-(h-1)}\\
&\ge \theta_*^\top x_{t,i^{(h)}_{*,t}} - 8\cdot 2^{-(h-1)}
\end{align*}
which proves claim (C) for stage $h$. 

To show (A) for stage $h$, conditioned on (A) at stage $h-1$, we have 
\begin{align*}
|(A_{h-1}\backslash V_t^{(h-1,3)})\cap V_{*,t}| &= |(A_{h-1}\backslash (U_t^{(h-1,1)}\cup U_t^{(h-1,2)}))\cap V_{*,t}|\\
&\ge |A_{h-1}\cap V_{*,t}| - |U_t^{(h-1,1)}\cup U_t^{(h-1,2)}|\\
&\ge S-|V_t^{(h-2,3)}| - |U_t^{(h-1,1)}\cup U_t^{(h-1,2)})|\\
&= S-|V_t^{(h-2,3)}\cup U_t^{(h-1,1)}\cup U_t^{(h-1,2)})| = S-|V_t^{(h-1,3)}|\numberthis\label{eq:step3_bound_remaining_optimal}
\end{align*}
where the first equality holds because $V_t^{(h-2,3)}$ has been removed from the active set $A_{h-1}$, and the last inequality uses (A) at stage $h-1$ (recall that $V_t^{(h-1,2)}=V_t^{(h-1,3)}$). Let $E^{(h-1)}_t$ denote the top $S-|V_t^{(h-1,3)}|$ arms in $A_{h-1}\backslash V_t^{(h-1,3)}$ (which is defined at the end of step (2) at stage $h-1$, as opposed to $S_t^{(h-1)}$ defined at the start of step (2)). By \eqref{eq:step3_bound_remaining_optimal}, $E_t^{(h-1)}\subseteq V_{*,t}$ is optimal. For any arm $i\in E^{(h-1)}_t$, by the pigeonhole principle again, there is another arm $j\in A_{h-1}\backslash V_t^{(h-1,3)}$ that satisfies $\theta_*^\top x_{t,j} \le \theta_*^\top x_{t,i}$ and $\widehat{r}_{t,j}^{(h-1)}\ge \widehat{r}^{(h-1)}_{t,a_{2,(|E_t^{(h-1)}|)}}$. Since $w^{(h-1)}_{t,a}\le 2^{-(h-1)}$ for every remaining arm $a\in A_{h-1}\backslash V_t^{(h-1,3)}$ at the time of elimination, we have
\begin{align*}
\widehat{r}^{(h-1)}_{t,i} &\ge \theta_*^\top x_{t,i} - 2^{-(h-1)} 
\ge \theta_*^\top x_{t,j}- 2^{-(h-1)} \ge \widehat{r}_{t,j}^{(h-1)} - 2\cdot 2^{-(h-1)}\\
&\ge \widehat{r}^{(h-1)}_{t,a^*_{(|E_t^{(h-1)}|)}} - 2\cdot 2^{-(h-1)}.
\end{align*}
As a consequence, this arm $i\in E^{(h-1)}_t$ is uneliminated by the end of stage $h-1$, i.e. $E^{(h-1)}_t\subseteq A_h$, which proves (A) and hence completes the induction.
\end{proof}

\subsection{Regret Upper Bound}
First, we state an elliptical potential lemma for the batched contextual bandits. It allows us to upper bound the sum of the self-normalized confidence widths defined in \cref{alg:base_alg}.

\begin{lemma}[Lemma 3 in \cite{han2020sequential}]\label{lem:batched_ellipical_potential}
Fix any collection of $d$-dimensional vectors $\bra{\bra{x_{t,\ell}}_{\ell\in[L_t]}}_{t\in[T]}$ with $\|x_{t,\ell}\|_2\le 1$. Suppose $L_t\le L$ for every $t\in[T]$ for some $L>0$. Define
\[
A_t = I + \sum_{s<t}\sum_{\ell\in[L_t]}x_{t,\ell}x_{t,\ell}^\top.
\]
It holds that
\[
\sum_{t=1}^T\sqrt{\sum_{\ell\in[L_t]}\|x_{t,\ell}\|_{A_t^{-1}}^2} \le \sqrt{10}\log(T+1)\parr*{\sqrt{dT}+ d\sqrt{L}}.
\]
\end{lemma}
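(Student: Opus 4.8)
The plan is to reduce the claim to the classical log-determinant (elliptical potential) estimate by separating the rounds according to the size of the per-round quantity
\[
W_t \;:=\; \sum_{\ell\in[L_t]}\|x_{t,\ell}\|_{A_t^{-1}}^2 \;=\; \tr\!\bigl(A_t^{-1}(A_{t+1}-A_t)\bigr).
\]
First I would record two elementary facts. Since $A_t\succeq I$, each $\|x_{t,\ell}\|_{A_t^{-1}}^2\le\|x_{t,\ell}\|_2^2\le 1$, so $0\le W_t\le L_t\le L$. Next, writing $M_t:=A_t^{-1/2}(A_{t+1}-A_t)A_t^{-1/2}\succeq 0$ with eigenvalues $\lambda_{t,1},\dots,\lambda_{t,d}\ge 0$, one has $\sum_i\lambda_{t,i}=\tr(M_t)=W_t$ and $\log\det A_{t+1}-\log\det A_t=\log\det(I+M_t)=\sum_i\log(1+\lambda_{t,i})$; telescoping over $t$ (with $A_1=I$) and using $\tr(A_{T+1})\le d+TL$ together with AM--GM gives $\sum_{t=1}^T\sum_i\log(1+\lambda_{t,i})=\log\det A_{T+1}\le d\log(1+TL/d)=:\Gamma$.

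Then I would split $[T]=\calT_{\mathrm{sm}}\cup\calT_{\mathrm{lg}}$ with $\calT_{\mathrm{sm}}=\{t:W_t\le1\}$ and $\calT_{\mathrm{lg}}=\{t:W_t>1\}$. For $t\in\calT_{\mathrm{sm}}$, every $\lambda_{t,i}\le W_t\le 1$, and since $\log(1+x)\ge x\log 2$ on $[0,1]$ we get $\log\det(I+M_t)\ge(\log 2)W_t$; hence $\sum_{t\in\calT_{\mathrm{sm}}}W_t\le\Gamma/\log 2$, and Cauchy--Schwarz with $|\calT_{\mathrm{sm}}|\le T$ yields $\sum_{t\in\calT_{\mathrm{sm}}}\sqrt{W_t}\le\sqrt{T\Gamma/\log 2}$ --- the $\sqrt{dT}$ contribution. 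For $t\in\calT_{\mathrm{lg}}$, expanding the product gives $\prod_i(1+\lambda_{t,i})\ge 1+\sum_i\lambda_{t,i}=1+W_t>2$, so each such round contributes more than $\log 2$ to the telescoping sum; therefore $|\calT_{\mathrm{lg}}|\le\Gamma/\log 2$, and bounding $\sqrt{W_t}\le\sqrt L$ crudely gives $\sum_{t\in\calT_{\mathrm{lg}}}\sqrt{W_t}\le\sqrt L\,\Gamma/\log 2$ --- the $d\sqrt L$ contribution. Adding the two and performing the routine bookkeeping ($\Gamma=d\log(1+TL/d)\le 2d\log(T+1)$ in the regime $L\le dT$ relevant to the applications, and $\sqrt{\Gamma/\log 2}\le\Gamma/\log 2$ once $\Gamma\ge\log 2$) produces $\sum_t\sqrt{W_t}\le\sqrt{10}\,\log(T+1)\bigl(\sqrt{dT}+d\sqrt L\bigr)$, with a little slack left in the constant.

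The one genuinely non-obvious point is the handling of $\calT_{\mathrm{lg}}$: the crude bound $\sqrt{W_t}\le\sqrt L$ would be worthless if $|\calT_{\mathrm{lg}}|$ could be of order $T$, so the crux is noticing that any round with $W_t>1$ already consumes a constant amount of log-determinant budget, which caps $|\calT_{\mathrm{lg}}|$ at $O(\Gamma)=\widetilde O(d)$; this is precisely what turns a naive $\sqrt{T}\,\sqrt L$ into $d\sqrt L$. Everything else --- the two scalar inequalities ($\log(1+x)\ge x\log 2$ on $[0,1]$ and $\prod_i(1+\lambda_i)\ge 1+\sum_i\lambda_i$), the AM--GM determinant bound, and the final constant-chasing --- is standard.
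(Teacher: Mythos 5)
The paper does not actually prove this lemma --- it is imported verbatim as Lemma~3 of \cite{han2020sequential} --- so there is no in-paper argument to compare against; I can only assess your proof on its own terms, and it is essentially correct. The skeleton (identify $W_t=\tr(A_t^{-1}(A_{t+1}-A_t))$, telescope $\log\det$, handle $W_t\le 1$ via $\log(1+x)\ge x\log 2$ plus Cauchy--Schwarz, and cap the number of rounds with $W_t>1$ via $\prod_i(1+\lambda_{t,i})\ge 1+W_t>2$) is exactly the standard batched elliptical-potential argument, and you correctly isolate the one non-trivial point, namely that each large round burns a constant amount of log-determinant so that $|\calT_{\mathrm{lg}}|=O(\Gamma)$. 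The only blemish is the final bookkeeping: your bound on the large rounds is $\sqrt{L}\,\Gamma/\log 2$ with $\Gamma=d\log(1+TL/d)$, and converting $\Gamma$ into $2d\log(T+1)$ genuinely requires the side condition $L\le dT$ that you invoke; the lemma as stated places no upper bound on $L$, so strictly speaking your proof establishes the claim only in that regime (which does cover the paper's use, where $L=S$ and $|U_t^{(h,1)}|\le S$). If you wanted the unconditional statement you would need a slightly finer treatment of $\calT_{\mathrm{lg}}$ --- e.g.\ a dyadic split over the levels $W_t\in[2^j,2^{j+1})$, where a round at level $j$ consumes at least $j\log 2$ of the budget --- but for the purposes of this paper the restriction is harmless and worth stating explicitly rather than leaving as an aside.
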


Now we are in the position of proving the near-optimal regret upper bound for \cref{alg:master_alg}.

\begin{theorem}[Restatement of \Cref{thm:lin_upper_bound}]
\cref{alg:master_alg} combined with \cref{alg:base_alg} achieves
\[
\reg(\mathrm{Alg}~\ref{alg:master_alg}) = O\parr*{\log(ST)\sqrt{\log(2KT)}(\sqrt{dST}+dS)}.
\]
\end{theorem}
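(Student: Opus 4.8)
The plan is to condition on the estimation-error event, use \cref{lem:alg_properties_lin} to reduce the per-round regret to a sum of stage-wise confidence widths, and then control that sum with the batched elliptical potential lemma. Concretely, I would first union-bound the event of \cref{lem:lin_width} over the $HT$ calls to \cref{alg:base_alg} (one per round $t$ and stage $h$); its complement has probability at most $HT^{-1}$, and since a single round contributes at most $2S$ to the regret this adds only $O(SH)=\widetilde O(S)$, negligible against the target. On this event, properties (A)--(C) of \cref{lem:alg_properties_lin} hold at every $t$ and $h$, and I use them freely below.

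Next I would decompose the round-$t$ regret. Let $C_t=\bigcup_{h}U_t^{(h,2)}$ be the confirmed arms and $W_t=V_t\setminus C_t$ the remaining selected arms, i.e. the step-(1) sets $U_t^{(h,1)}$ together with the fill-in arms. By property~(B) the sets $U_t^{(h,2)}$ are disjoint and $C_t\subseteq V_{*,t}$, so the confirmed arms cancel:
\[
\reg_t=\sum_{i\in V_{*,t}}\theta_*^\top x_{t,i}-\sum_{a\in V_t}\theta_*^\top x_{t,a}=\sum_{i\in V_{*,t}\setminus C_t}\theta_*^\top x_{t,i}-\sum_{a\in W_t}\theta_*^\top x_{t,a}.
\]
Since $|V_{*,t}\setminus C_t|=|W_t|$, the two sums can be paired by an injection that, using property~(A) (enough optimal arms survive each step-(3) elimination), matches every $a\in U_t^{(h,1)}$ to a distinct optimal arm of $V_{*,t}$ still present in $A_h$ when $a$ is picked, and every fill-in arm to a distinct optimal arm still in $A_H$. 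Both members of each pair then lie in the same active set, so property~(C) (any two arms of $A_h$ differ by at most $16\cdot 2^{-h}$ in $\theta_*^\top x$-value) bounds that pair by $16\cdot 2^{-h}$, and by $16\cdot 2^{-H}\le 16/\sqrt{ST}$ for a fill-in pair (again property~(C) at stage $H$, using $2^{-H}\le 1/\sqrt{ST}$). Hence $\reg_t\le\sum_{h=1}^{H}\sum_{a\in U_t^{(h,1)}}16\cdot 2^{-h}+16S/\sqrt{ST}$.

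Summed over $t$, the fill-in term contributes $16\sqrt{ST}=\widetilde O(\sqrt{dST})$, which is absorbed. For the main term, $w^{(h)}_{t,a}=2(\lambda+\beta)\|x_{t,a}\|_{(A_t^{(h)})^{-1}}$ and $a\in U_t^{(h,1)}$ forces $w^{(h)}_{t,a}>2^{-h}$, so $16\cdot 2^{-h}<32(\lambda+\beta)\|x_{t,a}\|_{(A_t^{(h)})^{-1}}$. Fixing a stage $h$, the batch $U_t^{(h,1)}\cup U_t^{(h,2)}$ appended to $\Phi^{(h)}$ at round $t$ has size at most $S$, so Cauchy--Schwarz over that batch followed by \cref{lem:batched_ellipical_potential} with $L=S$ gives
\[
\sum_{t=1}^{T}\sum_{a\in U_t^{(h,1)}}\|x_{t,a}\|_{(A_t^{(h)})^{-1}}\le \sqrt{S}\sum_{t=1}^{T}\Big(\sum_{a\in U_t^{(h,1)}\cup U_t^{(h,2)}}\|x_{t,a}\|_{(A_t^{(h)})^{-1}}^{2}\Big)^{1/2}\le \sqrt{10}\,\log(T+1)\parr*{\sqrt{dST}+dS}.
\]
Summing over the $H=O(\log(ST))$ stages and inserting $\lambda+\beta=1+\sqrt{\log(2KT)}$ yields $\reg(\mathrm{Alg}~\ref{alg:master_alg})=O\parr*{\log(ST)\sqrt{\log(2KT)}(\sqrt{dST}+dS)}$, with the $\log(T+1)$ factor absorbed into the stated logarithmic term.

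The step I expect to be the main obstacle is the per-round injection in the second paragraph: one has to follow, through the $H$ stages of a single round, which optimal arms of $V_{*,t}$ remain in the active set as the residual budget $S-|V_t^{(h,\cdot)}|$ shrinks in a data-dependent way, and verify that the correspondence is genuinely injective: no optimal arm charged twice, and every selected non-confirmed arm billed only against the width $2^{-h}$ of the stage that selected it. Making this precise is exactly where properties (A)--(C) must be chained carefully (tracking the surviving optimal arms through each elimination step); once it is in place, the summation in the last paragraph is the routine elliptical-potential estimate.
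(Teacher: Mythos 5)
Your proposal is correct and follows essentially the same route as the paper's proof: cancel the confirmed arms via property (B), charge each remaining selected arm against a surviving optimal arm in the same active set via properties (A) and (C), convert the resulting $16\cdot 2^{-h}$ bounds to widths using the selection rule $w^{(h)}_{t,a}>2^{-h}$, and finish with Cauchy--Schwarz plus the batched elliptical potential lemma. The injection you flag as the main obstacle is exactly what the paper makes explicit through the partition $V_{*,t}^{(h)}=G_t^{(h)}\setminus G_t^{(h+1)}$, where $G_t^{(h)}$ is the set of top $S-|V_t^{(h-1,3)}|$ arms in $A_h$, together with the pigeonhole argument showing $U_t^{(h,2)}\subseteq V_{*,t}^{(h)}$; the sizes $|V_{*,t}^{(h)}|=|V_t^{(h,3)}|-|V_t^{(h-1,3)}|$ match the arms added at stage $h$, which is precisely the injectivity you need.
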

\begin{proof}
WLOG, we assume the event in \cref{lem:lin_width} holds for every $t\in[T]$. First consider the instantaneous regret at time $t$. Since $i_{*,t}^{(h)} = \argmin\bra{\theta_*^\top x_{t,a} : a\in A_h}$ is the optimal active arm at stage $h$, by the first claim in \cref{lem:alg_properties_lin}, $i_{*,t}^{(h)}\in V_{*,t}$. To bound the instantaneous regret, we will define a partition of the optimal $V_{*,t}$: Let $G_t^{(h)}\subseteq A_h$ denote the top $S-|V_t^{(h-1,3)}|$ arms in $A_{h}$ (as in the first claim of \cref{lem:alg_properties_lin}) and define $V_{*,t}^{(h)} = G_t^{(h)}\backslash G_t^{(h+1)}$ the optimal arms that are in the top optimal arms at stage $h$ but are not the top ones at next stage $h+1$. Observe that $G_t^{(h+1)}\subseteq G_t^{(h)}$ as the size $S-|V_t^{(h-1,3)}|$ is decreasing. By \cref{lem:alg_properties_lin}, $G_t^{(h)}\subseteq V_{*,t}$, and by definition we have $|G_t^{(h)}| = S-|V_t^{(h-1,3)}|$. Since $G_t^{(1)} = V_{*,t}$, we have 
\[
|V_{*,t}^{(1)}| = |V_{*,t}| - |G_t^{(2)}| =  S-(S-|V_t^{(1,3)}|)=|V_t^{(1,3)}|
\]
and recursively 
\[
|V_{*,t}^{(h)}|= |G_t^{(h)}| - |G_t^{(h-1)}| = |V_t^{(h,3)}| - |V_t^{(h-1,3)}| = |U_t^{(h,1)}\cup U_t^{(h,2)}\cup(\indic[h=H]U_t^{(H,3)})|,
\]
which matches the number of newly added arms to the decision $V_t$ during stage $h$. By definition, $\theta_*^\top x_{t,i_{*,t}^{(h)}} \ge \theta_*^\top x_{t,i}$ for any $i\in V_{*,t}^{(h)}\subseteq A_h$ since $i_{*,t}^{(h)}$ is the optimal active arm at stage $h$. Then we can bound the instantaneous regret using this partition:
\begin{align*}
\sum_{i\in V_{*,t}}\theta_*^\top x_{t,i} - \sum_{a\in V_t}\theta_*^\top x_{t,a} &= \sum_{h=1}^H\parr*{\sum_{i\in V_{*,t}^{(h)}}\theta_*^\top x_{t,i} - \sum_{a\in V_t^{(h,3)}\backslash V_t^{(h-1,3)}}\theta_*^\top x_{t,a}}.\numberthis\label{eq:lin_inst_reg_decompose1}
\end{align*}
Recall that the confirmed arms are optimal i.e. $U_t^{(h,2)}\subseteq V_{*,t}$ for every stage $h\in[H]$, by \cref{lem:alg_properties_lin}. 

We now show $U_t^{(h,2)}\subseteq V_{*,t}^{(h)}$: By step (2) at stage $h$, every remaining arm $a\in A_h\backslash V_t^{(h,1)}$ has a width $w^{(h)}_{t,a}\le 2^{-h}$. By pigeonhole principle, there is an arm $j\in A_h$ with $\theta_*^\top x_{t,j} \ge \theta_*^\top x_{t,i}$ and  Then if arm $i$ is confirmed in step (2), we have
\begin{align*}
\theta_*^\top x_{t,i} &\ge \widehat{r}^{(h)}_{t,i} - w^{(h)}_{t,i} 
> \widehat{r}^{(h)}_{t,a_{1,(S-|V_t^{(h,1)}|)}} + 4\cdot 2^{-h} - w^{(h)}_{t,i}\\
&\ge \widehat{r}^{(h)}_{t,a_{1,(S-|V_t^{(h,1)}|)}} + w^{(h)}_{t,a_{1,(S-|V_t^{(h,1)}|)}} + 2\cdot 2^{-h}\\
&\ge \widehat{r}^{(h)}_{t,a_{1,(S-|V_t^{(h-1,3)}|)}} + w^{(h)}_{t,a_{1,(S-|V_t^{(h-1,3)}|)}} + 2\cdot 2^{-h}\\
&\stepa{\ge} \widehat{r}^{(h)}_{t,j} + w^{(h-1)}_{t,j} + 2\cdot 2^{-h}\\
&\ge \theta_*^\top x_{t,j}+ 2\cdot 2^{-h} > \theta_*^\top x_{t,j}.
\end{align*}
In (a), by pigeonhole principle, there is an arm $j\in G_t^{(h)}$ that satisfies $\theta_*^\top x_{t,j} \ge \theta_*^\top x_{t,a_{1,(S-|V_t^{(h-1,3)}|)}}$ but $\widehat{r}^{(h)}_{t,j}+w^{(h)}_{t,j}\le \widehat{r}^{(h)}_{t,a_{1,(S-|V_t^{(h-1,3)}|)}} + w^{(h)}_{t,a_{1,(S-|V_t^{(h-1,3)}|)}}$, i.e. the expected reward is better than the benchmark, but the UCB is smaller, because $G_t^{(h)}$ is the top $S-|V_t^{(h-1,3)}|$ arms in $A_h$ in terms of expected rewards. Then $\theta_*^\top x_{t,i}>\theta_*^\top x_{t,j}$ implies $i\in G_t^{(h)}$ and thereby $U_t^{h,2}\subseteq G_t^{(h)}\backslash G_t^{(h+1)}=V_{*,t}^{(h)}$.

We can proceed to remove $U_t^{(h,2)}$ from both the optimal arm partition $V_{*,t}^{(h)}$ and the arms selected by the learners at each stage $h$ in \eqref{eq:lin_inst_reg_decompose1}. This leads to
\begin{align*}
\sum_{i\in V_{*,t}}\theta_*^\top x_{t,i} - \sum_{a\in V_t}\theta_*^\top x_{t,a} 
&= \sum_{h=1}^H\parr*{\sum_{i\in V_{*,t}^{(h)}\backslash U_t^{(h,2)}}\theta_*^\top x_{t,i} - \sum_{a\in V_t^{(h,3)}\backslash V_t^{(h-1,3)} \backslash U_t^{(h,2)}}\theta_*^\top x_{t,a}}\\
&\stepb{\le} \sum_{h=1}^H\parr*{\sum_{i\in V_{*,t}^{(h)}\backslash U_t^{(h,2)}}\theta_*^\top x_{t,i^{(h)}_{*,t}} - \sum_{a\in V_t^{(h,3)}\backslash V_t^{(h-1,3)} \backslash U_t^{(h,2)}}\theta_*^\top x_{t,a}}\\
&\le \sum_{h=1}^H\sum_{a\in U_t^{(h,1)}}\parr*{\theta_*^\top x_{t,i^{(h)}_{*,t}} - \theta_*^\top x_{t,a}} + \sum_{a\in U^{(H,0)}_t}\parr*{\theta_*^\top x_{t,i^{(H)}_{*,t}} - \theta_*^\top x_{t,a}}\\
&\stepc{\le} 16\sum_{h=1}^H\sum_{a\in U_t^{(h,1)}}2^{-h} + |U^{(H,0)}_t|\frac{16}{\sqrt{ST}}\\
&\stepd{\le} 16\sum_{h=1}^H\sum_{a\in U_t^{(h,1)}}w^{(h)}_{t,a} + 16\sqrt{\frac{S}{T}}
\end{align*}
where (b) follows from the definition that $i^{(h)}_{*,t}$ is optimal active arm in $A_h\supseteq V_{*,t}^{(h)}$, (c) applies property (C) in \cref{lem:alg_properties_lin} and the fact that every arm in $U_t^{(H,0)}$ has width $\le 2^{-H}$, and (d) is by the selection criterion of $U^{(h,1)}_t$ in \cref{alg:master_alg} such that $w^{(h)}_{t,a}>2^{-h}$ for every $a\in U^{(h,1)}_t$. In particular, by the choice of $H=\log(\sqrt{ST})$, \cref{lem:alg_properties_lin} implies
\begin{align*}
\theta_*^\top x_{t,i^{(H)}_{*,t}} - \theta_*^\top x_{t,a} \le 16\cdot 2^{-H} \le \frac{16}{\sqrt{ST}}
\end{align*}
for every $a\in U^{(H,0)}_t$.

Then the cumulative regret is bounded as follows.
\begin{align*}
\reg(\mathrm{Alg}~\ref{alg:master_alg}) &\le 16\sqrt{ST} + 16\sum_{h=1}^H\sum_{t=1}^T\sum_{a\in U_t^{(h,1)}} w^{(h)}_{t,a}\\
&\le 16\sqrt{ST} + 32(1+\beta)\sum_{h=1}^H\sum_{t=1}^T\sum_{a\in U_t^{(h,1)}}\|x_{t,a}\|_{\parr*{A_t^{(h)}}^{-1}}
\end{align*}
where $A_t^{(h)}=I + \sum_{s<t}\sum_{a\in \Phi_t^{(h)}(s)}x_{s,a}x_{s,a}^\top$ is the Gram matrix for stage $h$, as defined in \cref{alg:base_alg}. To handle the sum, note that
\begin{align*}
\sum_{a\in U_t^{(h,1)}}\|x_{t,a}\|_{\parr*{A_t^{(h)}}^{-1}} &\le \sqrt{S}\sqrt{\sum_{a\in U_t^{(h,1)}}\|x_{t,a}\|_{\parr*{A_t^{(h)}}^{-1}}^2}\\
&\le \sqrt{S}\sqrt{\sum_{a\in \Phi_{T+1}^{(h)}(t)}\|x_{t,a}\|_{\parr*{A_t^{(h)}}^{-1}}^2}
\end{align*}
where the first inequality is by Cauchy-Schwartz inequality and that $|U_t^{(h,1)}|\le S$, and the second follows from $U_t^{(h,1)}\subseteq \Phi_{T+1}^{(h)}(t)$ as $U_t^{(h,1)}$ is added to the index set at step (1) in \cref{alg:master_alg}. Finally, we have
\begin{align*}
\reg(\mathrm{Alg}~\ref{alg:master_alg}) &\le 16\sqrt{ST} + 32(1+\beta)\sqrt{S}\sum_{h=1}^H\sum_{t=1}^T\sqrt{\sum_{a\in \Phi_{T+1}^{(h)}(t)}\|x_{t,a}\|_{\parr*{A_t^{(h)}}^{-1}}^2}\\
&\stepc{\le} 16\sqrt{ST} + c_0\log(ST)\sqrt{\log(2KT)}\sqrt{S}\parr*{\sqrt{dT} + d\sqrt{S}}
\end{align*}
where (c) follows from \cref{lem:batched_ellipical_potential}, with some absolute constant $c_0>0$.
\end{proof}

\subsection{Minimax Lower Bound}
\begin{theorem}[Restatement of \cref{thm:lin_lower_bound}]
Suppose $K\ge 2S$ and $T\ge \max\bra{4dS, \frac{d^3}{S}}$. For any policy $\pi$, it holds that
\[
\max_{\theta_*, \bra{x_{t,a}}}\reg(\pi) = \Omega(\sqrt{dST})
\]
where the maximum is taken over all problem instances as described in \cref{sec:lin_context_intro}.
\end{theorem}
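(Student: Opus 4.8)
The plan is to build one hard instance by concatenating $d$ independent two‑point subproblems, one per coordinate of $\theta_*$, so that the combinatorial budget $S$ can only help through the trivial gain of observing $S$ rewards per round instead of one. Partition $[T]$ into $d$ consecutive blocks $B_1,\dots,B_d$ of length $m=\lfloor T/d\rfloor$ each (leftover rounds get zero contexts and contribute no regret). Fix a constant $\gamma\in(0,1]$ and let the noise $\varepsilon_{t,a}$ be a fixed bounded, symmetric, mean‑zero law (for instance a Gaussian truncated to a bounded interval) for which the $\mathsf{KL}$ divergence between two of its translates differing in mean by $\eta$ is $\Theta(\eta^2)$; this keeps $r_{t,a}\in[-1,1]$ whenever the reward means lie in a fixed subinterval of $(-1,1)$. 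In block $B_j$ the contexts use only coordinate $j$: the first $S$ arms take $x_{t,a}=\gamma e_j$ and the remaining $K-S\ge S$ arms take $x_{t,a}=-\gamma e_j$. The parameter is $\theta_{*,j}=\xi_j\theta$ for a sign $\xi_j\in\{\pm1\}$ and a common magnitude $\theta>0$ (tuned below); writing $\delta\coloneqq\gamma\theta$, in $B_j$ the per‑arm mean reward is $+\xi_j\delta$ for the ``$+e_j$'' arms and $-\xi_j\delta$ for the others, so the oracle's per‑round reward is $S\delta$, the per‑round regret is exactly $2\delta$ times the number of selected arms on the wrong side, and $\|\theta_*\|_2^2=d\theta^2=d\delta^2/\gamma^2$. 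Write $\reg_{\nu}(B_j)$ for the expected regret incurred during the rounds of $B_j$ under environment $\nu$.

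Fix $j$, fix the remaining signs $\xi_{j'}$ ($j'\ne j$) arbitrarily, and let $\nu_+,\nu_-$ be the environments with $\xi_j=+1$ and $\xi_j=-1$. Outside $B_j$ every observed reward has the same law under $\nu_+$ and $\nu_-$; inside $B_j$ every observed reward is a translate of the fixed noise with mean $\pm\delta$, the sign being set by the chosen arm's type, so the two hypotheses shift every such mean by $2\delta$. By the chain rule for KL divergence (the divergence‑decomposition identity for bandit interaction distributions), and since exactly $S$ rewards are observed in each of the $m$ rounds of $B_j$, $\mathsf{KL}(\Prob_{\nu_+}\|\Prob_{\nu_-})=O(mS\,\delta^2)$. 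Let $Z_j\in[0,mS]$ count the ``$+e_j$''‑arm selections in $B_j$ (with multiplicity). Then $\reg_{\nu_+}(B_j)=2\delta(mS-\E_{\nu_+}Z_j)$ and $\reg_{\nu_-}(B_j)=2\delta\,\E_{\nu_-}Z_j$, hence
\[
\reg_{\nu_+}(B_j)+\reg_{\nu_-}(B_j)=2\delta\big(mS-\E_{\nu_+}Z_j+\E_{\nu_-}Z_j\big)\ \ge\ 2\delta\,mS\big(1-\norm{\Prob_{\nu_+}-\Prob_{\nu_-}}_{\mathrm{TV}}\big)\ \ge\ 2\delta\,mS\big(1-O(\delta\sqrt{mS})\big),
\]
using $|\E_{\nu_-}Z_j-\E_{\nu_+}Z_j|\le mS\,\norm{\Prob_{\nu_+}-\Prob_{\nu_-}}_{\mathrm{TV}}$ and Pinsker. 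Choosing $\delta$ a small enough constant multiple of $1/\sqrt{mS}$ makes the final bracket $\ge\tfrac12$, so $\max\{\reg_{\nu_+}(B_j),\reg_{\nu_-}(B_j)\}=\Omega(\delta\,mS)=\Omega(\sqrt{mS})$, and this holds for every choice of the other signs.

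To combine blocks, observe that the regret accrued in $B_j$ is, as a random variable, a function of $\xi_j$ and of randomness independent of $\xi_j$ (the noise within $B_j$, and everything concerning the earlier blocks, which depends only on $\xi_1,\dots,\xi_{j-1}$). Averaging the per‑block bound over the remaining signs and then over $\xi_j\sim\mathrm{Unif}\{\pm1\}$ yields $\E_{\xi\sim\mathrm{Unif}\{\pm1\}^d}[\reg(B_j)]=\Omega(\sqrt{mS})$ for every $j$; since per‑round regret is nonnegative, the block regrets sum to at most the total, so the Bayes regret is $\Omega(d\sqrt{mS})$ and hence some fixed sign pattern $\xi^\star\in\{\pm1\}^d$ gives $\reg_{\nu_{\xi^\star}}(\pi)=\Omega(d\sqrt{mS})$. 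Finally $m=\lfloor T/d\rfloor\ge T/(2d)$ gives $d\sqrt{mS}\ge d\sqrt{ST/(2d)}=\Omega(\sqrt{dST})$, as claimed.

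The hypotheses $T\ge\max\{4dS,\,d^3/S\}$ are used only in the parameter bookkeeping: $T\ge 4dS$ guarantees $m\ge 4S$ (so the blocks are not too short and $\delta\le\tfrac12$ comfortably), and $T\ge d^3/S$ guarantees $mS=\Omega(d^2)$, which is precisely what lets us take $\delta\asymp 1/\sqrt{mS}=O(\gamma/\sqrt d)$ so that $\|\theta_*\|_2^2=d\delta^2/\gamma^2\le1$ while keeping the reward means in a fixed subinterval of $(-1,1)$. The main thing to get right — and the only place the combinatorial structure genuinely enters — is that $S$ multiplies \emph{both} the information gathered in a block ($mS$ observations, hence $\mathsf{KL}=O(mS\delta^2)$) \emph{and} the per‑round regret opportunity ($2\delta$ per wrong arm); balancing these against each other forces $\delta\asymp 1/\sqrt{mS}$ and produces per‑block regret $\Theta(\sqrt{mS})$ — neither $\Theta(\sqrt m)$ nor $\Theta(S\sqrt m)$ — which is exactly what turns $d$ blocks into the $\sqrt{dST}$ bound. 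Everything else is the textbook two‑point argument together with the averaging over the $d$ independent subproblems.
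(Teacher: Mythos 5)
Your proposal is correct in its essential structure but takes a genuinely different route from the paper. The paper proves this lower bound by a reduction to combinatorial semi-bandits: it splits $[T]$ into roughly $(d-1)/(2S)$ sub-horizons, embeds a fresh $2S$-armed combinatorial semi-bandit instance into a disjoint block of $2S$ coordinates of $\theta_*$ in each sub-horizon (via one-hot contexts sharing a common first coordinate), and invokes the $\Omega(\sqrt{\alpha S|T_j|})$ lower bound of Theorem~1.3 of \cite{wen2025adversarial} per sub-horizon; this forces a three-way case analysis ($d>2S$, $1<d\le 2S$, $d=1$), with the last case handled by a bespoke two-point argument. You instead build $d$ blocks of length $m=\lfloor T/d\rfloor$, one per coordinate, with a two-cluster context structure ($S$ arms at $+\gamma e_j$, the rest at $-\gamma e_j$) and a sign test on $\theta_{*,j}$; the per-block two-point bound with $\delta\asymp 1/\sqrt{mS}$ gives $\Omega(\sqrt{mS})$ per block, and the Bayes-averaging over the sign pattern sums cleanly because earlier blocks contribute zero KL between the two hypotheses for $\xi_j$. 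Your argument is self-contained (no external lower bound needed), avoids the case split entirely, and isolates exactly where the factor $S$ enters on both the information and the regret sides; the paper's reduction is shorter on the page but outsources the hard work. Your parameter bookkeeping ($\|\theta_*\|_2\le 1$ via $T\ge d^3/S$, block length via $T\ge 4dS$) matches what the hypotheses are for.

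One technical slip to fix: translates of a compactly supported noise law (e.g.\ a truncated Gaussian) have \emph{infinite} KL divergence, since shifting the mean shifts the support. You should instead let the reward be a scaled Bernoulli, $r_{t,a}=+1$ with probability $(1+\theta_*^\top x_{t,a})/2$ and $-1$ otherwise, exactly as the paper does in its $d=1$ case; this keeps $r_{t,a}\in[-1,1]$, the noise mean-zero, and gives $\mathsf{KL}=\Theta(\eta^2)$ for mean shifts $\eta$ bounded away from $\pm 1$. This is a drop-in replacement and does not affect any other step of your argument.
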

\begin{proof}
Fix any policy $\pi$. We will prove the lower bound by a reduction argument to the combinatorial bandits under graph feedback. 

\textbf{($d>2S$).} First suppose $d> 2S$. WLOG, let $d=2nS+1$ for some $n\ge 1$. We partition the horizon $[T]$ into $n$ sub-horizons of equal length $\frac{T}{n}$ (WLOG, suppose $\frac{T}{n}$ is an integer). Specifically, define
\[
t_1=1,\quad t_{j+i} = t_j + \frac{T}{n}
\]
and the $j$-th sub-horizon be $T_j=[t_j, t_{j+1})$. 

Let $e(k)\in\R^d$ denote the one-hot canonical basis $[0, \dots, 0, \underbrace{1}_{k\text{-th}}, 0, \dots, 0]^\top$. The context sequence over sub-horizon $T_j$ is defined as follows: for arm $a\in[K]$, let $x_{t,a}=\frac{1}{\sqrt{2}}\parr*{e(1)+e(1+(j-1)2S + a)}$ for all $t\in T_j$. Then during $T_j$, the expected reward of every arm $a$ is $\theta_*^\top x_{t,a} = \frac{1}{\sqrt{2}}\theta_{*, 1} + \frac{1}{\sqrt{2}}\theta_{*, 1+(j-1)2S+a}\eqqcolon \frac{1}{\sqrt{2}}\theta_{*, 1} + \Delta_{a}^{(j)}$ which is a constant over this sub-horizon, i.e.
\[
\theta_* = \parq*{\theta_{*, 1}, \underbrace{\Delta_{1}^{(1)},\dots, \Delta_{2S}^{(1)}}_{\text{relevant during }T_1}, \underbrace{\Delta_{1}^{(2)},\dots, \Delta_{2S}^{(2)}}_{\text{relevant during }T_2}, \Delta_{1}^{(3)},\dots, \Delta_{2S}^{(n)}}.
\]
The reward $r_{t,a}\sim\mathrm{Bern}(\theta_*^\top x_{t,a})$ is drawn from a Bernoulli distribution.

Consequently, the regret of policy $\pi$ can be decomposed into the sum of regrets of $n$ independent sub-problems over the $n$ sub-horizons. During each $T_j$, the learning problem now becomes a combinatorial semi-bandit with Bernoulli rewards. Following the lower bound construction in Theorem 1.3 of \cite{wen2025adversarial}, for any $j\in[n]$, there is an instance with $\theta_{*,1}=\frac{1}{2\sqrt{2}}$ and $0\le \Delta_{a}^{(j)} \le \frac{1}{64}\sqrt{\frac{2S}{ST}}=\frac{1}{64}\sqrt{\frac{2}{T}}$ such that the regret of policy $\pi$ over $T_j$ is $\Omega(\sqrt{(2S)S|T_j|})$, subject to $|T_j|\ge \frac{(2S)^3}{S}$. The constraint $|T_j|=\frac{(2S)T}{d-1}\ge \frac{(2S)^3}{S}$ is satisfied since $T\ge 4Sd$.
Then the total regret over the $n$ sub-horizons is bounded by
\[
\reg(\pi) \ge \sum_{j=1}^n\Omega\parr*{\sqrt{S^2|T_j|}} = \Omega\parr*{\frac{d-1}{S}\sqrt{S^2\frac{ST}{d-1}}} = \Omega\parr*{\sqrt{(d-1)ST}} = \Omega\parr*{\sqrt{dST}}
\]
when $d>2S$. We can verify that the parameter norm is
\[
\|\theta_*\|_2^2 = \frac{1}{8} + \frac{d-1}{2048}\frac{1}{T} \le 1
\]
as $T\ge 4Sd \ge \frac{d}{1024}$.

\textbf{($1<d\le 2S$).} 
Now suppose $1<d\le 2S$. We instead consider the entire horizon $[T]$ with the following construction. For arm $a\in[d-1]$, let $x_{t,a} = \frac{1}{\sqrt{2}}(e(1) + e(1+a))$ so its expected reward is $\theta_*^\top x_{t,a} = \frac{1}{\sqrt{2}}\theta_{*, 1} + \frac{1}{\sqrt{2}}\theta_{*, 1+a}$. The reward $r_{t,a}\sim\mathrm{Bern}(\theta_*^\top x_{t,a})$ still follows a Bernoulli distribution. For $a\ge d$, let $r_{t,a}\equiv 0$ by setting $x_{t,a}\equiv 0$. Note that this problem reduces to a combinatorial bandit with graph feedback, and the feedback graph $G$ includes edge $a\to a'$ only when both $a,a'>d-1$ (which share the same mean reward $0$). Following the construction in Theorem 1.3 of \cite{wen2025adversarial} again gives a regret lower bound $\Omega(\sqrt{(d-1)ST}) = \Omega(dST)$ subject to $T\ge \frac{(d-1)^3}{S}$.\footnote{Note that the lower bound $\Omega(S\sqrt{T})$ in \cite{wen2025adversarial} does not appear in our case. For this part of the lower bound, \cite{wen2025adversarial} directly adopts the full information lower bound $\Omega(S\sqrt{T})$ from \cite{koolen2010hedging}, whose construction crucially relies on the fact that no two arms share the same reward mean (at least not to the learner's knowledge). This is however not the case here, as the learner can cluster these one-hot contexts and identify which arms share the same mean. If $n$ arms are known to share the same mean or share the mean up to some known scalars, there are roughly $n$ times more observations for each of those arms, leading to an $n$ times faster concentration.}

\textbf{($d=1$).} Finally, consider the special case $d=1$. Since $K\ge 2S$, we can partition the arms into two subsets $V_+$ and $V_0$, with $|V_+|\ge S$ and $|V_0|\ge S$. For $a\in V_+$, let the context be $x_{t,a}\equiv \Delta$ for some $\Delta\in(0,\frac{1}{2}]$ to be specified later, and let $x_{t,a}\equiv 0$ for $a\in V_0$. For each arm $a$, the reward is drawn from a scaled Bernoulli such that
\[
r_{t,a} = \begin{cases}
    1 &\text{with probability $\frac{1+\theta_* x_{t,a}}{2}$}\\
    -1 &\text{otherwise}
\end{cases}.
\]
We will consider two environments indexed by $\theta_*\in\bra{-1, 1}$. Since the contexts are time-invariant, the expected reward for arm $a$ is $\mu_a = \theta_* x_{t,a} = \indic[a\in V_+]\theta_*\Delta$. Let $\mathbb{P}_+$ (resp. $\mathbb{P}_-$) denote the law of the policy $\pi$ under environment $\theta_*=1$ (resp. $\theta_*=-1$), and the expectations $\E_+$ and $\E_-$ respectively under each law.

The regret under environment $\theta_*=1$ is denoted by $\reg_+(\pi) \ge \Delta \E_+\parq*{N_0}$, and the regret under environment $\theta_*=-1$ is $\reg_-(\pi) \ge \Delta \E_-\parq*{N_+}$, where $N_0=\sum_{t=1}^T|V_t\cap V_0|$ and $N_+=\sum_{t=1}^T|V_t\cap V_+| = ST-N_0$ count the number of pulls in the set $V_0$ and $V_+$ respectively. Let $\theta_*$ be drawn uniformly from $\bra{-1,1}$. The minimax regret is lower bounded by the Bayes regret
\begin{align*}
\max_{\theta_*, \bra{x_{t,a}}}\reg(\pi) &\ge \frac{1}{2}\parr*{\reg_+(\pi) + \reg_-(\pi)}\\
&\ge \frac{\Delta}{2}\parr*{ST- \E_+[N_+] + \E_-[N_+]}.\numberthis\label{eq:lower_bound_bayes}
\end{align*}
Let $R^T=\bra{r_{t,a}: t\in[T], a\in[K]}$ denote the generated reward sequence. By Pinsker's inequality and the chain rule of KL divergence, we have
\begin{align*}
\E_+[N_+] - \E_-[N_+] &\le ST\sqrt{\frac{1}{2}\mathsf{KL}\parr*{\mathbb{P}_-(R^T)\|\mathbb{P}_+(R^T)}}\\
&=ST\sqrt{\frac{1}{2}\sum_{t=1}^T\sum_{R^{t-1}}\mathbb{P}_-(R^{t-1})\mathsf{KL}\parr*{\mathbb{P}_-(r^t|R^{t-1})\|\mathbb{P}_+(r^t|R^{t-1})}}\numberthis\label{eq:lower_bound_N_diff}
\end{align*}
where $r^t = \bra{r_{t,a}: a\in[K]}$ is the generated rewards at time $t$. Since the feedback is semi-bandit, the observed distribution only makes a difference when $V_t\cap V_+ \neq \varnothing$. Denote $V_t=\bra{v_{t,1},\dots, v_{t,S}}$. Thus we have
\begin{align*}
\sum_{t=1}^T\sum_{R^{t-1}}\mathbb{P}_-(R^{t-1})\mathsf{KL}\parr*{\mathbb{P}_-(r^t|R^{t-1})\|\mathbb{P}_+(r^t|R^{t-1})} &= \sum_{t=1}^T\sum_{s=1}^S\mathbb{E}_-[\indic[v_{t,s}\in V_+]]\mathsf{KL}\parr*{\mathrm{Bern}\parr*{\frac{1}{2}}\|\mathrm{Bern}\parr*{\frac{1+\Delta}{2}}}\\
&\stepa{\le} \sum_{t=1}^T\sum_{s=1}^S\mathbb{E}_-[\indic[v_{t,s}\in V_+]]\frac{4\Delta^2}{3}\\
&\le \frac{4}{3}\Delta^2 ST
\end{align*}
where (a) uses the inequality $\mathsf{KL}\parr*{\mathrm{Bern}(p)\|\mathrm{Bern}(q)} \le \frac{(p-q)^2}{q(1-q)}$ and $\Delta\in(0,\frac{1}{2}]$. Plugging back into \eqref{eq:lower_bound_N_diff} gives
\[
\E_+[N_+] - \E_-[N_+] \le ST\sqrt{\frac{4}{3}\Delta^2ST}.
\]
Then we have the lower bound in \eqref{eq:lower_bound_bayes} to be
\begin{align*}
\max_{\theta_*, \bra{x_{t,a}}}\reg(\pi) &\ge \frac{\Delta}{2}\parr*{ST - ST\sqrt{\frac{4}{3}\Delta^2ST}} = \frac{\Delta ST}{2}\parr*{1-\sqrt{\frac{4}{3}\Delta^2ST}}.
\end{align*}
Take $\Delta = 1/\sqrt{12ST}\in(0,\frac{1}{2}]$, we have $\max_{\theta_*, \bra{x_{t,a}}}\reg(\pi) \ge \frac{\sqrt{ST}}{3}$. This concludes the proof.
\end{proof}

\section{Constrained Decision Subsets in Section \ref{sec:constrained_subset}}\label{app:constrained_subset}

As promised in \cref{sec:constrained_subset}, we consider a potentially constrained decision subset $\Acal_0\subseteq \binom{[K]}{S}$ and another elimination-based algorithm.

\begin{algorithm}[h!]\caption{Combinatorial Arm Elimination \citep{wen2025adversarial}}
\label{alg:comb_arm_elim}
\textbf{Input:} time horizon $T$, decision subset $\Acal_0\subseteq\binom{[K]}{S}$, arm set $[K]$, combinatorial budget $S$, feedback graph $G$, and failure probability $\delta\in(0,1)$.

\textbf{Initialize:} Active set $\Aact\gets \Acal_0$, minimum count $N\gets 0$, time $t\gets 1$.

Let $(\br_{t,a}, n_{t,a})$ be the empirical reward and the observation count of arm $a\in[K]$ at the end of time $t$.

For each combinatorial decision $V\in\Aact$, let $\br_{t,V} = \sum_{a\in V}\br_{t,a}$ be the empirical reward and $n_{t,V} = \min_{a\in v}n_{t,a}$ be the its observation count.

\While{$t\le T$}{
Let $\Acal_N\gets \bra*{V\in \Aact: n_{t,V} = N}$ be the decisions that have been observed least.

Let $U_t=\bra*{a\in[K]: \exists V\in\Acal_N \textnormal{ with } a\in V} = \bigcup_{V\in \Acal_N}V$.

Find a minimal set of decisions $\bra{V_1,\dots, V_n}\subseteq \Acal_0$ such that $U_t\subseteq \cup_{m=1}^n\Nout(V_m)$.

\For{$m=1$ \KwTo $n$}{
Select decision $V_m$.

Collect feedback $\bra*{r_{t,a}: a\in\Nout(V_m)}$ and update $(\br_{t,a}, n_{t,a})$ accordingly.

Update time $t\gets t+1$.
}

Note by the choice of $\bra{V_1,\dots, V_n}$, now $\min_{V\in\Aact} n_{t,V} > N$.

Update the minimum count $N\gets \min_{V\in\Aact}n^t_V$.

Let $\br_{t,\max}\gets \max_{V\in\Aact}\br_{t,V}$ be the maximum empirical reward in the active set.

Update the active set as follows:
\[
\Aact \gets \bra*{V\in\Aact : \br_{t,V} \geq \br_{t,\max} - 2S\sqrt{\frac{\log(2KT/\delta)}{N}}}.
\]

}
\end{algorithm}

Before presenting the regret guarantee, we remark that finding a minimal set of decisions $\bra{V_1,\dots, V_n}$ that covers the set $U_t$ in \cref{alg:comb_arm_elim} can be computationally hard in general. Therefore, the final algorithm can be inefficient.

\begin{theorem}[Restatement of \Cref{thm:general_subset_upper}]
Fix any $\delta\in(0,1)$. With probability at least $1-\delta$,
\[
\reg(\mathsf{Alg}~\ref{alg:comb_arm_elim}) = O\parr*{\log(2KT/\delta)\frac{S^2\kappa}{\Delta_*}}
\]
where $\kappa$ is defined as in \eqref{eq:kappa} and $\Delta_*$ is the reward gap between the optimal and the second optimal decisions in $\Acal_0$.

In particular, there is a subset $\Acal_0$ under which
\[
\min_\pi\max_\nu\reg_\nu(\pi) = \Omega\parr*{\log(T)\frac{S^2\kappa}{\Delta_*}}
\]
where the maximum is taken over all bandit environments $\nu$.
\end{theorem}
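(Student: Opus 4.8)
The plan is to handle the upper and lower bounds separately. For the upper bound I run the elimination argument behind \cref{alg:comb_arm_elim_full} at the level of decisions rather than arms, organized around the epoch structure of \cref{alg:comb_arm_elim}; for the matching lower bound it suffices to exhibit a single $\Acal_0$, for which (a slight adaptation of) the hard instance of \cite{kveton2015tight} will do.

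\emph{Upper bound.} Condition on the event of \cref{lem:sto_reward_concentration}; summing the per-arm deviations and using that $w(\cdot)$ is nonincreasing, every decision $V\in\Acal_0$ satisfies $|\br_{t,V}-\mu_V|\le S\,w(n_{t,V})$ at all times $t$. The elimination analysis then parallels \cref{lem:ith_arm_confidence_bound}--\ref{lem:bounded_active_gaps}: (i) the optimal decision $V^\star$ is never removed, since $\br_{t,V^\star}\ge\mu_{V^\star}-Sw(N)$ while $\br_{t,\max}$ is the empirical value of some active decision of mean at most $\mu_{V^\star}$, hence at most $\mu_{V^\star}+Sw(N)$, so $\br_{t,V^\star}\ge\br_{t,\max}-2Sw(N)$; and (ii) any decision still active when the minimum count equals $N$ has suboptimality gap at most $4Sw(N)$. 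In particular, once the minimum count exceeds $n_{\max}:=\lceil 16S^2\log(2KT/\delta)/\Delta_*^2\rceil$ every decision of gap $\ge\Delta_*$ has been eliminated, $\Aact$ has collapsed to the optimum, and the loop simply replays $V^\star$ at zero further regret. For the regret itself, each pass of the while loop plays a minimum-cardinality cover of $U_t=\bigcup_{V\in\Acal_N}V\subseteq[K]$, which has size $n\le\kappa$ by the definition \eqref{eq:kappa}, after which the minimum count strictly increases by at least one; and this cover can be realized inside the current active set (each $V\in\Acal_N$ reveals its own arms through its self-loop), so every decision played in the epoch with minimum count $N$ has gap at most $4Sw(N)$ by (ii), and the epoch costs at most $4S\kappa\,w(N)$. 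Summing over the distinct minimum counts $N=1,\dots,n_{\max}$ --- plus at most $S\kappa$ from the initial $N=0$ epoch --- and using $\sum_{N=1}^{n_{\max}}w(N)\le 2\sqrt{\log(2KT/\delta)\,n_{\max}}=O\big(S\log(2KT/\delta)/\Delta_*\big)$ yields $\reg=O\big(\log(2KT/\delta)\,S^2\kappa/\Delta_*\big)$. I expect the crux --- and the main place care is needed --- to be exactly the claim that ``this cover can be realized inside the current active set'' while still having cardinality $\le\kappa$: the two requirements (few cover decisions, hence $\kappa$; and low-regret cover decisions, hence $O(Sw(N))$ per round) must hold simultaneously, and without the latter the naive accounting gives only the weaker $O(S^3\kappa\log(2KT/\delta)/\Delta_*^2)$.

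\emph{Lower bound.} It suffices to produce one feasible set $\Acal_0$. Take the construction of \cite{kveton2015tight}: a ground set of $K$ arms and a feasible family whose combinatorial structure forces $\Omega(KS\log(T)/\Delta_*)$ regret for any policy and whose covering number \eqref{eq:kappa} is $\kappa=\Theta(K/S)$, so that $KS\log(T)/\Delta_*=\Theta(S^2\kappa\log(T)/\Delta_*)$. Re-deriving the bound in the present notation uses the same change-of-measure skeleton as \cref{thm:gap_dependent_reg_lower_bound}: a family of environments indexed by which feasible decision is promoted to optimal, the inequality $\mathsf{KL}(\mathrm{Bern}(p)\|\mathrm{Bern}(q))\le(p-q)^2/(q(1-q))$ to bound the information gained per exploratory round, and Bretagnolle--Huber (or the consistency hypothesis $\reg\le CT^p$) to turn that into a lower bound on the number of pulls of each suboptimal structure; the extra factor $S$ over the information-theoretic $\log(T)/\Delta_*$ per sub-instance arises because the feasible structure forces each exploratory decision to be $\Omega(\Delta_*)$-suboptimal in $\Theta(S)$ of its positions, and the factor $\kappa$ because its covering structure creates $\Theta(\kappa)$ feedback-isolated sub-instances that must be explored independently.
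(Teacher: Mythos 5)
Your proposal follows essentially the same route as the paper's proof: decision-level confidence widths $S\,w(\cdot)$, the two invariants (the optimal decision is never eliminated; every active decision has gap at most $4Sw(N)$), layers of at most $\kappa$ covering decisions per value of the minimum count, at most $16S^2\log(2KT/\delta)/\Delta_*^2$ nontrivial layers summed via $\sum_n w(n)\le 2\sqrt{\log(2KT/\delta)\,n_{\max}}$, and the same appeal to the partition instance of Kveton et al.\ (for which $\kappa=\Theta(K/S)$) for the matching lower bound. The one point you single out as the crux --- that the $\le\kappa$ covering decisions must \emph{simultaneously} be low-regret, i.e.\ still active, which your self-loop remark does not by itself establish --- is treated no more carefully in the paper, whose proof asserts $|L_n|\le\kappa$ and applies the $4Sw(n)$ gap bound to every selected decision without reconciling the two requirements.
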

\begin{proof}
WLOG, suppose the confidence bound in \cref{lem:sto_reward_concentration} holds. Let $N^t$ denote the minimum count $N$ at the end of time $t$, and $w(n)\coloneqq \sqrt{\log(2KT/\delta)/n}$ denote the confidence width. For a decision $V\in\Acal_0$, define its expected reward as $\mu_V = \sum_{a\in V}\mu_a$. Then we have the following two observations:
\begin{itemize}
    \item[(1)] The optimal decision $V_*\in\Aact$ always remains uneliminated. This follows from
    \[
    \br_{t,V_*} \ge \mu_{V_*}  - Sw(N^t) \ge \mu_V - Sw(N^t) \ge \br_{t,V} - 2Sw(N^t)
    \]
    for any other decision $V\in\Aact$.

    \item[(2)] At any time $t$, any active decision $V\in\Aact$ satisfies $\mu_{V_*} - \mu_V\le 4Sw(N^t)$. This follows from
    \[
    \mu_{V_*} \le \br_{t,V_*} + Sw(N^t) \le \br_{t,V} + 3Sw(N^t) \le \mu_V + 4Sw(N^t)
    \]
    for any $V\in\Aact$ by the elimination criterion.
\end{itemize}

Define a layer $L_n\coloneqq \bra{V\in \Acal_0: \text{ V is selected when the minimum count is $N^t=n$}}\subseteq \Acal_0 \subseteq \binom{[K]}{S}$. Note that when $4Sw(N^t)<\Delta_*$, the active set contains only the optimal decision, so
\[
\max\bra{n\ge 1: |L_n|>1} \le \frac{16S^2\log(2KT/\delta)}{\Delta_*^2} \eqqcolon M.
\]
We have $|L_n|\le \kappa$ by the choice of $V_t$ in \cref{alg:comb_arm_elim}. Then the cumulative regret is bounded as
\begin{align*}
\reg(\mathsf{Alg}~\ref{alg:comb_arm_elim}) &\le S|L_0| + \sum_{n= 1}^M\sum_{V\in L_n}\parr*{\mu_{V_*}-\mu_V}\\
&\le S\kappa + 4S\sum_{n=1}^{M}|L_n|w(n)\\
&\le S\kappa + 8S\kappa\sqrt{\log(2KT/\delta)}\sqrt{M}\\
&= S\kappa + 32\log(2KT/\delta)\frac{S^2\kappa}{\Delta_*}.
\end{align*}
For the second claim, \cite{kveton2015tight} shows the following in their Proposition 1. Under a subset $\Acal_0$ that partitions the $K$ arms, i.e. $\forall V_1,V_2\in\Acal_0, V_1\cap V_2=\varnothing$ and $\sum_{V\in\Acal_0}|V|=\Omega(K)$, the minimax regret is $\Omega(\log(T)KS/\Delta_*)$. Under this subset, we have $\kappa=\Omega(K/S)$ and hence $S^2\kappa=\Omega(SK)$ recovers the minimax lower bound.
\end{proof}

\section{Auxiliary Lemmata}
We first define the \textit{dominating number} as a relevant graph-theoretic quantity. For any directed graph $G=(V,E)$, the dominating number is defined by the cardinality of the smallest subset whose out-neighbor covers the entire graph:
\begin{equation}\label{eq:def_dom_number}
\delta(G) \coloneqq \min\bra{|D|: D\subseteq V\text{ and }V\subseteq \Nout(D)}.
\end{equation}

For any directed graph $G=(V,E)$, consider the following greedy approximation algorithm that finds a dominating subset: one starts with an empty set $D\gets \varnothing$, recursively finds an ``unobserved" node $v\notin\Nout(D)$ that has the largest out-degree in the remaining subgraph on $V\backslash\Nout(D)$, and add it to the set $D\gets D\cup\bra{v}$. The following result characterizes the performance of this well-known approximation.

\begin{lemma}[\cite{chvatal1979greedy}]\label{lem:greedy_dominating_subset}
For any directed graph $G=(V,E)$, the above greedy algorithm gives a dominating subset $D\subseteq V$ that satisfies
\[
|D| \le (1+\log|V|)\delta(G).
\]
\end{lemma}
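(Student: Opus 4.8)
The plan is to recognize this as the classical $H_n$‑approximation guarantee for the greedy set‑cover heuristic \cite{chvatal1979greedy}, specialized to the set system of out‑neighborhoods, and to reprove it by the standard fractional charging argument. First I would reformulate the problem: take the ground set to be $V$ and, for each $v\in V$, the candidate set $\Nout(v)\subseteq V$; since $G$ contains all self‑loops these sets cover $V$, a subset $D$ dominates $G$ exactly when $\{\Nout(v):v\in D\}$ covers $V$, and $\delta(G)$ is the minimum number of such sets needed. The greedy rule described --- adjoin a still‑uncovered vertex $v\notin\Nout(D)$ whose out‑degree in $G|_{V\setminus\Nout(D)}$ is largest --- is precisely the greedy set‑cover rule, since that out‑degree equals $|\Nout(v)\cap(V\setminus\Nout(D))|$, the number of newly covered vertices. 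So it suffices to bound the size of the greedy cover $D=\{v_1,\dots,v_p\}$, indexed in the order chosen.

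Then comes the charging step. Let $c_i\ge 1$ be the number of vertices first covered when $v_i$ is added, and assign each such vertex the price $1/c_i$, so that $\sum_{w\in V}\mathrm{price}(w)=\sum_{i=1}^p c_i\cdot\tfrac{1}{c_i}=p=|D|$. Fix a minimum dominating set $D^\star=\{u_1,\dots,u_k\}$ with $k=\delta(G)$. Since $V=\bigcup_{j=1}^k\Nout(u_j)$, each vertex lies in at least one $\Nout(u_j)$, hence $|D|\le\sum_{j=1}^k\sum_{w\in\Nout(u_j)}\mathrm{price}(w)$, and it is enough to show $\sum_{w\in\Nout(u_j)}\mathrm{price}(w)\le 1+\log|\Nout(u_j)|\le 1+\log|V|$ for each $j$. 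Writing $T=\Nout(u_j)$ and listing its elements $w_1,\dots,w_{|T|}$ in the order the greedy covers them: when $w_\ell$ is first covered, say at step $v_i$, at least the $|T|-\ell+1$ elements $w_\ell,\dots,w_{|T|}$ of $T$ are still uncovered just before that step, so, writing $U$ for the uncovered set just before step $i$ and using that $u_j$ is an admissible choice there, the greedy‑choice property gives $c_i=|\Nout(v_i)\cap U|\ge|\Nout(u_j)\cap U|\ge|T|-\ell+1$; hence $\mathrm{price}(w_\ell)\le 1/(|T|-\ell+1)$. Summing over $\ell$ telescopes to the harmonic sum $\sum_{m=1}^{|T|}1/m\le 1+\log|T|$. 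Combining the two bounds gives $|D|\le k(1+\log|V|)=(1+\log|V|)\,\delta(G)$, as claimed.

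The one step that needs care is the greedy‑choice inequality $c_i\ge|\Nout(u_j)\cap U|$: it presumes that at the step covering $w_\ell$ the greedy could have picked the vertex $u_j$. This is transparent once one reads the rule in its standard set‑cover form --- at each step adjoin the vertex $v\in V$ maximizing $|\Nout(v)\cap U|$ --- which is also what ``the vertex of largest out‑degree in $G|_U$'' computes whenever that maximizer is itself still uncovered, the self‑loops ensuring the greedy always makes progress. Under this reading the displayed bound is exactly Chvátal's $H_n$‑guarantee. I expect this greedy‑choice comparison to be the only nontrivial input; the pricing bookkeeping above is routine arithmetic.
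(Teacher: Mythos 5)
The paper offers no proof of this lemma at all---it is imported wholesale from Chv\'atal---so the only question is whether your argument correctly proves the statement as written, namely for ``the above greedy algorithm.'' Your charging argument (prices $1/c_i$ summing to $|D|$, comparison against each $\Nout(u_j)$, telescoping to a harmonic sum) is the standard, correct proof of Chv\'atal's guarantee for the \emph{set-cover} greedy, i.e., the rule that at each step adjoins \emph{any} $v\in V$ maximizing the number of newly covered vertices $|\Nout(v)\cap U|$, whether or not $v$ is itself already covered.

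The gap is exactly at the step you flagged and then waved away. The paper's greedy is \emph{restricted}: it must pick an uncovered vertex $v\in U$, maximizing out-degree in the induced subgraph $G|_U$. Your resolution---that this coincides with the set-cover rule ``whenever that maximizer is itself still uncovered''---does not hold in general: the global maximizer of $|\Nout(v)\cap U|$ may already be covered, in which case the restricted greedy cannot select it and your key inequality $c_i\ge|\Nout(u_j)\cap U|$ fails, since $u_j$ need not lie in $U$. In fact, for the restricted greedy the bound against $\delta(G)$ is false. Take $V=\{u,w\}\cup A\cup B$ with $|A|=n$, $|B|=n+2$, all self-loops, $\Nout(u)=\{u,w\}\cup A$, $\Nout(w)=\{u,w\}\cup B$, and only self-loops on $A\cup B$. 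Then $\delta(G)=2$, but the restricted greedy first picks $w$ (largest out-degree $n+4$), which covers $u$; the uncovered set is then the independent set $A$, which must be covered one vertex per step, so $|D|=n+1\gg 2(1+\log|V|)$. The statement that is both true for the restricted greedy and actually used downstream (cf.\ \eqref{eq:layer_card_bound}) replaces $\delta(G)$ by $\max_{A\subseteq V}\delta(G|_A)$: when the uncovered set is $U$, a minimum dominating set of $G|_U$ consists of vertices of $U$, all admissible to the restricted greedy, so your greedy-choice inequality goes through with $\delta(G|_U)$ in place of $\delta(G)$ and gives a $(1-1/\max_A\delta(G|_A))$ shrinkage of $|U|$ per step. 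So either read the lemma as Chv\'atal's theorem for the unrestricted greedy---then your proof is complete, but it is not the algorithm the paper describes---or prove the $\max_A\delta(G|_A)$ version; as written, your final paragraph does not bridge the two.
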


\begin{lemma}[Lemma 8 in \cite{alon2015online}]\label{lem:alon15}
For any directed graph $G=(V,E)$, it holds that $\delta(G) \le 50\log|V|\alpha(G)$.
\end{lemma}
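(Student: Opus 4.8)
The plan is to bound the domination number $\delta(G)$ directly by analyzing a greedy dominating-set procedure, using the independence number only through one combinatorial fact: in no induced subgraph can all out-neighborhoods be simultaneously tiny. The key step is the claim that \emph{for every non-empty $U\subseteq V$ there is a vertex $u\in U$ with $|\Nout(u)\cap U|\ge |U|/(5\alpha(G))$}. When $|U|<5\alpha(G)$ this is immediate, since then $|U|/(5\alpha(G))<1$ while $u\in\Nout(u)\cap U$ for every $u\in U$ by the self-loop assumption in force throughout the paper.

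For the regime $|U|\ge 5\alpha(G)$ I would argue by contradiction: suppose $|\Nout(u)\cap U|<|U|/(5\alpha(G))$ for all $u\in U$, and draw a random $W\subseteq U$ keeping each vertex independently with probability $p=5\alpha(G)/(2|U|)\le 1/2$. The expected number of ordered pairs $(u,v)$ with $u\neq v$, both in $W$, and $u\to v\in E$ is at most $p^2\sum_{u\in U}|\Nout(u)\cap U|<p^2|U|^2/(5\alpha(G))$; deleting one endpoint of each such pair from $W$ leaves an independent set $I$, so taking expectations,
\[
\E[|I|] \ge p|U| - \frac{p^2|U|^2}{5\alpha(G)} = \frac{5}{2}\alpha(G) - \frac{5}{4}\alpha(G) = \frac{5}{4}\alpha(G) > \alpha(G).
\]
Hence some realization of $I$ is an independent set of size exceeding $\alpha(G)$, a contradiction, which establishes the claim. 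Given the claim, run the greedy procedure from $U_0=V$: at step $i$ pick $u_{i+1}\in U_i$ with $|\Nout(u_{i+1})\cap U_i|\ge |U_i|/(5\alpha(G))$ and set $U_{i+1}=U_i\setminus\Nout(u_{i+1})$. Each step shrinks the uncovered set by a factor $1-1/(5\alpha(G))$, so $|U_i|\le |V|e^{-i/(5\alpha(G))}$, which drops below $1$ — hence to $0$ — once $i$ exceeds $5\alpha(G)\ln|V|$. The chosen vertices then dominate $G$, giving $\delta(G)\le 5\alpha(G)\ln|V|+O(1)\le 50\,\alpha(G)\log|V|$ for $|V|\ge 2$ (the case $|V|=1$ being trivial).

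The part requiring care — more bookkeeping than difficulty — is the probabilistic step: one must pick $p$ so that it is genuinely a probability, which is exactly what forces the split into small versus large $|U|$, and one must count \emph{ordered} edge-pairs while excluding self-loops, so that deleting one endpoint per pair truly removes every directed edge and the resulting $I$ is independent. An alternative route avoids randomness altogether: the same claim shows the fractional domination number is $O(\alpha(G))$, after which the LP-relative guarantee for greedy set cover (Chv\'atal) yields $\delta(G)=O(\alpha(G)\log|V|)$; I would keep the self-contained probabilistic version, noting that the explicit constant $50$ leaves ample slack for the factor $5/4$ lost in rounding, the change of logarithm base, and the additive $O(1)$.
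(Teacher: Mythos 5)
The paper does not prove this statement at all --- it is imported verbatim as Lemma 8 of \cite{alon2015online} --- so there is no in-paper argument to compare against; what you have written is a correct, self-contained reconstruction of the standard proof of that cited result. Your key claim (every non-empty $U$ contains a vertex dominating a $1/(5\alpha(G))$ fraction of $U$) is established correctly: the split at $|U|<5\alpha(G)$ is exactly what makes $p=5\alpha(G)/(2|U|)$ a legitimate probability, the count of ordered pairs with $u\neq v$ is indeed at most $p^2\sum_{u\in U}|\Nout(u)\cap U|$, deleting one endpoint per ordered pair kills edges in both directions so the surviving set is genuinely independent, and $\E[|I|]\ge \tfrac{5}{4}\alpha(G)$ forces some integer realization of size at least $\alpha(G)+1$, the desired contradiction. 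The greedy iteration $|U_i|\le|V|e^{-i/(5\alpha(G))}$ then gives $\delta(G)\le 5\alpha(G)\ln|V|+1$, which sits comfortably inside the constant $50$ for $|V|\ge 2$ in any base of logarithm. Two small remarks: (i) the case $|V|=1$ is not ``trivial'' in the sense of satisfying the stated bound --- the right-hand side is $0$ while $\delta(G)=1$ --- but this degeneracy is inherited from the lemma as quoted and is immaterial to its use in the paper; (ii) your parenthetical alternative via fractional domination plus Chv\'atal's set-cover guarantee is also valid and matches the greedy procedure the paper actually runs in \cref{alg:comb_arm_elim_full} (pick the largest out-degree vertex among the unobserved), since your claim is precisely a lower bound on that maximum out-degree.
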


\end{document}